\newtheorem{theorem}{Theorem}[section]
\newtheorem{lemma}{Lemma}[section]
\newtheorem{corollary}{Corollary}[section]
\newcommand{\argmax}{\mathop{\rm arg~max}\limits}
\newcommand{\1}{\mbox{1}\hspace{-0.25em}\mbox{l}}
\title{Mean-Variance Analysis in Bayesian Optimization under Uncertainty} 
\date{}
\author{
Shogo Iwazaki\thanks{Department of Computer Science, Nagoya Institute of Technology}
\and
Yu Inatsu\thanks{RIKEN Center for Advanced Intelligence Project}
\and
Ichiro Takeuchi
\thanks{Department of Computer Science/Research Institute for Information Science, Nagoya Institute of Technology, 
mail:takeuchi.ichiro@nitech.ac.jp}
\footnotemark[2]
}
\begin{document}
\maketitle
\begin{abstract}
    We consider active learning (AL) in an uncertain environment 
    in which trade-off between multiple risk measures need to be 
    considered. As an AL problem in such an uncertain environment,
     we study Mean-Variance Analysis in Bayesian Optimization (MVA-BO)
      setting. Mean-variance analysis was developed in the field of 
      financial engineering and has been used to make decisions that 
      take into account the trade-off between the average and variance 
      of investment uncertainty. In this paper, we specifically focus 
      on BO setting with an uncertain component and consider multi-task, 
      multi-objective, and constrained optimization scenarios for the 
      mean-variance trade-off of the uncertain component. When the target 
      blackbox function is modeled by Gaussian Process (GP), we derive the 
      bounds of the two risk measures and propose AL algorithm for each of 
      the above three problems based on the risk measure bounds. 
      We show the effectiveness of the proposed AL algorithms through 
      theoretical analysis and numerical experiments.    
\end{abstract}

\section{Introduction}

Decision making in an uncertain environment has been studied in various domains. 
For example, in financial engineering, the mean-variance analysis~\cite{hm1952portfolio,markowitz2000mean,keeley1990reexamination} has been introduced as a framework for making investment decisions, taking into account the trade-off between the return (mean) and the risk (variance) of the investment.
In this paper we study active learning (AL) in an uncertain environment.
In many practical AL problems, there are two types of parameters called \emph{design parameters} and \emph{environmental parameters}.
For example, in a product design, while the design parameters are fully controllable, the environmental parameters vary depending on the environment in which the product is used. 
In this paper, we examine AL problems under such an uncertain environment, where the goal is to efficiently find the optimal design parameters by properly taking into account the uncertainty of the environmental parameters.

Concretely, let $f(\bm x, \bm w)$ be a blackbox function indicating the performance of a product, where $\bm x \in \cX$ is the set of controllable design parameters and $\bm w \in \Omega$ is the set of uncontrollable environmental parameters whose uncertainty is characterized by a probability distribution $p(\bm w)$.
We particularly focus on the AL problem where the mean and the variance of the environmental parameters,
\begin{subequations} 
\begin{align}
 \label{eq:meanfunc}
 \EE_{\bm w}[f(\bm x, \bm w)]
 &
 =
 \int_{\Omega} f(\bm x, \bm w) p(\bm w) \text{d} \bm w,
 \\
 \label{eq:varifunc}
 \VV_{\bm w}[f(\bm x, \bm w)]
 &
 =
 \int_{\Omega} \left(f(\bm x, \bm w) - \EE_{\bm w}[f(\bm x, \bm w)]\right)^2 p(\bm w) \text{d} \bm w,
\end{align}
\end{subequations}
respectively, are taken into account.
Specifically, we work on these two uncertainty measures in three different scenarios: multi-task learning scenario, multi-objective optimization scenario, and constrained optimization scenario.
In the first scenario, we study AL for optimizing a weighted sum of these two measures.
In the second scenario, we discuss how to obtain the Pareto frontier of these two measures in an AL setting.
In the third scenario, we consider optimizing one of the two measures under some constraint on the other measure. 
We refer to these problems and the proposed framework for solving them as \emph{Mean-Variance Analysis in Bayesian Optimization (MVA-BO)}.
Figure~\ref{fig:mv_image} shows an illustration of a multi-task learning scenario.

In this study, we employ a Gaussian process (GP) to model the uncertainty of the blackbox function $f(\bm x, \bm w)$.
In a conventional GP-based AL problem (without uncontrollable environmental parameters $\bm w$), the acquisition function (AF) is designed based on how the uncertainty of the blackbox function changes when an input point is selected and the blackbox function is evaluated at the input point.
On the other hand, in MVA-BO, we need to know how the uncertainties of the mean function \eq{eq:meanfunc} and the variance function \eq{eq:varifunc} change by evaluating the blackbox function at the selected input point.
Note that we face the difficulty of not being able to directly evaluate the target functions \eq{eq:meanfunc} and \eq{eq:varifunc}.
It has been shown in a previous study~\cite{o1991bayes} that, when $f(\bm x, \bm w)$ follows a GP, the mean function \eq{eq:meanfunc} also follows a GP.
Unfortunately, however, the variance function \eq{eq:varifunc} does not follow a GP, indicating that we need to develop a new method to quantify how the uncertainty of the variance function changes by evaluating the blackbox function at the selected input point. 
In this study, we extend the GP-UCB algorithm~\cite{DBLP:conf/icml/SrinivasKKS10} to realize MVA-BO in the above mentioned three scenarios by overcoming these technical difficulties.
We demonstrate the effectiveness of the proposed MVA-BO framework through theoretical analyses and numerical experiments.

\paragraph{Related Work}
Various problem setups and methods have been studied for AL and Bayesian optimization (BO) problems when there are multiple target functions. 
One of such problem setup is multi-task BO~\cite{swersky2013multi}.
In this problem setup, the AF is designed to select input points that commonly contribute to optimizing multiple target functions.
Another popular problem setup is multi-objective BO~\cite{emmerich2005single,JMLR:v17:15-047,icml2020_6243}.
The goal of a multi-objective optimization is to obtain so-called \emph{Pareto-optimal} solutions.
The AF in this problem setup is designed to efficiently identify solutions on the Pareto frontier.
Another common problem setup is constrained BO~\cite{gardner2014bayesian,10.5555/3020751.3020778,hernandez2016general}. 
The goal of this problem setup is to find the optimal solution to a constrained optimization problem in a situation where both the objective function and constraint function are blackbox functions that are costly to evaluate.
The AF in this problem setup is designed to select input points that are useful not only for maximizing the objective function but also for identifying the feasible region.
In this paper, we study these three scenarios as concrete examples of MVA-BO.
Unlike conventional multi-task, multi-objective and constrained BOs, the main technical challenges of MVA-BO are that the two target functions \eq{eq:meanfunc} and \eq{eq:varifunc} cannot be directly evaluated and that the latter does not follow a GP. 

Various studies have been published on BO under various types of uncertainty.
The most relevant one to our study is on \emph{Bayesian quadrature optimization (BQO)} \cite{toscano2018bayesian}, the goal of which is to optimize the mean function \eq{eq:meanfunc}.
When the blackbox function follows a GP, the mean function (1a) also follows a GP, suggesting that one can efficiently solve BQO problems by properly modifying the AFs in conventional BO.
By replacing the integrand in \eq{eq:meanfunc} with different uncertainty measures, one can consider various types of AL problems under uncertainty~\cite{beland2017bayesian, iwazaki2020bayesian}.
Another line of research dealing with uncontrollable and uncertain factors in BO is known as \emph{robust BO}. 
The goal of robust BO is to make robust decisions that appropriately take into account the uncertainty of the BO process and the GP model.
For example, input uncertainty in BO has been studied, in which probabilistic noise is inevitably added to the input points when evaluating the target blackbox function.
Although research on BO in an uncertain environment has steadily progressed over the past few years, to our knowledge, there are no AL nor BO studies that take into account the trade-offs between multiple uncertainty measures such as mean-variance analysis.

Decision making under uncertainty is being examined in the field of robust optimization~\cite{ben2009robust,beyer2007robust,ben2002robust}, with especially applications to financial engineering in mind~\cite{schied2006risk,alexander2002economic,fabozzi2007robust}.
It has been pointed out that when making decisions under uncertainty, it is important to balance multiple uncertainty measures appropriately, as represented by the Nobel prize-winning mean-variance analysis in portfolio theory~\cite{hm1952portfolio,markowitz2000mean,keeley1990reexamination}.
Various risk measures, such as Value at Risk (VaR), have been proposed in financial engineering, and these multiple risk measures are used in combination, depending on the purpose of the decision making.
However, to our knowledge, there have been AL or BO studies that have appropriately taken into account multiple uncertainty measures.

\begin{figure}[t]
    \centering
    \includegraphics[width=0.5\linewidth]{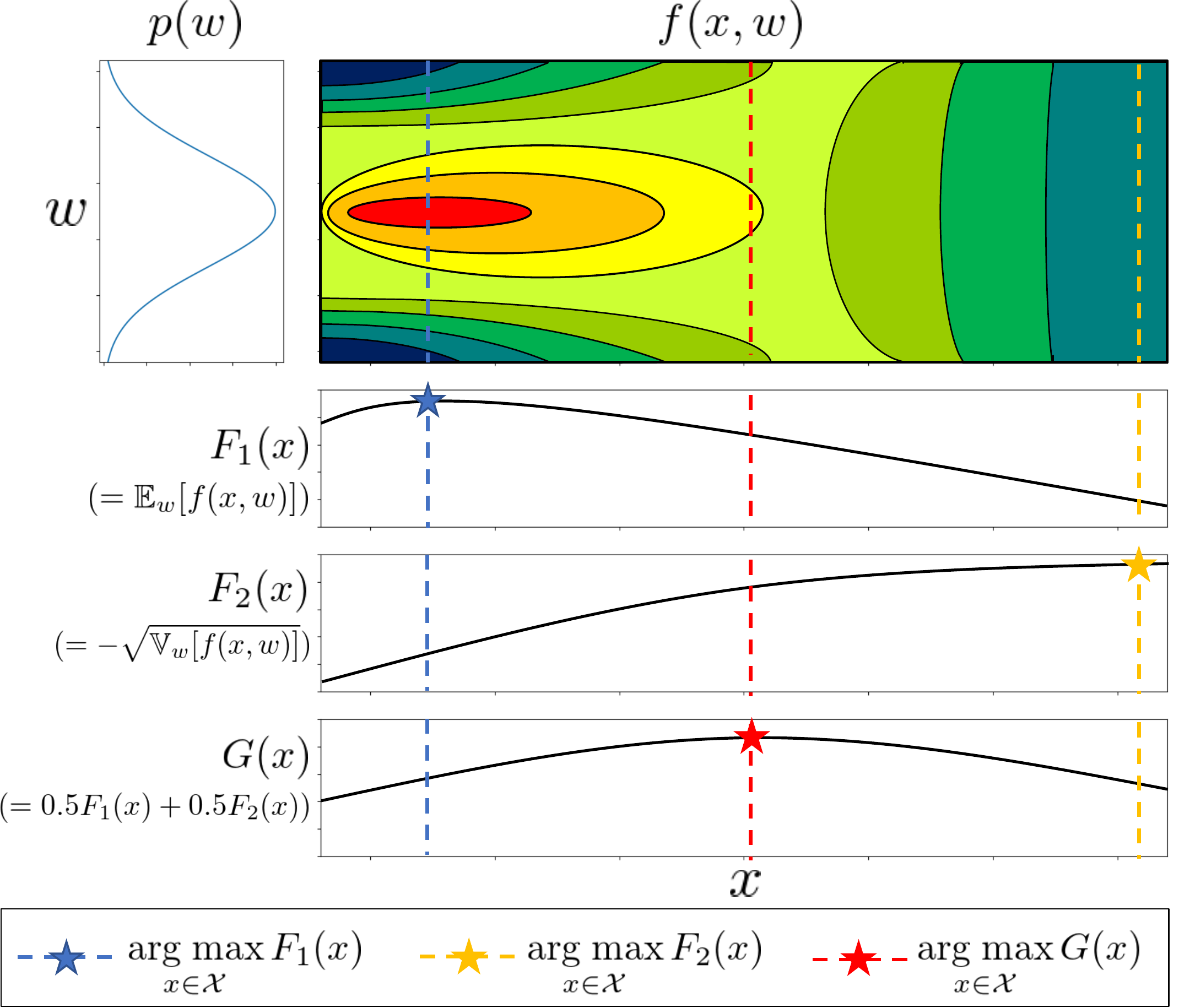}
    \caption{
        2D synthetic example under a multi-task scenario. The horizontal and vertical axes
        represent the design and environmental parameters
        $\bm{x}$ and $\bm{w}$, respectively. 
        Blue and yellow dotted lines indicate the points where 
        expected value $F_1(\bm{x})$ and negative standard deviation
        $F_2(\bm{x})$ of $f(\bm{x}, \bm{w})$ are maximum.
        Our goal is to identify the point on the red line that 
        simultaneously maximize both of $F_1$ and $F_2$ .
    }
    \label{fig:mv_image}
\end{figure}
\section{Preliminaries}
\subsection{Problem Setup}

Let $f: \mathcal{X} \times \Omega \rightarrow \mathbb{R}$ be a blackbox function
which is expensive to evaluate, where $\mathcal{X} \subset \mathbb{R}^{d_1}$
and $\Omega \subset \mathbb{R}^{d_2}$
are a finite set \footnote{We discuss the case where $\mathcal{X}$ is a continuous set in
appendix \ref{sec:suppD}.}
 and a compact convex set, respectively.
In our setting, a variable $\bm{w} \in \Omega$ is
probabilistically fluctuated by the given density function $p(\bm{w})$ \footnote{
    Note that a probability mass function can also be considered when $\Omega$
    is a finite set. In that case, the subsequent discussions still hold if 
    integral operations are replaced by summation operations.
}.
At every step $t$, a user chooses the next observation point $\bm{x}_t \in \mathcal{X}$, 
whereas $\bm{w}_t \in \Omega$ will be given as a realization of the random variable,
which follows the distribution $p(\bm{w})$.
Next, the user gets the
noisy observation $y_t = f(\bm{x}_t, \bm{w}_t) + \eta_t$,
where $\eta_t$ is independent Gaussian noise following $\mathcal{N}(0, \sigma^2)$.

Furthermore, as a regularity assumption, we assume that $f$ is an element of reproducing
kernel Hilbert space (RKHS) and has a bounded norm, which is also assumed in the standard
BO literature \cite{DBLP:conf/icml/SrinivasKKS10}.
Let $k$ be a positive definite kernel over $(\mathcal{X} \times \Omega) \times (\mathcal{X} \times \Omega)$
and $\mathcal{H}_k$ be an RKHS corresponding to $k$. In this paper, for some positive
constant $B$, we assume $f \in \mathcal{H}_k$ with $\|f\|_{\mathcal{H}_k} \leq B$,
where $\|\cdot\|_{\mathcal{H}_k}$ denotes the Hilbert norm defined on $\mathcal{H}_k$.

\paragraph{Models}
Our algorithm uses the GP method~\cite{gpml} to
navigate the optimization process.
First, we assume $\mathcal{GP}(0, k)$ as a prior of $f$, where $\mathcal{GP}(\mu, k)$
is a GP that is characterized by a mean function $\mu$ and a kernel function $k$.
Given the sequence of data $\{\left((\bm{x}_i, \bm{w}_i),~y_i\right)\}_{i=1}^t$, the posterior
distribution of $f(\bm{x}, \bm{w})$ is the Gaussian distribution that
has mean $\mu_t(\bm{x}, \bm{w})$ and variance $\sigma_t^2(\bm{x}, \bm{w})$ defined as follows:
\begin{align*}
    \mu_t(\bm{x}, \bm{w}) &= \bm{k}_t(\bm{x}, \bm{w})^{\top}
    \left( \bm{K}_t + \sigma^2 \bm{I}_t \right)^{-1} \bm{y}_t, \\
    \sigma_t^2(\bm{x}, \bm{w}) &= k\left( (\bm{x}, \bm{w}), (\bm{x}, \bm{w}) \right)
    - \bm{k}_t(\bm{x}, \bm{w})^{\top} \left( \bm{K}_t + \sigma^2 \bm{I}_t \right)^{-1} \bm{k}_t(\bm{x}, \bm{w}),
\end{align*}
where $\bm{k}_t(\bm{x}, \bm{w}) =
\left( k((\bm{x}, \bm{w}), (\bm{x}_1, \bm{w}_1)),
\ldots, k((\bm{x}, \bm{w}), (\bm{x}_t, \bm{w}_t))\right)^{\top}$,
$\bm{y}_t = (y_1, \ldots, y_t)$,
$\bm{I}_t$ is the identity matrix of size $t$, and $\bm{K}_t$ is the $t \times t$ kernel matrix whose
$(i, j)$th element is $k((\bm{x}_i, \bm{w}_i), (\bm{x}_j, \bm{w}_j))$.

We will make use of the following lemma,
 to construct the confidence bound of $f$ by using
the posterior mean $\mu_t$ and the variance $\sigma_t^2$.
\begin{lemma}[Theorem 3.11 in \cite{abbasi2013online}]
    \label{lem:f_cred}
    Fix $f \in \mathcal{H}_k$ with $\|f\|_{\mathcal{H}_k} \leq B$.
    Given $\delta \in (0, 1)$, let define 
    \begin{equation*}
        \beta_t = \left( \sqrt{\ln \det (\bm{I}_t + \sigma^{-2} \bm{K}_t) + 2 \ln
    (1/\delta)} + B \right)^2.
    \end{equation*}
    Then, the following holds with probability at least $1 - \delta$:    
    \begin{equation}
        \label{eq:f_cred}
        | f(\bm{x}, \bm{w}) - \mu_{t-1}(\bm{x}, \bm{w}) |
        \leq \beta_t^{1/2} \sigma_{t-1}(\bm{x}, \bm{w}),
        ~\forall \bm{x} \in \mathcal{X},~\forall \bm{w} \in \Omega,~\forall t \geq 1.
    \end{equation}
\end{lemma}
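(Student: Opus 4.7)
The plan is to reconstruct the self-normalized concentration argument of Abbasi-Yadkori (2013, Theorem 3.11), which is the RKHS analogue of the linear-bandit confidence bound of Abbasi-Yadkori, P\'al and Szepesv\'ari (2011). First I would identify $\mu_{t-1}(\bm{x},\bm{w})$ with the kernel ridge regression predictor at $(\bm{x},\bm{w})$ obtained by minimizing $\sum_{i=1}^{t-1}(y_i - g(\bm{x}_i,\bm{w}_i))^2 + \sigma^2\|g\|_{\mathcal{H}_k}^2$ over $g \in \mathcal{H}_k$; call the minimizer $\hat{f}_{t-1}$. By the reproducing property, $f(\bm{x},\bm{w}) - \mu_{t-1}(\bm{x},\bm{w}) = \langle f - \hat{f}_{t-1},\,k_{(\bm{x},\bm{w})}\rangle_{\mathcal{H}_k}$, so the whole task reduces to controlling $\|f - \hat{f}_{t-1}\|$ in an appropriately weighted norm.

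Next I would write $\hat{f}_{t-1} - f$ in closed form as $V_{t-1}^{-1}\big(\sum_{i\le t-1}\eta_i k_{(\bm{x}_i,\bm{w}_i)} - \sigma^2 f\big)$, where $V_{t-1} = \sigma^2 I + \sum_{i\le t-1} k_{(\bm{x}_i,\bm{w}_i)} \otimes k_{(\bm{x}_i,\bm{w}_i)}$ is a self-adjoint operator on $\mathcal{H}_k$. Applying Cauchy--Schwarz twice, once in $\mathcal{H}_k$ and once in the $V_{t-1}$ inner product, gives
\begin{equation*}
|f(\bm{x},\bm{w}) - \mu_{t-1}(\bm{x},\bm{w})| \le \bigl(\|S_{t-1}\|_{V_{t-1}^{-1}} + \sigma B\bigr)\,\|k_{(\bm{x},\bm{w})}\|_{V_{t-1}^{-1}},
\end{equation*}
where $S_{t-1} = \sum_{i\le t-1}\eta_i k_{(\bm{x}_i,\bm{w}_i)}$ and I use $\|f\|_{\mathcal{H}_k} \le B$ to bound the deterministic (bias) piece. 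A short computation shows $\sigma_{t-1}(\bm{x},\bm{w}) = \sigma\,\|k_{(\bm{x},\bm{w})}\|_{V_{t-1}^{-1}}$, so all that remains is a high-probability bound on $\|S_{t-1}\|_{V_{t-1}^{-1}}$.

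The heart of the proof, and the main obstacle, is the self-normalized tail inequality for the Hilbert-space-valued martingale $S_t$, namely
\begin{equation*}
\Pr\!\left[\exists t \ge 1:\ \|S_t\|_{V_t^{-1}}^2 > 2\sigma^2\ln\!\frac{\det(I_t + \sigma^{-2}\bm{K}_t)^{1/2}}{\delta}\right] \le \delta.
\end{equation*}
I would establish this via the method of mixtures: construct the exponential supermartingale $M_t^{h} = \exp\bigl(\sigma^{-2}\langle h, S_t\rangle - \tfrac{1}{2}\sigma^{-2}\|h\|^2_{V_t - \sigma^2 I}\bigr)$, integrate $M_t^h$ against a Gaussian mixing measure on $\mathcal{H}_k$ with covariance $\sigma^2 I$, and apply Ville's inequality to the resulting mixed supermartingale. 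The Gaussian integral evaluates to a ratio of determinants, and the Radon--Nikodym calculation produces exactly the $\ln\det(I_t + \sigma^{-2}\bm{K}_t)$ term; the delicate part is carrying out this computation rigorously in infinite dimensions, which one can handle either via finite-rank projections and a limiting argument or by working directly with trace-class operators (as done in Abbasi-Yadkori 2013, Ch.~3).

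Finally, combining the self-normalized bound with the earlier Cauchy--Schwarz step yields $|f - \mu_{t-1}| \le (\sqrt{\ln\det(I_t + \sigma^{-2}\bm{K}_t) + 2\ln(1/\delta)} + B)\,\sigma_{t-1}$ uniformly in $(\bm{x},\bm{w})$ and in $t$, which is precisely $\beta_t^{1/2}\sigma_{t-1}(\bm{x},\bm{w})$. The uniformity in $\bm{x}, \bm{w}$ is automatic because the bound is pointwise and uses only $V_{t-1}$, while the uniformity in $t$ comes for free from Ville's inequality applied to the constructed supermartingale.
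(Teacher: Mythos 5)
The paper does not prove this lemma at all; it is imported verbatim as Theorem 3.11 of the cited reference \cite{abbasi2013online}, so there is no in-paper argument to compare against. Your reconstruction is the correct and standard one from that source (kernel ridge decomposition, the identity $\sigma_{t-1}(\bm{x},\bm{w})=\sigma\|k_{(\bm{x},\bm{w})}\|_{V_{t-1}^{-1}}$, and the method-of-mixtures self-normalized bound), with the only cosmetic remark being that the martingale bound at step $t$ naturally produces $\ln\det(\bm{I}_{t-1}+\sigma^{-2}\bm{K}_{t-1})$, which is dominated by the $\ln\det(\bm{I}_{t}+\sigma^{-2}\bm{K}_{t})$ appearing in $\beta_t$ by monotonicity of the determinant.
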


Based on the above lemma, the confidence bound $Q_t(\bm{x}, \bm{w}) \coloneqq [l_t(\bm{x}, \bm{w}), u_t(\bm{x}, \bm{w})]$
of $f(\bm{x}, \bm{w})$ can be computed by
\begin{align*}
    l_t(\bm{x}, \bm{w}) &= \mu_{t-1}(\bm{x}, \bm{w})
    - \beta_t^{1/2} \sigma_{t-1}(\bm{x}, \bm{w}), \\
    u_t(\bm{x}, \bm{w}) &= \mu_{t-1}(\bm{x}, \bm{w})
    + \beta_t^{1/2} \sigma_{t-1}(\bm{x}, \bm{w}).
\end{align*}

\subsection{Objective Functions and Optimization Goal}\label{sec:opt_goal}
Here, we consider the expectation and variance of $f(\bm{x}, \bm{w})$
under the uncertainty of $p(\bm{w})$ as follows:
\begin{align}
    \mathbb{E}_{\bm{w}}\left[ f(\bm{x}, \bm{w}) \right]
    &= \int_{\Omega} f(\bm{x}, \bm{w}) p(\bm{w})
    \text{d}\bm{w}, \\
    \mathbb{V}_{\bm{w}}\left[ f(\bm{x}, \bm{w}) \right]
    &= \int_{\Omega} \left\{f(\bm{x}, \bm{w}) -
    \mathbb{E}_{\bm{w}}\left[ f(\bm{x}, \bm{w}) \right] \right\}^2
    p(\bm{w}) \text{d}\bm{w}.
\end{align}
Using these $\mathbb{E}_{\bm{w}}\left[ f(\bm{x}, \bm{w}) \right]$ and
$\mathbb{V}_{\bm{w}}\left[ f(\bm{x}, \bm{w}) \right]$,
we define the objective functions $F_1$ and $F_2$ as follows:
\begin{align}
    \label{eq:obj}
    F_1(\bm{x}) = \mathbb{E}_{\bm{w}}\left[ f(\bm{x}, \bm{w}) \right],
    ~F_2(\bm{x}) = -\sqrt{\mathbb{V}_{\bm{w}}\left[ f(\bm{x}, \bm{w}) \right]}.
\end{align}
Our goal is to maximize $F_1$ and $F_2$ simultaneously with
as few function evaluations as possible.
To this end, we handle these objective functions in 
multi-task and multi-objective optimization frameworks.
 \footnote{
In appendix \ref{sec:suppB}, as another formulation,
we also consider the constrained optimization problem whose
objective and constraint functions are $F_1$ and $F_2$ respectively.}

\paragraph{Multi-task Optimization Scenario}
First, we formulate the problem
as a single-objective optimization problem
whose objective function is defined as a weighted sum of $F_1$
and $F_2$. Given a user-specified weight $\alpha \in [0, 1]$,
let $G$ be a new objective function defined as follows:
\begin{equation*}
    G(\bm{x}) = \alpha F_1(\bm{x}) + (1 - \alpha) F_2(\bm{x}).
\end{equation*}
In this formulation, our goal is to find
$\bm{x}^\ast \coloneqq {\rm argmax}_{\bm{x} \in \mathcal{X}} G(\bm{x})$
efficiently.
To rigorously determine the theoretical properties,
we introduce the notion of an {\it $\epsilon$-accurate solution}.
Let $\hat{\bm{x}}_t$ be an estimated solution which
is defined by the algorithm at step $t$. Given a
fixed constant $\epsilon \geq 0$,
we say that $\hat{\bm{x}}_t$
is {\it $\epsilon$-accurate} if the following inequality holds:
\begin{equation*}
    G(\hat{\bm{x}}_t) \geq G(\bm{x}^\ast) - \epsilon.
\end{equation*}
In section \ref{sec:theory}, for an arbitrarily small $\epsilon$,
we show that our algorithm can find
the $\epsilon$-accurate solution with high probability after finite step $T$.

\paragraph{Multi-objective Optimization Scenario}
In the multi-task scenario, we assume that the user can specify the weight
$\alpha$ before the optimization; however this is sometimes unrealistic.
We also consider the more general formulation based on
the Pareto optimality criterion.
Hereafter, we use the vector representation of the objective functions like $\bm{F}(\bm{x}) = (F_1(\bm{x}), F_2(\bm{x}))$.
First, let $\preceq$ be a relational operator defined
over $\mathcal{X} \times \mathcal{X}$ or $\mathbb{R}^2 \times \mathbb{R}^2$.
Given $\bm{x}, \bm{x}^\prime \in \mathcal{X}$,
we write $\bm{x} \preceq \bm{x}^\prime$ or $\bm{F}(\bm{x}) \preceq \bm{F}(\bm{x}^\prime)$ provided that
$F_1(\bm{x}) \leq F_1(\bm{x}^\prime)$ and $F_2(\bm{x}) \leq F_2(\bm{x}^\prime)$ hold simultaneously.
We say that $\bm{x}^\prime$ dominates $\bm{x}$ if $\bm{x} \preceq \bm{x}^\prime$.
Furthermore, we write $\bm{x} \prec \bm{x}^\prime$ or $\bm{F}(\bm{x}) \prec \bm{F}(\bm{x}^\prime)$
provided that either $F_1(\bm{x}) < F_1(\bm{x}^\prime)$
or $F_2(\bm{x}) < F_2(\bm{x}^\prime)$ holds.

The goal of this scenario is to identify the following {\it Pareto set} $\Pi$
efficiently:
\begin{align*}
    \Pi = \{ \bm{x} \in \mathcal{X} \mid \forall \bm{x}^\prime \in E_{\bm{x}}, 
    \bm{F}(\bm{x}) \npreceq \bm{F}(\bm{x}^\prime)\}, 
    ~\text{where}~E_{\bm{x}} = \left\{\bm{x}^\prime \in \mathcal{X} 
    ~\middle|~ \bm{F}(\bm{x}) \neq \bm{F}(\bm{x}^\prime) \right\}.
\end{align*}
Moreover, {\it Pareto front} $Z$ is defined by
\begin{align*}
    Z = \partial\{ \bm{y} \in \mathbb{R}^2 \mid \exists \bm{x} \in \mathcal{X}, \bm{y} \preceq \bm{F}(\bm{x})\}.
\end{align*}

Next, we introduce the notion of an $\epsilon$-accurate Pareto set \cite{JMLR:v17:15-047},
which is an idea similar to the $\epsilon$-accurate solution in the
multi-task scenario.
Given a non-negative vector $\bm{\epsilon} = (\epsilon_1, \epsilon_2)$,
we define the relational operator $\preceq_{\bm{\epsilon}}$, which is the relaxed version
of $\preceq$. For $\bm{x}, \bm{x}^\prime \in \mathcal{X}$, we write
$\bm{x} \preceq_{\bm{\epsilon}} \bm{x}^\prime$ or $\bm{F}(\bm{x})
\preceq_{\bm{\epsilon}} \bm{F}(\bm{x}^\prime)$ if
$F_1(\bm{x}) \leq F_1(\bm{x}^\prime) + \epsilon_1$ and
$F_2(\bm{x}) \leq F_2(\bm{x}^\prime) + \epsilon_2$ hold simultaneously.
Then, the {\it $\bm{\epsilon}$-Pareto front} is defined as:
\begin{align*}
    Z_{\bm{\epsilon}} = \{ \bm{y} \in \mathbb{R}^2 \mid \exists \bm{y}^\prime \in Z,~
    \bm{y} \preceq \bm{y}^\prime ~\text{and}~ \exists \bm{y}^{\prime \prime} \in Z,~
    \bm{y}^{\prime \prime} \preceq_{\bm{\epsilon}} \bm{y}\}.
\end{align*}
We say that estimated Pareto set $\hat{\Pi}_t$ of the algorithm is an $\bm{\epsilon}$-accurate Pareto set
if the following two conditions are satisfied:
\begin{enumerate}
    \item $\bm{F}(\hat{\Pi}_t) \subset Z_{\bm{\epsilon}}$, where $\bm{F}(\hat{\Pi}_t) \coloneqq \left\{ \bm{F}(\bm{x})
    \mid \bm{x} \in \hat{\Pi}_t \right\}$.
    \item For any $\bm{x} \in \Pi$, there is at least one point $\bm{x}^\prime \in \hat{\Pi}_t$ such that
    $\bm{x} \preceq_{\bm{\epsilon}} \bm{x}^\prime$.
\end{enumerate}
Intuitively, condition $1$ guarantees that the estimated
solutions are worse than the true Pareto front by at most $\bm{\epsilon}$.
Condition 2 indicates that $\hat{\Pi}$ can cover
all points in the true Pareto set $\Pi$.

We emphasize that although many studies about
multi-task or multi-objective optimization based on a GP have been reported,
their methods cannot be directly applied to our setting because
the objective functions $F_1$ and $F_2$ are not observed directly.

\section{Proposed Method}
First, we explain the basic idea of our proposed algorithms.
To maximize $F_1$ and $F_2$ efficiently,
one simple way is to consider the
predicted distributions of $F_1$ and $F_2$,
and apply existing methods
(e.g. expected improvement, entropy search).
However, it is difficult to handle the predicted distribution of $F_2$
although $f$ is modeled by a GP.
In this paper, we first derive the intervals
in which $F_1$ and $F_2$ exist with high probability
from the confidence bound of $f$, and
construct the algorithm based on these derived intervals.
Hereafter, with a slight abuse of notation, we refer to
these derived intervals as the confidence bounds of $F_1$ and $F_2$.

\subsection{Confidence Bounds of Objective Functions}\label{sec:cred}
First, we consider the confidence bound
$Q_t^{(F_1)}(\bm{x}) = [l_t^{(F_1)}(\bm{x}), u_t^{(F_1)}(\bm{x})]$
of $F_1(\bm{x})$.
When (\ref{eq:f_cred}) holds, the following inequity holds for
any $\bm{x} \in \mathcal{X}$, $t \geq 1$:
\begin{equation*}
    \int_{\Omega} l_t(\bm{x}, \bm{w}) p(\bm{w}) \text{d}\bm{w} \leq
    \int_{\Omega} f(\bm{x}, \bm{w}) p(\bm{w}) \text{d}\bm{w} \leq
    \int_{\Omega} u_t(\bm{x}, \bm{w}) p(\bm{w}) \text{d}\bm{w}.
\end{equation*}
This implies that $F_1(\bm{x}) \in Q_t^{(F_1)}(\bm{x})$ for any $\bm{x} \in \mathcal{X}$, $t \geq 1$
with probability at least $1-\delta$ for $l_t^{(F_1)}$ and $u_t^{(F_1)}$
defined as
\begin{equation*}
    l_t^{(F_1)}(\bm{x}) = \int_{\Omega} l_t(\bm{x}, \bm{w}) p(\bm{x}) \text{d}\bm{w},~
    u_t^{(F_1)}(\bm{x}) = \int_{\Omega} u_t(\bm{x}, \bm{w}) p(\bm{x}) \text{d}\bm{w}.
\end{equation*}

We construct the confidence bound
$Q_t^{(F_2)}(\bm{x}) = [l_t^{(F_2)}(\bm{x}), u_t^{(F_2)}(\bm{x})]$
of $F_2(\bm{x})$ in a similar way.
First, we consider the quantity $f(\bm{x}, \bm{w}) - \mathbb{E}_{\bm{w}}[f(\bm{x}, \bm{w})]$,
which appears in the integrand of $\mathbb{V}_{\bm{w}}[f(\bm{x}, \bm{w})]$.
Under condition (\ref{eq:f_cred}), the following inequity holds:
\begin{equation}
    \label{eq:V_int}
    \tilde{l}_t(\bm{x}, \bm{w}) \leq
     f(\bm{x}, \bm{w}) - \mathbb{E}_{\bm{w}}[f(\bm{x}, \bm{w})] \leq
     \tilde{u}_t(\bm{x}, \bm{w}),
\end{equation}
where $\tilde{l}_t(\bm{x}, \bm{w}) = l_t(\bm{x}, \bm{w}) - \mathbb{E}_{\bm{w}}[u_t(\bm{x}, \bm{w})]$ and
$\tilde{u}_t(\bm{x}, \bm{w}) = u_t(\bm{x}, \bm{w}) - \mathbb{E}_{\bm{w}}[l_t(\bm{x}, \bm{w})]$.
Next, the integrand of $\mathbb{V}_{\bm{w}}[f(\bm{x}, \bm{w})]$
can be evaluated based on (\ref{eq:V_int}) as follows:
\begin{equation*}
    \tilde{l}_t^{(\text{sq})}(\bm{x}, \bm{w}) \leq
    \left\{f(\bm{x}, \bm{w}) - \mathbb{E}_{\bm{w}}[f(\bm{x}, \bm{w})]\right\}^2
    \leq \tilde{u}_t^{(\text{sq})}(\bm{x}, \bm{w}),
\end{equation*}
where
\begin{align*}
    \tilde{l}_t^{(\text{sq})}(\bm{x}, \bm{w}) &= \begin{cases}
        0 & \text{if}~\tilde{l}_t(\bm{x}, \bm{w}) \leq 0 \leq \tilde{u}_t(\bm{x}, \bm{w}), \\
        \min\left\{\tilde{l}_t^2(\bm{x}, \bm{w}), \tilde{u}_t^2(\bm{x}, \bm{w})\right\}
        & \text{otherwise}
    \end{cases}, \\
    \tilde{u}_t^{(\text{sq})}(\bm{x}, \bm{w}) &= \max\left\{\tilde{l}_t^2(\bm{x}, \bm{w}), \tilde{u}_t^2(\bm{x}, \bm{w})\right\}.
\end{align*}
Finally, from the monotonicity of square root,
the confidence bound $Q_t^{(F_2)}(\bm{x}) = [l_t^{(F_2)}(\bm{x}),u_t^{(F_2)}(\bm{x})]$
of $F_2(\bm{x})$ is computed using the following 
equations for $l_t^{(F_2)}$ and $u_t^{(F_2)}$:
\begin{equation*}
    l_t^{(F_2)}(\bm{x}) = -\sqrt{\int_{\Omega} \tilde{u}_t^{(\text{sq})}(\bm{x}, \bm{w}) p(\bm{x}) \text{d}\bm{w}},
    ~u_t^{(F_2)}(\bm{x}) = -\sqrt{\int_{\Omega} \tilde{l}_t^{(\text{sq})}(\bm{x}, \bm{w}) p(\bm{x}) \text{d}\bm{w}}.
\end{equation*}
\subsection{Algorithms}\label{sec:alg}
\paragraph{Multi-task Scenario}
In the multi-task scenario, our algorithm chooses
the next observation point $\bm{x}_t$ based on the
upper confidence bound (UCB) of the function $G$.
From $Q_t^{(F_1)}(\bm{x})$ and $Q_t^{(F_2)}(\bm{x})$,
the confidence bound $Q_t^{(G)}(\bm{x}) \coloneqq [l_t^{(G)}(\bm{x}), u_t^{(G)}(\bm{x})]$ of $G(\bm{x})$
can be constructed by defining
\begin{equation*}
    l_t^{(G)}(\bm{x}) = \alpha l_t^{(F_1)}(\bm{x}) + (1 - \alpha) l_t^{(F_2)}(\bm{x}),~
     u_t^{(G)}(\bm{x}) = \alpha u_t^{(F_1)}(\bm{x}) + (1 - \alpha) u_t^{(F_2)}(\bm{x}).
\end{equation*}
At every step $t$, the next observation point $\bm{x}_t$ of our algorithm is
defined by $\bm{x}_t = {\rm argmax}_{\bm{x} \in \mathcal{X}} u_t^{(G)}(\bm{x})$. 
Hereafter, we call this strategy Multi-Task (MT)-MVA-BO.
The pseudo-code of MT-MVA-BO is shown as Algorithm \ref{alg:scal}.
\paragraph{Multi-objective Scenario}
Next, we explain the proposed algorithm for
finding the Pareto set efficiently.
From the confidence bounds of $F_1$ and $F_2$,
we define $\bm{F}_t^{(\text{opt})}$ and
$\bm{F}_t^{(\text{pes})}$ by
$\bm{F}_{t}^{(\text{opt})}(\bm{x}) =
\left(u_t^{(F_1)}(\bm{x}), u_t^{(F_2)}(\bm{x})\right)$ and
$\bm{F}_{t}^{(\text{pes})}(\bm{x}) =
\left(l_t^{(F_1)}(\bm{x}), l_t^{(F_2)}(\bm{x})\right)$,
which respectively represent the optimistic and pessimistic predictions
of the objective functions at step $t$.
First, we define the estimated Pareto set $\hat{\Pi}_t$ at
step $t$ by
\begin{equation}
    \label{eq:est_par_set}
    \hat{\Pi}_t = \left\{ \bm{x} \in \mathcal{X} ~\middle|~
    \forall \bm{x}^\prime \in E_{t,\bm{x}}^{(\text{pes})},~
    \bm{F}_{t}^{(\text{pes})}(\bm{x}) \npreceq \bm{F}_{t}^{(\text{pes})}(\bm{x}^\prime)\right\},
    ~\text{where}~E_{t,\bm{x}}^{(\text{pes})} 
    = \left\{\bm{x}^\prime \in \mathcal{X} ~\middle|~ \bm{F}_t^{(\text{pes})}(\bm{x}) 
    \neq \bm{F}_t^{(\text{pes})}(\bm{x}^\prime)\right\}.
\end{equation}
For theoretical reasons, we define $\hat{\Pi}_t$ based on
pessimistic predictions and the same idea is used in the existing GP-based optimization literatures
\cite{sui2015safe, JMLR:v17:15-047, bogunovic2018adversarially, DBLP:conf/aistats/KirschnerBJ020}.
Furthermore, using $\hat{\Pi}_t$,
the potential Pareto set $M_t$ is defined by
\begin{equation*}
    M_t = \left\{\bm{x} \in \mathcal{X} \setminus \hat{\Pi}_t ~\middle|~
    \forall \bm{x}^\prime \in \hat{\Pi}_t,~
    \bm{F}_{t}^{(\text{opt})}(\bm{x}) \npreceq_{\bm{\epsilon}} \bm{F}_{t}^{(\text{pes})}(\bm{x}^\prime)\right\}.
\end{equation*}
An intuitive interpretation of $M_t$ is the set which
excludes the points that are $\bm{\epsilon}$-dominated by other points with high probability.
At every step $t$, our algorithm chooses $\bm{x}_t$ based on the
uncertainty defined by the confidence bounds of $F_1$ and $F_2$.
In this paper, we adopt the diameter $\lambda_t(\bm{x})$ of rectangle
$\text{Rect}_t(\bm{x}) = \left[l_t^{(F_1)}(\bm{x}), u_t^{(F_1)}(\bm{x})\right] \times
\left[l_t^{(F_2)}(\bm{x}), u_t^{(F_2)}(\bm{x})\right]$ as
the uncertainty of $\bm{x}$:
\begin{equation}
    \label{eq:lambda}
    \lambda_t(\bm{x}) = \max_{\bm{y}, \bm{y}^\prime \in \text{Rect}_t(\bm{x})}
    \|\bm{y} - \bm{y}^\prime\|_{2}.
\end{equation}
Namely, the next observation point $\bm{x}_t$ is defined by $\bm{x}_t =
{\rm argmax}_{\bm{x} \in M_t \cup \hat{\Pi}_t}
 \lambda_t(\bm{x})$ at every step $t$.

Our proposed algorithm terminates when estimated Pareto set $\hat{\Pi}_t$
is guaranteed to be an $\bm{\epsilon}$-Pareto set  with high probability.
To this end, our algorithm checks the uncertainty set
$U_t$ that is defined by
\begin{equation*}
    U_t = \left\{\bm{x} \in \hat{\Pi}_t ~\middle|~ \exists \bm{x}^\prime \in
    \hat{\Pi}_t \setminus \{\bm{x}\},
    \bm{F}_{t}^{(\text{pes})}(\bm{x}) + \bm{\epsilon} \prec
    \bm{F}_{t}^{(\text{opt})}(\bm{x}^\prime) \right\}.
\end{equation*}
Intuitively, $U_t$ is the set of points
where it is not possible to decide
whether it is an $\bm{\epsilon}$-Pareto solution
based on the current
confidence bounds.
Our algorithm terminates at a step $t$ where both $M_t = \emptyset$ and
$U_t = \emptyset$ hold.

Hereafter, we call this algorithm Multi-Objective (MO)-MVA-BO.
The pseudo-code of MO-MVA-BO
 is shown as Algorithm \ref{alg:pareto}.

\begin{table}[tb]
    \begin{multicols*}{2}
        \begin{algorithm}[H]
            \caption{Multi-task MVA-BO (MT-MVA-BO)}
            \label{alg:scal}
            \begin{algorithmic}
                \REQUIRE GP prior $\mathcal{GP}(0,\ k)$,
                ~$\{\beta_t\}_{t \leq T}$,~$\alpha \in (0, 1)$.
                \FOR {$t = 0$ to $T$}
                    \STATE Compute $u_t^{(G)}(\bm{x})$ for any $\bm{x} \in \mathcal{X}$
                    \STATE Choose $\bm{x}_t = {\rm argmax}_{\bm{x} \in \mathcal{X}} u_t^{(G)}(\bm{x})$.
                    \STATE Sample $\bm{w}_t \sim p(\bm{w})$.
                    \STATE Observe $y_t \leftarrow f(\bm{x}_t, \bm{w}_t) + \eta_t$
                    \STATE Update the GP by adding $((\bm{x}_t, \bm{w}_t), y_t)$.
                \ENDFOR
                \ENSURE ${\rm argmax}_{\bm{x} \in \{\bm{x}_1, \ldots, \bm{x}_T\}} l_T^{(G)}(\bm{x})$.
            \end{algorithmic}
        \end{algorithm}
        \columnbreak
        \begin{algorithm}[H]
            \caption{Multi-objective MVA-BO (MO-MVA-BO)}
            \label{alg:pareto}
            \begin{algorithmic}
                \REQUIRE GP prior $\mathcal{GP}(0,\ k)$,
                ~$\{\beta_t\}_{t \in \mathbb{N}}$,~Non-negative vector $\bm{\epsilon} = (\epsilon_1, \epsilon_2)$.
                \STATE $t \leftarrow 0$.
                \REPEAT
                    \STATE Compute $\hat{\Pi}_t, M_t$.
                    \STATE Compute $\lambda_t(\bm{x})$ for any $\bm{x} \in M_t \cup \hat{\Pi}_t$.
                    \STATE Choose $\bm{x}_t = {\rm argmax}_{\bm{x} \in M_t \cup \hat{\Pi}_t}
                    \lambda_t(\bm{x})$.
                    \STATE Sample $\bm{w}_t \sim p(\bm{w})$.
                    \STATE Observe $y_t \leftarrow f(\bm{x}_t, \bm{w}_t) + \eta_t$.
                    \STATE Update the GP by adding $((\bm{x}_t, \bm{w}_t), y_t)$.
                    \STATE $t \leftarrow t + 1$.
                    \STATE Compute $U_t$.
                \UNTIL {$M_t = \emptyset$ and $U_t = \emptyset$}
                \ENSURE $\hat{\Pi}_t$.
            \end{algorithmic}
        \end{algorithm}
    \end{multicols*}
\end{table}

\subsection{Extensions and Practical Considerations}\label{sec:ext_setting}
In this section, we consider several extensions of
the proposed method to deal with situations which
arise in some practical applications,
leaving the details for appendix \ref{sec:suppC}.
\subsubsection{Unknown Distribution}
Thus far, we have assumed that $p(\bm{w})$ is known; however, this assumption
is sometimes unrealistic.
Considering how to deal with the case where $p(\bm{w})$ is unknown, one simple
way is to estimate $p(\bm{w})$ during the optimization process.
For example, if we estimate $p(\bm{w})$ by using an empirical distribution,
we can apply our algorithm by replacing $p(\bm{w})$
with the following $\tilde{p}_t(\bm{w})$ when computing the confidence bounds:
\begin{equation*}
    \tilde{p}_t(\bm{w}) = \frac{1}{t} \sum_{t^\prime=1}^t \1 [\bm{w}_{t^\prime} = \bm{w}].
\end{equation*}
As a more advanced method, it may be possible to consider 
extension to the distributionally robust setting \cite{DBLP:conf/aistats/KirschnerBJ020, pmlr-v108-nguyen20a}; however, we leave this as future work.

\subsubsection{Extension to Noisy Input}\label{sec:noise_ext}
One setting similar to that in this paper is the noisy input
 setting \cite{beland2017bayesian, pmlr-v108-frohlich20a}.
In this setting, observation point $\bm{x}_t$ is fluctuated
by noise $\bm{\xi} \in \Delta$ which follows the known density $p(\bm{\xi})$
defined over $\Delta$.
At every step $t$, the user chooses $\bm{x}_t$ and
obtains observation $y_t$ as $y_t = f(\bm{x}_t + \bm{\xi}) + \eta_t$, $\bm{\xi} \sim p(\bm{\xi})$. Our problem can be extended to the noisy input setting
by defining $F_1$ and $F_2$ through expectation $\mathbb{E}_{\bm{\xi}}[f(\bm{x} + \bm{\xi})]$ and variance $\mathbb{V}_{\bm{\xi}}[f(\bm{x} + \bm{\xi})]$
 defined as follows:
\begin{align}
    \label{eq:noisy_e}
    \mathbb{E}_{\bm{\xi}}[f(\bm{x} + \bm{\xi})] &= \int_{\Delta} f(\bm{x} + \bm{\xi}) p(\bm{\xi}) \text{d}\bm{\xi}, \\
    \label{eq:noisy_v}
    \mathbb{V}_{\bm{\xi}}[f(\bm{x} + \bm{\xi})] &= \int_{\Delta}
    \{f(\bm{x} + \bm{\xi}) - \mathbb{E}_{\hat{\bm{\xi}}}[f(\bm{x} +
     \hat{\bm{\xi}})]\}^2 p(\bm{\xi}) \text{d}\bm{\xi}.
\end{align}
We can apply the same algorithms as those in section \ref{sec:alg} by
constructing the confidence bounds via a way similar to that in section \ref{sec:cred}.
\subsubsection{Simulator-Based Experiment}\label{sec:sim_ext}
Some applications can be allowed to control the variable $\bm{w}$
in the optimization. For example, the case that the user run the
optimization process by evaluating
$f(\bm{x}, \bm{w})$ with the computer simulation.
Such scenarios have often been considered in similar 
studies reported in the BO
literature that assumed the existence of an uncontrollable
variable $\bm{w}$ \cite{toscano2018bayesian, DBLP:conf/aistats/KirschnerBJ020,
pmlr-v108-nguyen20a}.
Our method can be extended to such
a scenario by choosing $\bm{w}_t$
according to $\bm{w}_t = {\rm argmax}_{\bm{w} \in \Omega} \sigma_{t-1}(\bm{x}_t, \bm{w})$
after the selection of $\bm{x}_t$.

\section{Theoretical Results}\label{sec:theory}
In this section, we show the theoretical results of the proposed
algorithms. The details of the proofs are in appendix \ref{sec:suppA}.

First, we introduce the {\it maximum information gain}
\cite{DBLP:conf/icml/SrinivasKKS10} as a
sample complexity parameter of a GP.
Now, Let $A = \{\bm{a}_1, \ldots, \bm{a}_T\}$ be a finite subset of
$\mathcal{X} \times \Omega$, and $\bm{y}_A$ be a vector whose
$i$th element is $y_{\bm{a}_i} = f(\bm{a}_i) + \varepsilon_{\bm{a}_i}$.
Maximum information gain $\gamma_T$ at step $T$ is defined by
\begin{equation*}
    \gamma_T = \max_{A \subset \mathcal{X} \times \Omega; |A|=T} I(\bm{y}_A; f),
\end{equation*}
where $I(\bm{y}_A; f)$ denotes the mutual information between $\bm{y}_A$ and $f$.
Maximum information gain $\gamma_T$ is often used in BO, and its analytical form of the upper bound
is derived in commonly used kernels \cite{DBLP:conf/icml/SrinivasKKS10}.

The following two theorems
show the convergence properties of the proposed algorithms
for the multi-task and multi-objective scenarios, respectively.
\begin{theorem}\label{thm:sca_conv}
    Fix positive definite kernel $k$, and assume
    $f \in \mathcal{H}_k$ with $\|f\|_{\mathcal{H}_k} \leq B$.
    Let $\delta \in (0, 1)$ and $\epsilon > 0$
    , and set $\beta_t$ according to 
    $\beta_t = \left( \sqrt{\ln \det (\bm{I}_t + \sigma^{-2} \bm{K}_t) + 2 \ln \frac{3}{\delta}} + B\right)^2$ at every step $t$.
    Furthermore, for any $t \geq 1$, define $\hat{\bm{x}}_t$ by
    $\hat{\bm{x}}_t = {\rm argmax}_{\bm{x}_{t^\prime} \in \{\bm{x}_1, \ldots, \bm{x}_t\}} l_{t^\prime}^{(G)}(\bm{x}_{t^\prime})$.
    When applying MT-MVA-BO under the above conditions,
    with probability at least $1-\delta$, $\hat{\bm{x}}_T$ is an $\epsilon$-accurate solution, where $T$ is the smallest positive integer
    which satisfies the following inequity:
    \begin{align}
        \label{eq:sca_conv_rate}
        \alpha T^{-1} \beta_T^{1/2}\left( \sqrt{2TC_1\gamma_T} + C_2\right) + (1 - \alpha)T^{-1} \sqrt{2T\tilde{B} \beta_T^{1/2} \left( \sqrt{8TC_1\gamma_T} + 2C_2 \right)
         + 5 T\beta_T \left(C_1\gamma_T + 2C_2 \right)} \leq \epsilon.
    \end{align}
    Here, $\tilde{B} = {\rm max}_{(\bm{x}, \bm{w}) \in (\mathcal{X} \times \Omega)}
    \left| f(\bm{x}, \bm{w}) - \mathbb{E}_{\bm{w}}[f(\bm{x}, \bm{w})]\right|$
    and $C_1=\frac{16}{\log(1 + \sigma^{-2})}, C_2=16\log \frac{18}{\delta}$.
\end{theorem}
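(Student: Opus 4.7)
The plan is to extend the classical GP-UCB regret analysis of \cite{DBLP:conf/icml/SrinivasKKS10} to the MVA-BO setting, decomposing the average width of $Q_t^{(G)}$ into an $F_1$ contribution and an $F_2$ contribution and controlling each via the maximum information gain $\gamma_T$. The argument will condition on three high-probability events, each of probability at least $1-\delta/3$: the RKHS confidence bound for $f$ and two martingale-concentration events for the random draws $\bm{w}_t$. First I would apply Lemma \ref{lem:f_cred} with confidence $\delta/3$ to obtain the event $\mathcal{E}_1$ on which (\ref{eq:f_cred}) holds uniformly in $\bm{x},\bm{w},t$; on $\mathcal{E}_1$, the integral and square-root constructions of Section \ref{sec:cred} immediately give $F_1(\bm{x}) \in Q_t^{(F_1)}(\bm{x})$, $F_2(\bm{x}) \in Q_t^{(F_2)}(\bm{x})$, and hence $G(\bm{x}) \in Q_t^{(G)}(\bm{x})$. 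Together with the selection rule $\bm{x}_t = {\rm argmax}_{\bm{x}} u_t^{(G)}(\bm{x})$ and the definition of $\hat{\bm{x}}_T$, the standard UCB identity then yields $G(\bm{x}^\ast) - G(\hat{\bm{x}}_T) \leq \frac{1}{T}\sum_{t=1}^{T}(u_t^{(G)}(\bm{x}_t) - l_t^{(G)}(\bm{x}_t)) = \frac{\alpha}{T}\sum_t w_t^{(F_1)}(\bm{x}_t) + \frac{1-\alpha}{T}\sum_t w_t^{(F_2)}(\bm{x}_t)$, where $w_t^{(F_i)} = u_t^{(F_i)} - l_t^{(F_i)}$, so it suffices to bound the two cumulative width sums.

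The $F_1$ sum is handled using $w_t^{(F_1)}(\bm{x}_t) = 2\beta_t^{1/2}\int_{\Omega} \sigma_{t-1}(\bm{x}_t,\bm{w})p(\bm{w})\,\mathrm{d}\bm{w}$: since $\bm{w}_t \sim p(\bm{w})$ independently of the history, Azuma-Hoeffding applied to the bounded martingale differences $\int \sigma_{t-1}(\bm{x}_t,\bm{w})p(\bm{w})\,\mathrm{d}\bm{w} - \sigma_{t-1}(\bm{x}_t,\bm{w}_t)$ (event $\mathcal{E}_2$) exchanges the integral for the realized value up to an $O(\sqrt{T\log(1/\delta)})$ deviation, after which Cauchy-Schwarz together with the standard inequality $\sum_{t=1}^{T}\sigma_{t-1}^2(\bm{x}_t,\bm{w}_t) \leq \frac{2\gamma_T}{\log(1+\sigma^{-2})}$ produces the $\beta_T^{1/2}(\sqrt{2TC_1\gamma_T}+C_2)$ term. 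For the $F_2$ sum the elementary inequality $\sqrt{a}-\sqrt{b} \leq \sqrt{a-b}$ gives $w_t^{(F_2)}(\bm{x}_t) \leq \sqrt{\int_{\Omega} (\tilde{u}_t^{(\mathrm{sq})} - \tilde{l}_t^{(\mathrm{sq})})\,p(\bm{w})\,\mathrm{d}\bm{w}}$. A case analysis on whether $[\tilde{l}_t,\tilde{u}_t]$ contains $0$, combined with $|f-\mathbb{E}_{\bm{w}}[f]| \leq \tilde{B}$, then yields the uniform inequality $\tilde{u}_t^{(\mathrm{sq})} - \tilde{l}_t^{(\mathrm{sq})} \leq 2\tilde{B}(\tilde{u}_t - \tilde{l}_t) + (\tilde{u}_t - \tilde{l}_t)^2$, and the identity $\tilde{u}_t - \tilde{l}_t = 2\beta_t^{1/2}[\sigma_{t-1}(\bm{x},\bm{w}) + \mathbb{E}_{\bm{w}}\sigma_{t-1}(\bm{x},\bm{w})]$ reduces everything to sums involving $\sigma_{t-1}$ and $\sigma_{t-1}^2$. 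A Cauchy-Schwarz pull-out of the outer $\sum_t \sqrt{\cdot}$, the same Azuma-Hoeffding control on the linear-in-$\sigma$ part, and a second martingale concentration ($\mathcal{E}_3$) for $\sum_t(\int \sigma_{t-1}^2\,p\,\mathrm{d}\bm{w} - \sigma_{t-1}^2(\bm{x}_t,\bm{w}_t))$ on the quadratic-in-$\sigma$ part together produce the more intricate $\sqrt{2T\tilde{B}\beta_T^{1/2}(\sqrt{8TC_1\gamma_T}+2C_2)+5T\beta_T(C_1\gamma_T+2C_2)}$ contribution.

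A union bound over $\mathcal{E}_1,\mathcal{E}_2,\mathcal{E}_3$ gives total failure probability at most $\delta$, and requiring the assembled regret bound to be at most $\epsilon$ is exactly inequality (\ref{eq:sca_conv_rate}), so any $T$ satisfying it makes $\hat{\bm{x}}_T$ an $\epsilon$-accurate solution with probability at least $1-\delta$. The main obstacle is the $F_2$ contribution: the squaring inside the variance and the outer square root destroy the linearity that makes the usual GP-UCB telescoping work, so the standard chain (width $\to \beta_t^{1/2}\sigma_{t-1} \to \sqrt{T\gamma_T}$) cannot be applied directly. The sign case split is essential because when $0 \in [\tilde{l}_t,\tilde{u}_t]$ the envelope $\tilde{l}_t^{(\mathrm{sq})}$ collapses to $0$ and the integrand width becomes genuinely quadratic in the posterior standard deviation; this is what forces $\gamma_T$ to appear both inside a square root and linearly (through $C_1\gamma_T$) in the final bound, and motivates introducing the second martingale concentration event.
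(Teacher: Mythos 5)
Your proposal follows the paper's proof almost step for step: the same three-way split of $\delta$, the same reduction of the regret to the average width of $Q_t^{(G)}$ via the UCB selection rule and the definition of $\hat{\bm{x}}_T$, the same $\sqrt{a}-\sqrt{b}\le\sqrt{a-b}$ device and sign case analysis for the $F_2$ width (your uniform bound $\tilde u_t^{(\mathrm{sq})}-\tilde l_t^{(\mathrm{sq})}\le 2\tilde B(\tilde u_t-\tilde l_t)+(\tilde u_t-\tilde l_t)^2$ is in fact marginally tighter than the paper's $4\tilde B\beta_t^{1/2}\tilde\sigma_{t-1}+5\beta_t\tilde\sigma_{t-1}^2$, so it still implies \eqref{eq:sca_conv_rate}), and the same information-gain bounds. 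The one genuine deviation is the concentration step that trades $\int_\Omega\sigma_{t-1}(\bm{x}_t,\bm{w})p(\bm{w})\,\mathrm{d}\bm{w}$ for the realized $\sigma_{t-1}(\bm{x}_t,\bm{w}_t)$: you invoke Azuma--Hoeffding, which yields an additive deviation of order $\sqrt{T\log(1/\delta)}$, whereas the paper uses the multiplicative concentration inequality of Lemma \ref{lem:mean_bound} from \cite{kirschner18heteroscedastic}, namely $\sum_t m_t\le 2\sum_t S_t+8K\ln(6K/\delta)$, whose additive term is independent of $T$. The constant $C_2=16\log\frac{18}{\delta}$ in the theorem is exactly this $T$-free term (with $K=1$ and $\delta\mapsto\delta/3$), so your route proves a bound of the same shape and the same asymptotics but does not literally reproduce inequality \eqref{eq:sca_conv_rate}: with the Azuma variant the occurrences of $C_2$ and $2C_2$ would have to be replaced by quantities growing like $\sqrt{T\log(1/\delta)}$. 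If you want the theorem with its stated constants, substitute Lemma \ref{lem:mean_bound} for Azuma--Hoeffding in both of your events $\mathcal{E}_2$ and $\mathcal{E}_3$; everything else in your argument goes through unchanged.
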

\begin{theorem}\label{thm:par_conv}
    Fix positive definite kernel $k$, and assume
    $f \in \mathcal{H}_k$ with $\|f\|_{\mathcal{H}_k} \leq B$.
    Let $\delta \in (0, 1)$ and $\epsilon > 0$,
    and set $\beta_t$ according to
    $\beta_t = \left( \sqrt{\ln \det (\bm{I}_t + \sigma^{-2} \bm{K}_t) + 2 \ln \frac{3}{\delta}} + B\right)^2$ at every step $t$.
    When applying MO-MVA-BO under the above conditions,
    the following 1. and 2. hold with probability at least $1-\delta$:
    \begin{description}
        \item [1.] The algorithm terminates at most step $T$
        where T is the smallest positive integer that
        satisfies the following inequity:
        \begin{align}
            \label{eq:par_conv_rate}
            T^{-1} \beta_T^{1/2}\left( \sqrt{2TC_1\gamma_T} + C_2 \right) + T^{-1} \sqrt{2T\tilde{B} \beta_T^{1/2} \left( \sqrt{8TC_1\gamma_T} + 2C_2 \right)
             + 5 T\beta_T \left(C_1\gamma_T + 2C_2 \right)} \leq \min\{\epsilon_1, \epsilon_2\}.
        \end{align}
        Here, $\tilde{B} = {\rm max}_{(\bm{x}, \bm{w}) \in (\mathcal{X} \times \Omega)}
        \left| f(\bm{x}, \bm{w}) - \mathbb{E}_{\bm{w}}[f(\bm{x}, \bm{w})]\right|$, $C_1=\frac{16}{\log(1 + \sigma^{-2})}, C_2=16\log \frac{18}{\delta}$.
        \item [2.] When the algorithm terminates, estimated Pareto set $\hat{\Pi}_t$
        is an $\bm{\epsilon}$-accurate Pareto set.
    \end{description}
\end{theorem}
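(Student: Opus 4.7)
The plan is to split the argument into three pieces: uniform-in-$t$ validity of the confidence bounds of $F_1$ and $F_2$, correctness of $\hat{\Pi}_t$ whenever $M_t = U_t = \emptyset$, and a quantitative bound on the first such time. I will condition throughout on the event $\mathcal{E}$ that Lemma \ref{lem:f_cred} holds with $\delta$ replaced by $\delta/3$. Under $\mathcal{E}$, integrating (\ref{eq:f_cred}) against $p(\bm{w})$ yields $F_1(\bm{x}) \in Q_t^{(F_1)}(\bm{x})$ for all $\bm{x},t$, and the construction in Section \ref{sec:cred} yields $F_2(\bm{x}) \in Q_t^{(F_2)}(\bm{x})$ uniformly, each with probability at least $1-\delta/3$.

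For Part 2, I fix $t$ with $M_t = U_t = \emptyset$ and verify the two conditions defining an $\bm{\epsilon}$-accurate Pareto set. Condition 2: given $\bm{x}^\ast \in \Pi$, either $\bm{x}^\ast \in \hat{\Pi}_t$ and we are done, or $M_t = \emptyset$ supplies $\bm{x}^\prime \in \hat{\Pi}_t$ with $\bm{F}_t^{(\text{opt})}(\bm{x}^\ast) \preceq_{\bm{\epsilon}} \bm{F}_t^{(\text{pes})}(\bm{x}^\prime)$, and chaining the confidence bounds delivers $\bm{F}(\bm{x}^\ast) \preceq_{\bm{\epsilon}} \bm{F}(\bm{x}^\prime)$. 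Condition 1: pick $\bm{x} \in \hat{\Pi}_t$ and a Pareto point $\bm{x}^{\ast\ast}$ weakly dominating $\bm{x}$, and argue by contradiction that $F_i(\bm{x}^{\ast\ast}) > F_i(\bm{x}) + \epsilon_i$ for some $i$. If $\bm{x}^{\ast\ast} \in \hat{\Pi}_t$, the confidence bounds yield $l_t^{(F_i)}(\bm{x}) + \epsilon_i \leq F_i(\bm{x}) + \epsilon_i < F_i(\bm{x}^{\ast\ast}) \leq u_t^{(F_i)}(\bm{x}^{\ast\ast})$, contradicting $U_t = \emptyset$; otherwise $M_t = \emptyset$ supplies $\bm{x}^{\prime\prime\prime} \in \hat{\Pi}_t$ with $u_t^{(F_i)}(\bm{x}^{\ast\ast}) \leq l_t^{(F_i)}(\bm{x}^{\prime\prime\prime}) + \epsilon_i$, and the same chase gives $u_t^{(F_i)}(\bm{x}^{\prime\prime\prime}) \geq F_i(\bm{x}^{\ast\ast}) > F_i(\bm{x}) + \epsilon_i \geq l_t^{(F_i)}(\bm{x}) + \epsilon_i$, again contradicting $U_t = \emptyset$.

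For Part 1, the decisive reduction is that since $\bm{x}_t = \argmax_{\bm{x} \in M_t \cup \hat{\Pi}_t} \lambda_t(\bm{x})$, a bound $\lambda_t(\bm{x}_t) \leq \min\{\epsilon_1, \epsilon_2\}$ forces $w_t^{(F_i)}(\bm{x}) \coloneqq u_t^{(F_i)}(\bm{x}) - l_t^{(F_i)}(\bm{x}) \leq \epsilon_i$ on all of $M_t \cup \hat{\Pi}_t$, which makes both $M_t$ and $U_t$ empty directly from their definitions. Since $\min_{t \leq T} \lambda_t(\bm{x}_t) \leq T^{-1}\sum_{t=1}^T \lambda_t(\bm{x}_t)$, it suffices to bound the average by the left-hand side of (\ref{eq:par_conv_rate}). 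Decompose $\lambda_t(\bm{x}_t) \leq w_t^{(F_1)}(\bm{x}_t) + w_t^{(F_2)}(\bm{x}_t)$. For $F_1$, $w_t^{(F_1)}(\bm{x}_t) = 2\beta_t^{1/2}\int \sigma_{t-1}(\bm{x}_t,\bm{w})\, p(\bm{w})\,\mathrm{d}\bm{w}$; a Freedman/Azuma-type martingale bound for the i.i.d.\ draws $\bm{w}_t \sim p$ converts the integral into $\sigma_{t-1}(\bm{x}_t,\bm{w}_t)$ up to an $O(\sqrt{\log(1/\delta)})$ deviation, after which Cauchy--Schwarz together with the standard inequality $\sum_{t=1}^T \sigma_{t-1}^2(\bm{x}_t,\bm{w}_t) \leq C_1\gamma_T/8$ produces the first summand of (\ref{eq:par_conv_rate}). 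For $F_2$, the elementary inequality $|\sqrt{A}-\sqrt{B}| \leq \sqrt{|A-B|}$ yields $w_t^{(F_2)}(\bm{x}_t) \leq \sqrt{\int (\tilde{u}_t^{(\text{sq})} - \tilde{l}_t^{(\text{sq})})\, p(\bm{w})\,\mathrm{d}\bm{w}}$, and factoring $|a^2 - b^2| = |a - b|(|a| + |b|)$ over the piecewise definition, together with the pointwise bound $|\tilde{u}_t|,|\tilde{l}_t| \leq \tilde{B} + O(\beta_t^{1/2}\sigma_{t-1})$, splits the integrand into a term linear in $\sigma_{t-1}$ (producing the $\tilde{B}$ factor) and a quadratic one (producing the $\beta_T(C_1\gamma_T + 2C_2)$ contribution); after averaging, the square root of their sum is the second summand of (\ref{eq:par_conv_rate}). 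A union bound over the two martingale concentration events and $\mathcal{E}$ delivers the total failure probability $\delta$.

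The main obstacle will be the $F_2$ regret bound: the nonlinear ``square--average--square root'' composition precludes a linear decomposition, which is exactly what forces the nested square root and the amplitude $\tilde{B}$ into (\ref{eq:par_conv_rate}). Handling the piecewise $\tilde{u}_t^{(\text{sq})}, \tilde{l}_t^{(\text{sq})}$ cleanly, and carrying the adaptively chosen $\bm{x}_t$ through a martingale concentration over the independently sampled $\bm{w}_t$ so as to relate $\int \sigma_{t-1}(\bm{x}_t,\bm{w})p(\bm{w})\mathrm{d}\bm{w}$ to $\sigma_{t-1}(\bm{x}_t,\bm{w}_t)$, are the two technical hurdles.
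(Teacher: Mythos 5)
Your overall architecture (validity of the bounds, correctness at termination, quantitative bound on the stopping time via $T^{-1}\sum_t\lambda_t(\bm{x}_t)$, the $F_2$ decomposition through $|\sqrt{A}-\sqrt{B}|\le\sqrt{|A-B|}$ and the amplitude $\tilde{B}$, and the martingale conversion of $\int\sigma_{t-1}(\bm{x}_t,\bm{w})p(\bm{w})\mathrm{d}\bm{w}$ into $\sigma_{t-1}(\bm{x}_t,\bm{w}_t)$) matches the paper, and your Condition-2 argument is the paper's. But your Condition-1 argument has a genuine gap. You try to witness $\bm{F}(\bm{x})\in Z_{\bm{\epsilon}}$ by showing that a Pareto-optimal $\bm{x}^{\ast\ast}$ dominating $\bm{x}$ satisfies $F_i(\bm{x}^{\ast\ast})\le F_i(\bm{x})+\epsilon_i$ for \emph{both} $i$. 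In your Case B the chain breaks: from $\bm{F}_t^{(\text{opt})}(\bm{x}^{\ast\ast})\preceq_{\bm{\epsilon}}\bm{F}_t^{(\text{pes})}(\bm{x}^{\prime\prime\prime})$ you only get $u_t^{(F_i)}(\bm{x}^{\prime\prime\prime})\ge l_t^{(F_i)}(\bm{x}^{\prime\prime\prime})\ge u_t^{(F_i)}(\bm{x}^{\ast\ast})-\epsilon_i\ge F_i(\bm{x}^{\ast\ast})-\epsilon_i$, not $\ge F_i(\bm{x}^{\ast\ast})$, so the contradiction with $U_t=\emptyset$ does not close (you end up needing $2\epsilon_i$). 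Worse, the intermediate claim itself is false: take $\bm{\epsilon}=(1,1)$, $\bm{F}(\bm{x})=(0,0)$, $\bm{F}(\bm{v})=(0.9,-0.5)$, $\bm{F}(\bm{z})=(1.8,0.1)$, with valid bounds $Q^{(F_1)}(\bm{x})=[-0.05,0.05]$, $Q^{(F_2)}(\bm{x})=[-0.05,0.05]$, $Q^{(F_1)}(\bm{v})=[0.85,0.95]$, $Q^{(F_2)}(\bm{v})=[-0.55,-0.45]$, $Q^{(F_1)}(\bm{z})=[0.8,1.82]$, $Q^{(F_2)}(\bm{z})=[-0.6,0.15]$; then $\hat{\Pi}_t=\{\bm{x},\bm{v}\}$, $M_t=U_t=\emptyset$, yet the unique Pareto point $\bm{z}$ dominating $\bm{x}$ has $F_1(\bm{z})=1.8>F_1(\bm{x})+\epsilon_1$. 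The theorem still holds there, but the witness for $\bm{F}(\bm{x})\in Z_{\bm{\epsilon}}$ is the corner point $(0,0.1)$ of the Pareto front $Z$, which is not the image of any input.

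This is exactly why the paper only establishes the weaker statement that for every $\bm{x}'\in\mathcal{X}$ \emph{at least one} of $F_1(\bm{x})+\epsilon_1\ge F_1(\bm{x}')$, $F_2(\bm{x})+\epsilon_2\ge F_2(\bm{x}')$ holds, and then invests a separate geometric lemma (Lemma \ref{lem:Ze}) to convert that one-coordinate-per-competitor condition into membership in $Z_{\bm{\epsilon}}$ by constructing a witness of the form $(F_1(\bm{x}^{(j)}),F_2(\bm{x}^{(j+1)}))$ on the staircase boundary. Your proposal is missing this ingredient entirely, and it cannot be routed around by choosing a single Pareto-optimal input as the witness. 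A second, smaller omission: in Part 1 you assert that $\lambda_{t}(\bm{x}_{t})\le\min\{\epsilon_1,\epsilon_2\}$ forces $M_{t}=\emptyset$ ``directly from the definitions,'' but this needs the fact that every element of $M_t$ is pessimistically dominated by some element of $\hat{\Pi}_t$ (the paper's Lemmas \ref{lem:existence_pihat} and \ref{lem:existence_Mt}, proved by a finite chain argument); without it the small-width condition on $M_t\cup\hat{\Pi}_t$ does not immediately empty $M_t$.
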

The first term $ \beta_T^{1/2} \left( \sqrt{TC_1\gamma_T} + C_2\right)$ of the left hand side in (\ref{eq:sca_conv_rate}) and
(\ref{eq:par_conv_rate}) also appears
in the theoretical result of the existing algorithm,
which only considers the expectation $F_1$ (e.g. Theorem 2 in \cite{DBLP:conf/aistats/KirschnerBJ020}).
The second term $\sqrt{2T\tilde{B} \beta_T^{1/2} \left( \sqrt{8TC_1\gamma_T} + 2C_2 \right)
 + 5 T\beta_T \left(C_1\gamma_T + 2C_2 \right)}$ is specific
 to our problem.
 This term depends on the complexity parameter $\tilde{B}$, which quantifies the variation of 
 function $f(\bm{x}, \bm{w})$ around its expectation.

\section{Numerical Experiments}
In this section, we show the performance of the proposed methods
through numerical experiments.
As the baseline methods in both the multi-task and multi-objective scenarios, we adopted random
sampling ({\tt RS}) and uncertainty sampling ({\tt US}).
{\tt RS} choose $\bm{x}_t$ from $\mathcal{X}$ uniformly at random,
and {\tt US} choose $\bm{x}_t$ such that $\bm{x}_t$ achieve the largest average
posterior variance $\bm{x}_t = {\rm argmax}_{\bm{x} \in \mathcal{X}} \int_{\Omega} \sigma_{t-1}(\bm{x}, \bm{w}) p(\bm{w}) \text{d}\bm{w}$.
To measure the performance, in the multi-task scenario,
we computed the regret, $G(\bm{x}^\ast) - G(\hat{\bm{x}}_t)$,
at every step $t$, where $\bm{x}_t$ is the estimated solution
defined by the algorithms.
We defined $\hat{\bm{x}}_t$ as $\hat{\bm{x}}_t = {\rm argmax}_{t^\prime=1, \ldots, t}l_t^{(G)}(\bm{x}_{t^\prime})$
in {\tt RS}, {\tt US}, and proposed method ({\tt MT-MVA-BO}).
Furthermore, we set $\alpha=0.5$.
Also, in the multi-objective scenario, we computed the gap of hyper-volume \cite{emmerich2005single},
$\text{HV} - \hat{\text{HV}}_t$ to measure the performance,
where $\text{HV}$ and $\hat{\text{HV}}_t$ denote the hyper
volumes computed based on the true Pareto set $\Pi$
and the estimated Pareto set $\hat{\Pi}_t$, respectively.
The hyper volume gap measures how close the estimated Pareto front
is to the true Pareto front. We defined $\hat{\Pi}_t$ by
(\ref{eq:est_par_set}) in {\tt RS}, {\tt US} and the proposed method
({\tt MO-MVA-BO}).
Furthermore, in the multi-task scenario, to show the effect of
difference of objective functions, we also adopt the
two methods {\tt BQOUCB} \cite{DBLP:conf/aistats/KirschnerBJ020, pmlr-v108-nguyen20a} and {\tt BO-VO}. {\tt BQOUCB}
is the existing method which aims to maximize $F_1$,
and {\tt BO-VO} is the variant of our method which corresponds
to the case $\alpha=0$. These methods choose $\bm{x}_t$ as
the maximizing point of $u_t^{(F_1)}(\bm{x})$ and $u_t^{(F_2)}(\bm{x})$
 respectively. In addition, estimated solution
 $\hat{\bm{x}}_t$ is defined by
 $\hat{\bm{x}}_t = {\rm argmax}_{t^\prime = 1,\ldots,t} l_t^{(F_1)}(\bm{x}_{t^\prime})$ and
 $\hat{\bm{x}}_t = {\rm argmax}_{t^\prime = 1,\ldots,t} l_t^{(F_2)}(\bm{x}_{t^\prime})$ respectively.
Moreover, we also make comparisons to the adaptive versions of 
these methods, {\tt ADA-BQOUCB} and {\tt ADA-BO-VO}.
{\tt ADA-BQOUCB} and {\tt ADA-BO-VO} choose
$\bm{x}_t$ in the same way as do {\tt BQOUCB} and {\tt BO-VO},
but the estimated solutions are defined as $\hat{\bm{x}}_t = {\rm argmax}_{t^\prime=1, \ldots, t}l_t^{(G)}(\bm{x}_{t^\prime})$.

\subsection{Artificial Data Experiments}
In this subsection, we show the results of the artificial-data experiments.
\paragraph{GP Test Functions}
We experimented with the true oracle functions $f$ that
are generated from the 2D GP prior.
First, we divided $[-1, 1]^2$ into $25$ uniformly spaced grid points 
in each dimension and generated the sample path from the 
GP prior. Next, we created the GP model with these grid 
points and set the true oracle function as its GP posterior mean.
In this experiment, we created $50$
sample paths from different seeds, and conducted
$10$ experiments for each function.
Thus, we report the average performance of a total
of $500$ experiments.
To create a GP sample path, we use the Gaussian kernel
$k((\bm{x}, \bm{w}), (\bm{x}^\prime, \bm{w}^\prime))
= \sigma_{\text{ker}}^2\exp(\frac{\|\bm{x} - \bm{x}^\prime\|_2^2 + \|\bm{w} - \bm{w}^\prime\|_2^2}{2l^2})$ with $\sigma_{\text{ker}}=1, l=0.25$,
as well as to construct the confidence bound in the algorithms.
Furthermore, we set noise variance as $\sigma^2=10^{-4}$.
In addition, we divided $[-1, 1]$ into $100$ grid points uniformly, and set $\mathcal{X}$ and $\Omega$ as these grid points.
Moreover, we define $p(w)$ by $p(w) = \sum_{w \in \Omega} \phi(w)/Z$,
$Z = \sum_{w \in \Omega} \phi(w)$ where $\phi$ is the
density function of the standard normal distribution.

\paragraph{Benchmark Functions of Optimization}
We also experimented with the Bird function (2D) and Rosenbrock function (3D),
which are often used as the benchmark function in the field of the optimization.
First, we scaled the input domain to $[-1, 1]$ 
divided into $100$
grid points in each dimension. In Bird function,
we set $\mathcal{X}$ and $\Omega$ as the grid points of the first and the second dimensions, 
respectively. In the Rosenbrock function,
we set $\Omega$ as the grid points of the third dimension and
the remaining points as $\mathcal{X}$.
Furthermore, we set $p(w)$ as in the same way as the experiment of the GP test functions.
We use ARD Gaussian kernel $k((\bm{x}, \bm{w}), (\bm{x}^\prime, \bm{w}^\prime))
= \sigma_{\text{ker}}^2\exp\left\{\sum_{i=1}^{d_1}\frac{(\bm{x}_i-\bm{x}_i^\prime)^2}{2l_i^{(x)2}}
+ \sum_{j=1}^{d_2}\frac{(\bm{w}_j-\bm{w}_j^\prime)^2}{2l_{j}^{(w)2}} \right\}$, and tune these hyperparameters by maximizing
the marginal likelihood at every $10$ step in the algorithms.
Furthermore, we set the noise variance as $\sigma^2=10^{-4}$ and report the
average performance of $100$ simulations with different seeds.

\begin{figure}[t]
    \centering
    \includegraphics[width=0.31\linewidth]{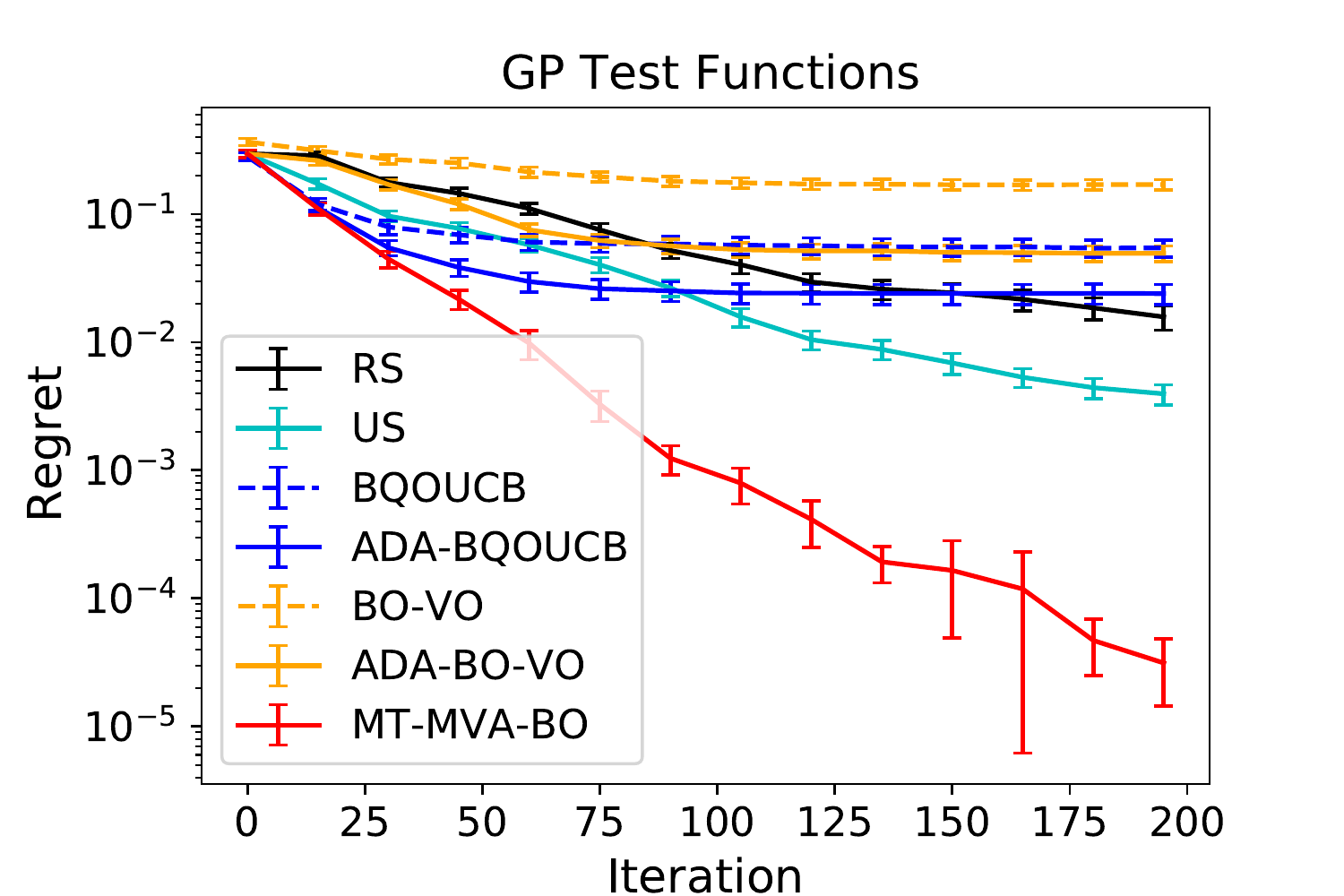}
    \includegraphics[width=0.31\linewidth]{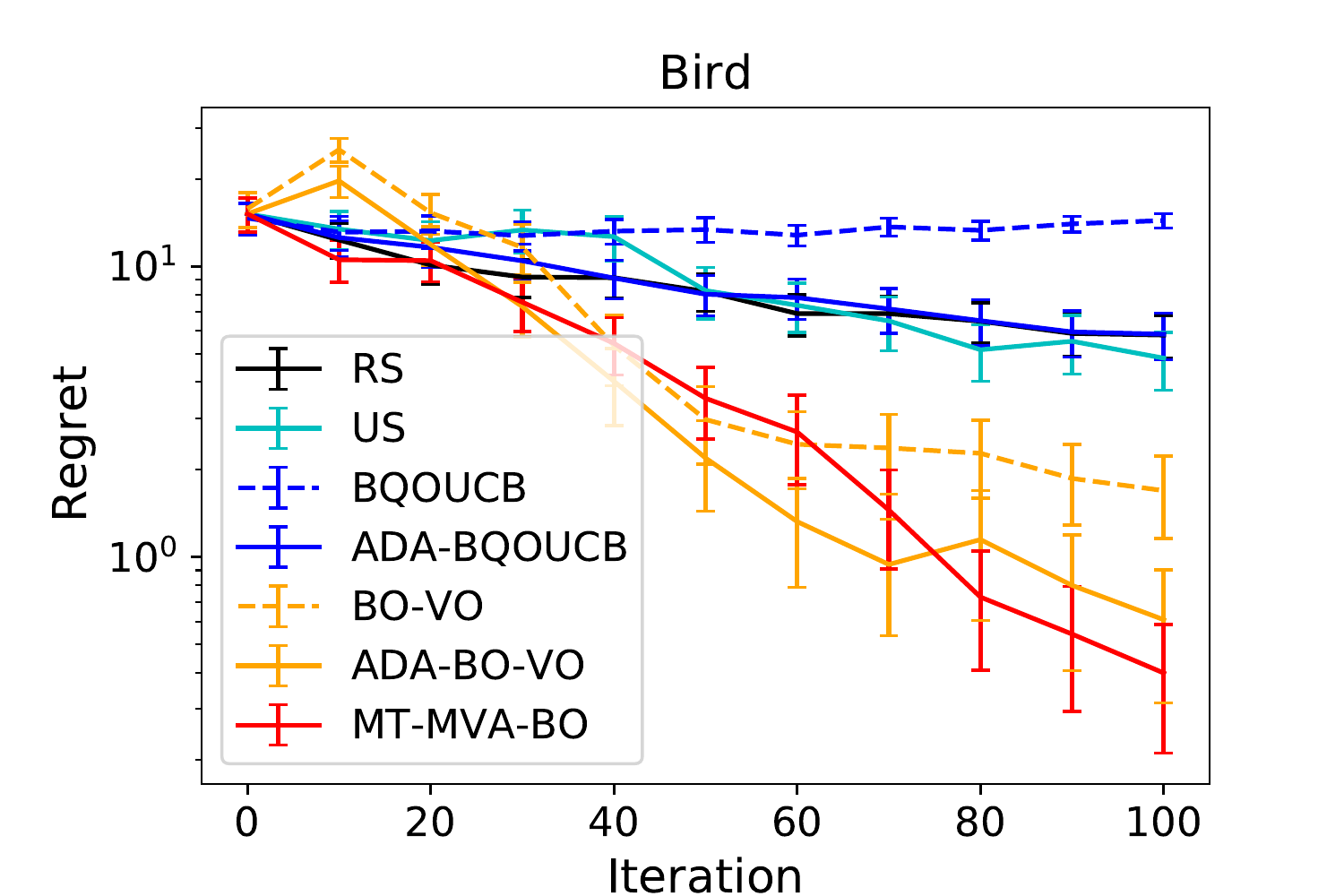}
    \includegraphics[width=0.31\linewidth]{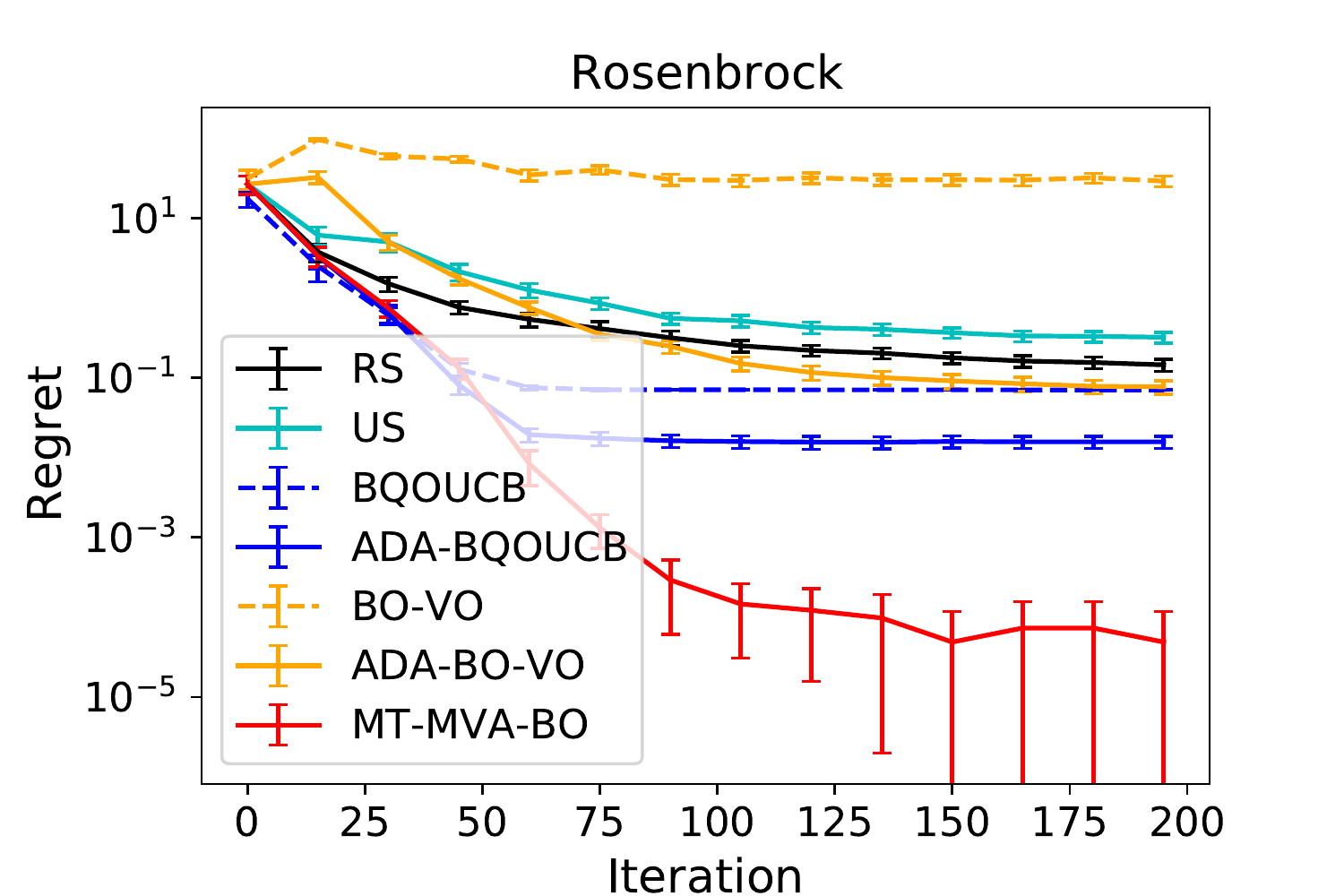}
    \centering
    \includegraphics[width=0.31\linewidth]{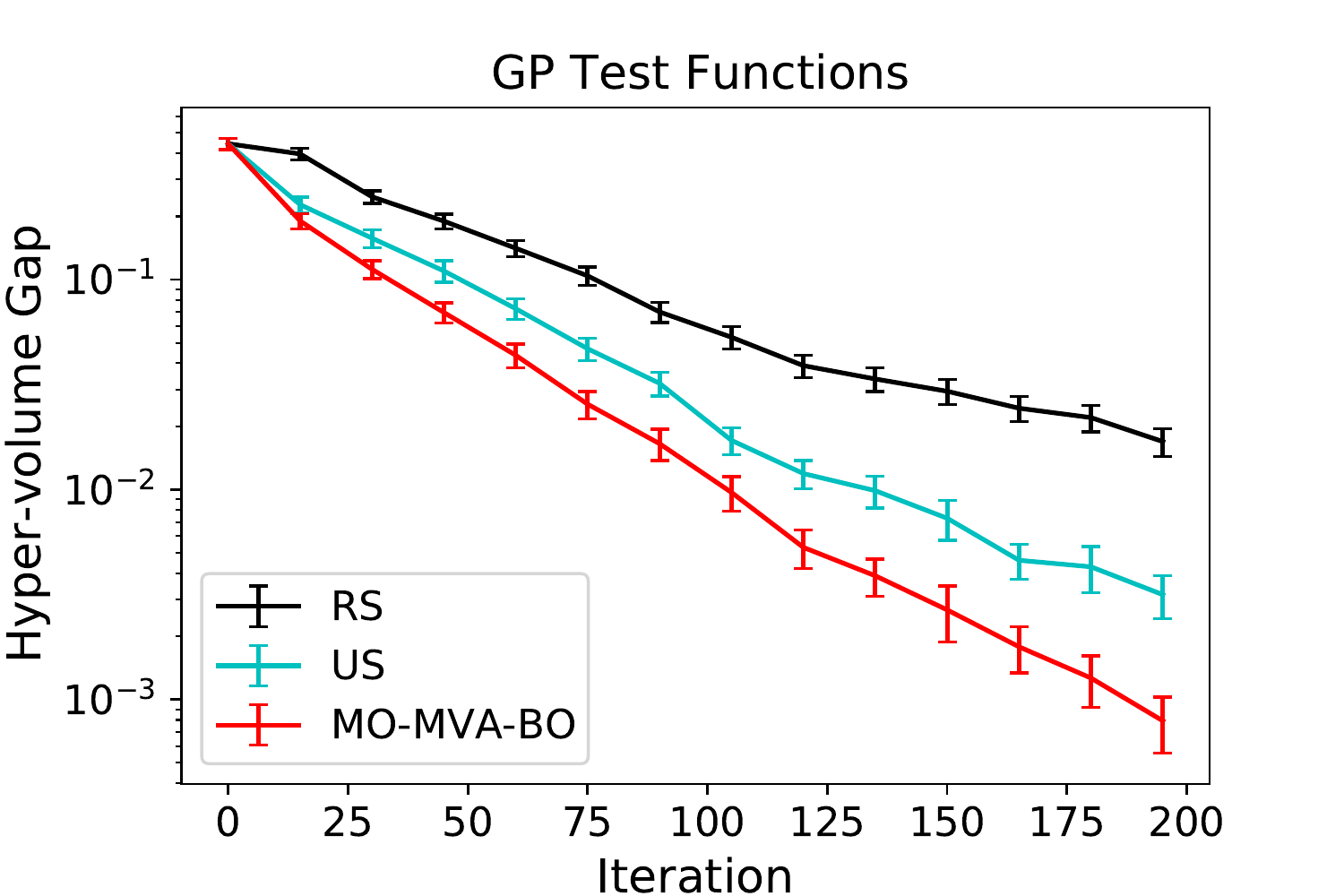}
    \includegraphics[width=0.31\linewidth]{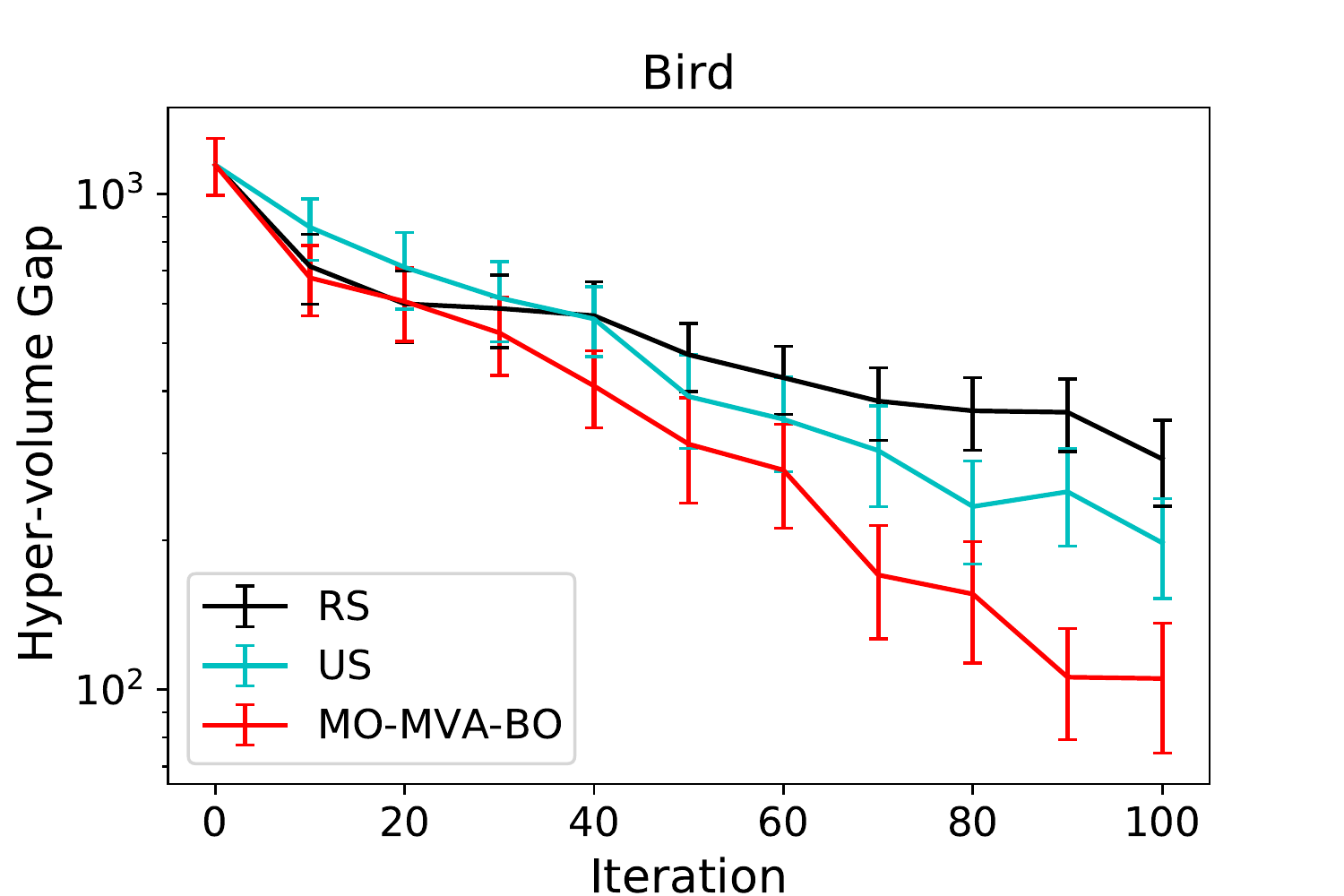}
    \includegraphics[width=0.31\linewidth]{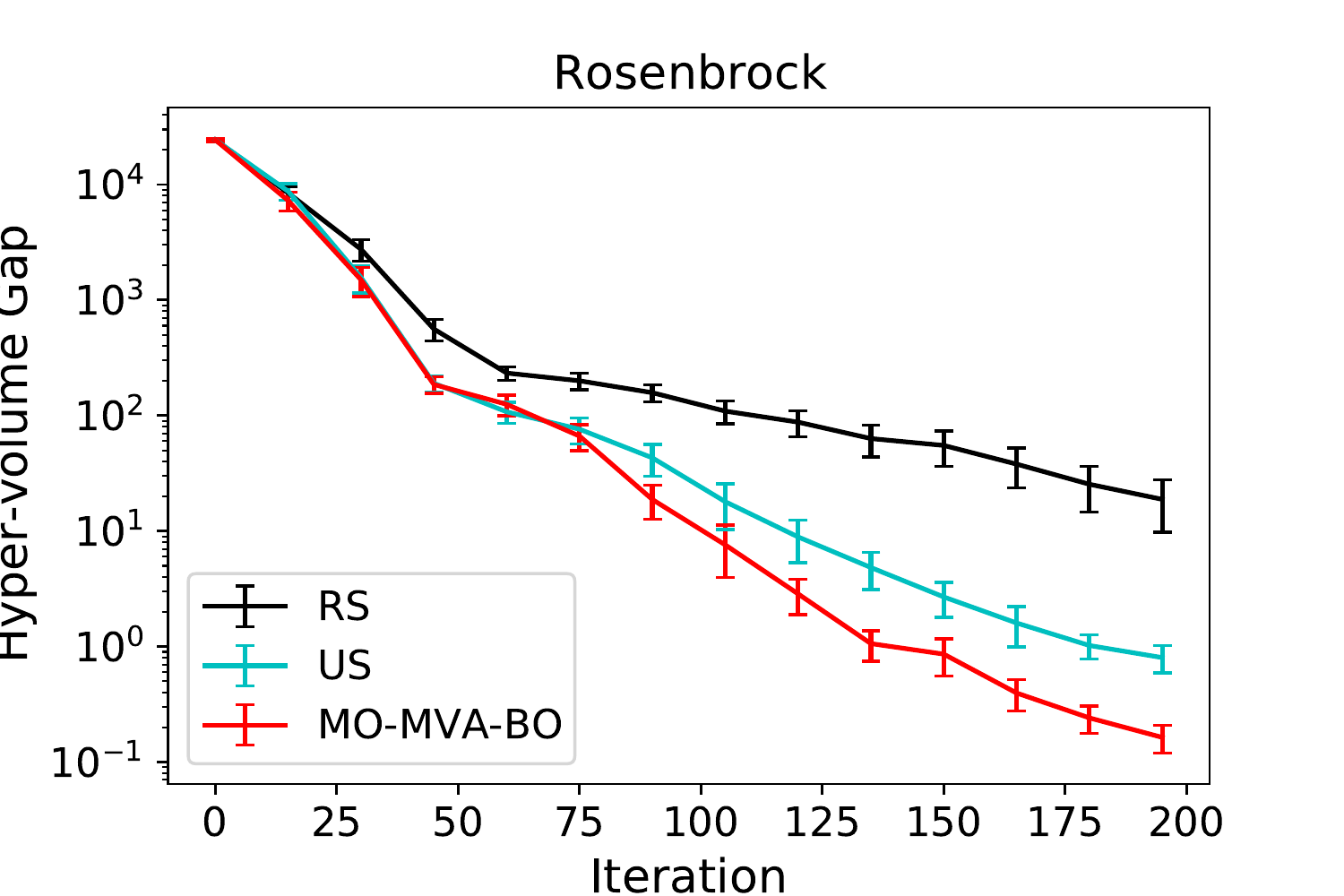}
    \caption{Average performances in artificial data experiments.
    The error bars represent $2 \times $[standard error].
    The top and bottom figures show the results of the multi-task ($\alpha = 0.5$) and multi-objective scenarios respectively.}
    \label{fig:art_exp}
\end{figure}
Figure {\ref{fig:art_exp}} shows the results of the artificial data experiments.
We confirmed that the proposed methods achieve better performances than
the other methods. In the experiments of the multi-task scenario,
we also confirmed that the regrets of {\tt BQOUCB}, {\tt BO-VO}, {\tt ADA-BQOUCB}, and {\tt ADA-BO-VO} stop decreasing at an early stage. Note that
these are reasonable results because objective functions of these methods are inconsistent with our settings.

\subsection{Real-data Experiment}
We applied the proposed methods to {\it Newsvendor problem under dynamic consumer substitution} 
\cite{mahajan2001stocking}, whose goal is to optimize the initial inventory levels
under uncertainty of customer behaviors. The parameter $\bm{x}$ and 
$\bm{w}$ respectively correspond to the initial inventory level of products and the uncertain 
purchasing behaviors of customers, which follow mutually independent Gamma distributions. 
The goal of this problem is to find the $\bm{x}$ which optimizes the 
profit $f(\bm{x}, \bm{w})$ under the uncertainty of $\bm{w}$. For this problem, 
we conducted the experiments  in the simulator-based setting described in section \ref{sec:sim_ext} because 
profit $f(\bm{x}, \bm{w})$ can be evaluated based on a computer simulation.
Figure~\ref{fig:newsvendor} shows the average performances of $100$ simulations with different seeds.
\begin{figure}[t]
    \centering
    \includegraphics[width=0.4\linewidth]{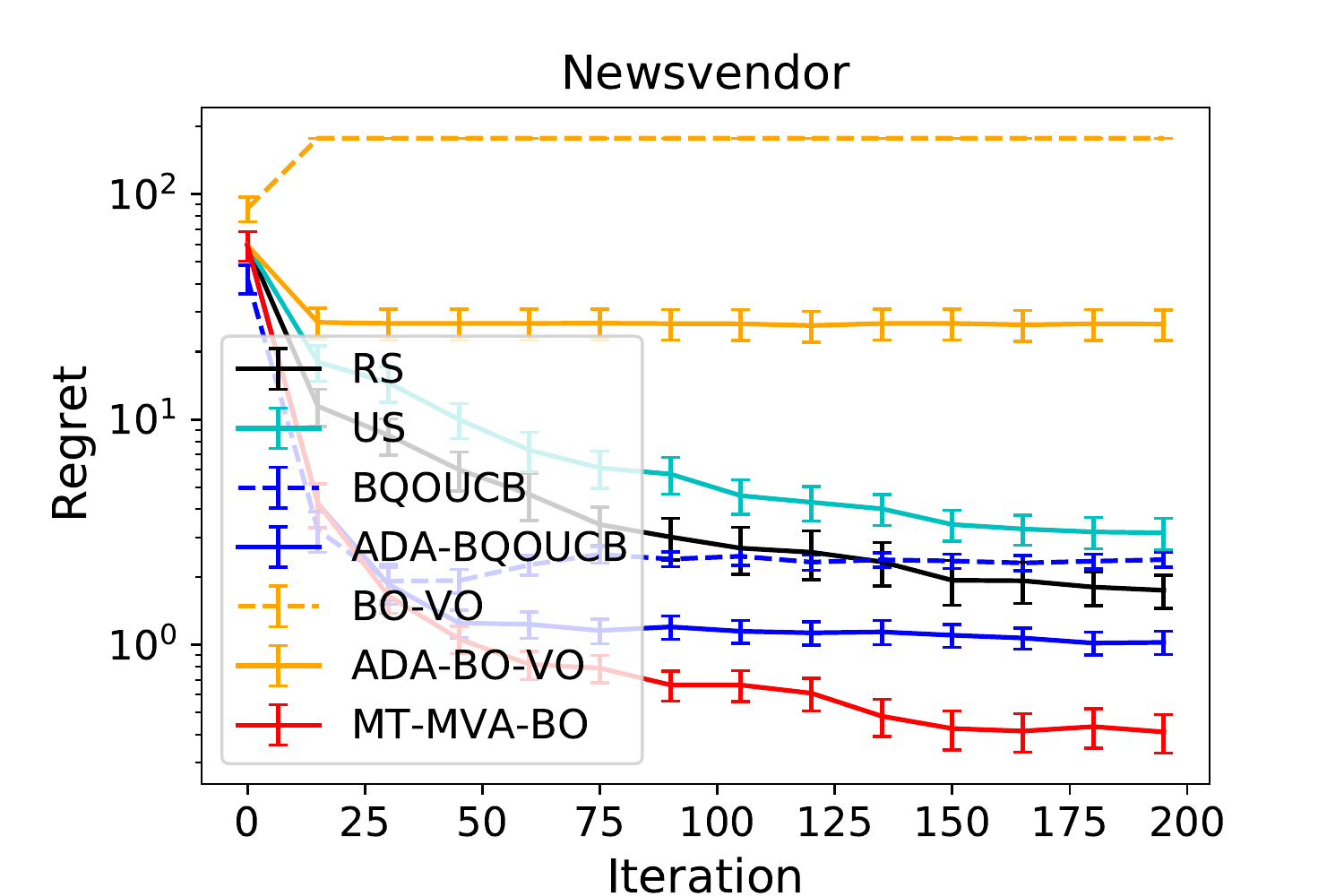}
    \includegraphics[width=0.4\linewidth]{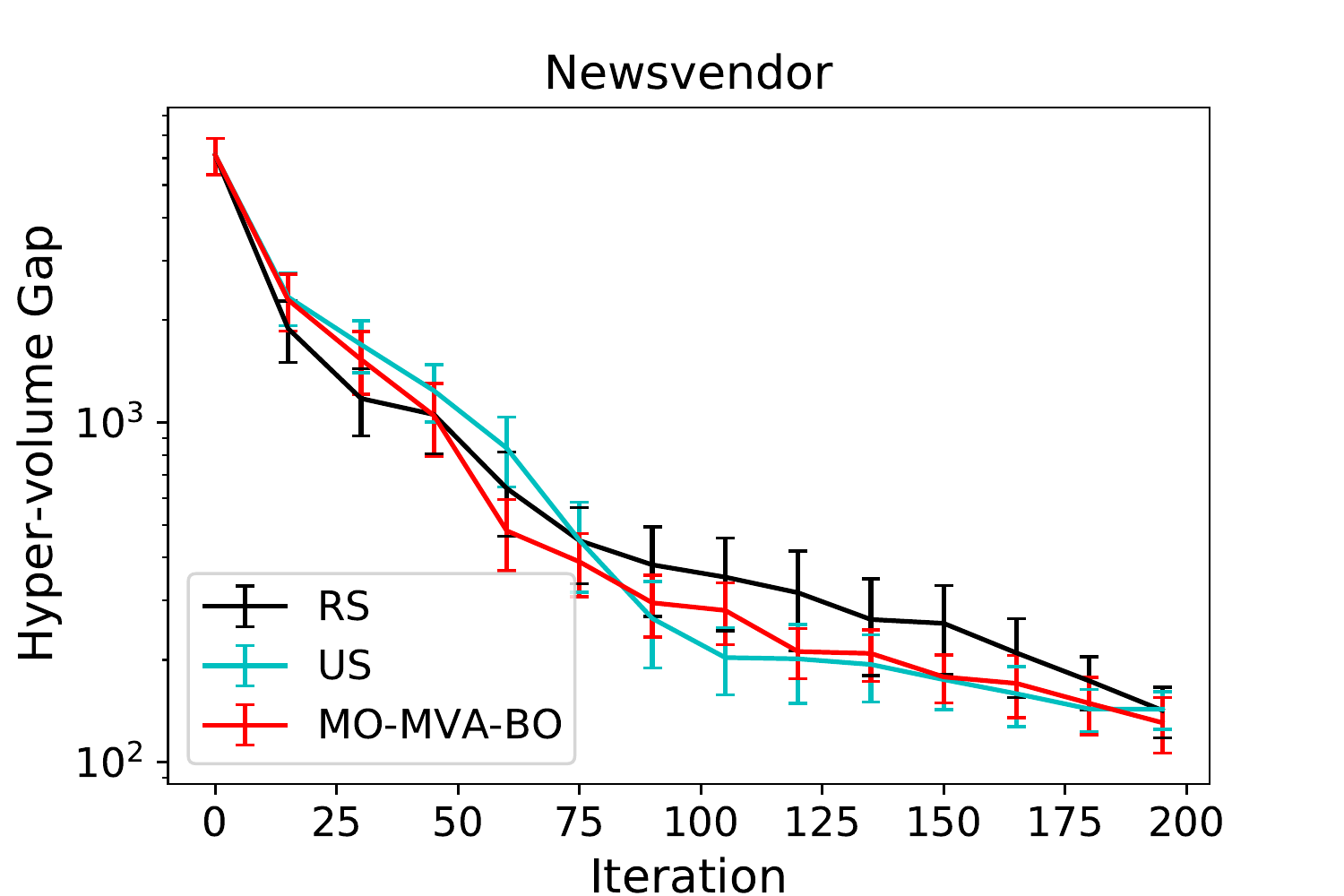}
    \caption{The results of experiments in Newsvendor problem. 
    The left and right figures present the results of the multi-task 
    ($\alpha = 0.5$) and multi-objective scenarios, respectively.}
    \label{fig:newsvendor}
\end{figure}
\section{Conclusion}
We introduced the novel Bayesian optimization framework: MVA-BO, which 
simultaneously considers two objective functions: expectation and variance 
under an uncertainty environment. 
In this framework, we considered the three scenarios; multi-task, 
multi-objective and constraint optimization scenarios, which 
often appear in real-world applications.
We studied the rigorous convergence properties of our MVA-BO
algorithms and 
demonstrated the effectiveness of them through both artificial and 
real-data experiments.

\section*{Acknowledgement}
This work was partially supported by MEXT KAKENHI (20H00601, 16H06538), JST CREST (JPMJCR1502), and RIKEN Center for Advanced Intelligence Project.
\bibliography{myref}

\begin{thebibliography}{10}

\bibitem{hm1952portfolio}
Markowitz HM.
\newblock Portfolio selection.
\newblock {\em Journal of Finance}, 7(1):77--91, 1952.

\bibitem{markowitz2000mean}
Harry~M Markowitz and G~Peter Todd.
\newblock {\em Mean-variance analysis in portfolio choice and capital markets},
  volume~66.
\newblock John Wiley \& Sons, 2000.

\bibitem{keeley1990reexamination}
Michael~C Keeley and Frederick~T Furlong.
\newblock A reexamination of mean-variance analysis of bank capital regulation.
\newblock {\em Journal of Banking \& Finance}, 14(1):69--84, 1990.

\bibitem{o1991bayes}
Anthony O'Hagan.
\newblock Bayes--hermite quadrature.
\newblock {\em Journal of statistical planning and inference}, 29(3):245--260,
  1991.

\bibitem{DBLP:conf/icml/SrinivasKKS10}
Niranjan Srinivas, Andreas Krause, Sham~M. Kakade, and Matthias~W. Seeger.
\newblock Gaussian process optimization in the bandit setting: No regret and
  experimental design.
\newblock In Johannes F{\"{u}}rnkranz and Thorsten Joachims, editors, {\em
  Proceedings of the 27th International Conference on Machine Learning
  (ICML-10), June 21-24, 2010, Haifa, Israel}, pages 1015--1022. Omnipress,
  2010.

\bibitem{swersky2013multi}
Kevin Swersky, Jasper Snoek, and Ryan~P Adams.
\newblock Multi-task {B}ayesian optimization.
\newblock In {\em Advances in neural information processing systems}, pages
  2004--2012, 2013.

\bibitem{emmerich2005single}
Michael Emmerich.
\newblock Single-and multi-objective evolutionary design optimization assisted
  by {G}aussian random field metamodels.
\newblock {\em Dissertation, LS11, FB Informatik, Universit{\"a}t Dortmund,
  Germany}, 2005.

\bibitem{JMLR:v17:15-047}
Marcela Zuluaga, Andreas Krause, and Markus P{{{\"u}}}schel.
\newblock e-pal: An active learning approach to the multi-objective
  optimization problem.
\newblock {\em Journal of Machine Learning Research}, 17(104):1--32, 2016.

\bibitem{icml2020_6243}
Shinya Suzuki, Shion Takeno, Tomoyuki Tamura, Kazuki Shitara, and Masayuki
  Karasuyama.
\newblock Multi-objective {B}ayesian optimization using {P}areto-frontier
  entropy.
\newblock In {\em Proceedings of Machine Learning and Systems 2020}, pages
  10841--10850. 2020.

\bibitem{gardner2014bayesian}
Jacob~R Gardner, Matt~J Kusner, Zhixiang~Eddie Xu, Kilian~Q Weinberger, and
  John~P Cunningham.
\newblock Bayesian optimization with inequality constraints.
\newblock In {\em ICML}, volume 2014, pages 937--945, 2014.

\bibitem{10.5555/3020751.3020778}
Michael~A. Gelbart, Jasper Snoek, and Ryan~P. Adams.
\newblock Bayesian optimization with unknown constraints.
\newblock In {\em Proceedings of the Thirtieth Conference on Uncertainty in
  Artificial Intelligence}, UAI'14, page 250–259, Arlington, Virginia, USA,
  2014. AUAI Press.

\bibitem{hernandez2016general}
Jos{\'e}~Miguel Hern{\'a}ndez-Lobato, Michael~A Gelbart, Ryan~P Adams,
  Matthew~W Hoffman, and Zoubin Ghahramani.
\newblock A general framework for constrained {B}ayesian optimization using
  information-based search.
\newblock {\em The Journal of Machine Learning Research}, 17(1):5549--5601,
  2016.

\bibitem{toscano2018bayesian}
Saul Toscano{-}Palmerin and Peter~I. Frazier.
\newblock Bayesian optimization with expensive integrands.
\newblock {\em CoRR}, abs/1803.08661, 2018.

\bibitem{beland2017bayesian}
Justin~J Beland and Prasanth~B Nair.
\newblock Bayesian optimization under uncertainty.
\newblock In {\em NIPS BayesOpt 2017 workshop}, 2017.

\bibitem{iwazaki2020bayesian}
Shogo Iwazaki, Yu~Inatsu, and Ichiro Takeuchi.
\newblock Bayesian quadrature optimization for probability threshold robustness
  measure.
\newblock {\em arXiv preprint arXiv:2006.11986}, 2020.

\bibitem{ben2009robust}
Aharon Ben-Tal, Laurent El~Ghaoui, and Arkadi Nemirovski.
\newblock {\em Robust optimization}, volume~28.
\newblock Princeton University Press, 2009.

\bibitem{beyer2007robust}
Hans-Georg Beyer and Bernhard Sendhoff.
\newblock Robust optimization--a comprehensive survey.
\newblock {\em Computer methods in applied mechanics and engineering},
  196(33-34):3190--3218, 2007.

\bibitem{ben2002robust}
Aharon Ben-Tal and Arkadi Nemirovski.
\newblock Robust optimization--methodology and applications.
\newblock {\em Mathematical programming}, 92(3):453--480, 2002.

\bibitem{schied2006risk}
Alexander Schied*.
\newblock Risk measures and robust optimization problems.
\newblock {\em Stochastic Models}, 22(4):753--831, 2006.

\bibitem{alexander2002economic}
Gordon~J Alexander and Alexandre~M Baptista.
\newblock Economic implications of using a mean-var model for portfolio
  selection: A comparison with mean-variance analysis.
\newblock {\em Journal of Economic Dynamics and Control}, 26(7-8):1159--1193,
  2002.

\bibitem{fabozzi2007robust}
Frank~J Fabozzi, Petter~N Kolm, Dessislava~A Pachamanova, and Sergio~M Focardi.
\newblock Robust portfolio optimization.
\newblock {\em The Journal of portfolio management}, 33(3):40--48, 2007.

\bibitem{gpml}
Carl~Edward Rasmussen and Christopher K.~I. Williams.
\newblock {\em Gaussian Processes for Machine Learning}.
\newblock MIT Press, 2006.

\bibitem{abbasi2013online}
Yasin Abbasi-Yadkori.
\newblock Online learning for linearly parametrized control problems.
\newblock 2013.

\bibitem{sui2015safe}
Yanan Sui, Alkis Gotovos, Joel Burdick, and Andreas Krause.
\newblock Safe exploration for optimization with {G}aussian processes.
\newblock In {\em International Conference on Machine Learning}, pages
  997--1005, 2015.

\bibitem{bogunovic2018adversarially}
Ilija Bogunovic, Jonathan Scarlett, Stefanie Jegelka, and Volkan Cevher.
\newblock Adversarially robust optimization with {G}aussian processes.
\newblock In {\em Advances in neural information processing systems}, pages
  5760--5770, 2018.

\bibitem{DBLP:conf/aistats/KirschnerBJ020}
Johannes Kirschner, Ilija Bogunovic, Stefanie Jegelka, and Andreas Krause.
\newblock Distributionally robust {B}ayesian optimization.
\newblock In Silvia Chiappa and Roberto Calandra, editors, {\em The 23rd
  International Conference on Artificial Intelligence and Statistics, {AISTATS}
  2020, 26-28 August 2020, Online [Palermo, Sicily, Italy]}, volume 108 of {\em
  Proceedings of Machine Learning Research}, pages 2174--2184. {PMLR}, 2020.

\bibitem{pmlr-v108-nguyen20a}
Thanh Nguyen, Sunil Gupta, Huong Ha, Santu Rana, and Svetha Venkatesh.
\newblock Distributionally robust {B}ayesian quadrature optimization.
\newblock In Silvia Chiappa and Roberto Calandra, editors, {\em Proceedings of
  the Twenty Third International Conference on Artificial Intelligence and
  Statistics}, volume 108 of {\em Proceedings of Machine Learning Research},
  pages 1921--1931, Online, 26--28 Aug 2020. PMLR.

\bibitem{pmlr-v108-frohlich20a}
Lukas Fröhlich, Edgar Klenske, Julia Vinogradska, Christian Daniel, and
  Melanie Zeilinger.
\newblock Noisy-input entropy search for efficient robust {B}ayesian
  optimization.
\newblock In Silvia Chiappa and Roberto Calandra, editors, {\em Proceedings of
  the Twenty Third International Conference on Artificial Intelligence and
  Statistics}, volume 108 of {\em Proceedings of Machine Learning Research},
  pages 2262--2272, Online, 26--28 Aug 2020. PMLR.

\bibitem{mahajan2001stocking}
Siddharth Mahajan and Garrett Van~Ryzin.
\newblock Stocking retail assortments under dynamic consumer substitution.
\newblock {\em Operations Research}, 49(3):334--351, 2001.

\bibitem{kirschner18heteroscedastic}
Johannes Kirschner and Andreas Krause.
\newblock Information directed sampling and bandits with heteroscedastic noise.
\newblock In {\em Proc. International Conference on Learning Theory (COLT)},
  July 2018.

\bibitem{DBLP:conf/icml/SuiZBY18}
Yanan Sui, Vincent Zhuang, Joel~W. Burdick, and Yisong Yue.
\newblock Stagewise safe {B}ayesian optimization with {G}aussian processes.
\newblock In Jennifer~G. Dy and Andreas Krause, editors, {\em Proceedings of
  the 35th International Conference on Machine Learning, {ICML} 2018,
  Stockholmsm{\"{a}}ssan, Stockholm, Sweden, July 10-15, 2018}, volume~80 of
  {\em Proceedings of Machine Learning Research}, pages 4788--4796. {PMLR},
  2018.

\end{thebibliography}
\bibliographystyle{unsrt}

\clearpage
\appendix
\section{Proofs}\label{sec:suppA}
\subsection{Proof of Theorem \ref{thm:sca_conv}}
From the definition of 
$\beta_t$ and Lemma \ref{lem:f_cred}, the following holds with probability at least $1-\delta/3$:
\begin{equation}
    \label{eq:f_cred_cond}
    \forall~\bm{x} \in \mathcal{X},~\forall \bm{w} \in \Omega,~
    \forall t \geq 1,~|f(\bm{x}, \bm{w}) - \mu_{t-1}(\bm{x}, \bm{w})|
    \leq \beta_t^{1/2} \sigma_{t-1}(\bm{x}, \bm{w}).
\end{equation}
Moreover, we give the following lemma about the confidence bound 
$Q_t^{(G)}(\bm{x}_t)$:

\begin{lemma}\label{lem:R_bound}
    Assume that (\ref{eq:f_cred_cond}) holds. Then, for any $T \geq 1$, it holds that 
    \begin{align*}
        &\sum_{t=1}^T\left\{ u_t^{(G)}(\bm{x}_t) - l_t^{(G)}(\bm{x}_t)\right\}
        \leq 2 \alpha \beta_T^{1/2} \sum_{t=1}^T \int_{\Omega} \sigma_{t-1}(\bm{x}_t, \bm{w})
        p(\bm{w}) {\rm d}\bm{w} \\
        &\hspace{20pt}+ (1-\alpha) \sqrt{8T\tilde{B} \beta_T^{1/2}
        \sum_{t=1}^T\int_{\Omega} \sigma_{t-1}(\bm{x}_t, \bm{w}) p(\bm{w}) {\rm d}\bm{w}
         + 20 T\beta_T \sum_{t=1}^T\int_{\Omega} \sigma_{t-1}^2(\bm{x}_t, \bm{w}) p(\bm{w})
          {\rm d}\bm{w}} ,
    \end{align*}
    where $\tilde{B} = {\rm max}_{(\bm{x}, \bm{w}) \in (\mathcal{X} \times \Omega)}
    | f(\bm{x}, \bm{w}) - \mathbb{E}_{\bm{w}}[f(\bm{x}, \bm{w})] |$.
\end{lemma}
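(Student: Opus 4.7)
The plan is to split $u_t^{(G)} - l_t^{(G)} = \alpha(u_t^{(F_1)} - l_t^{(F_1)}) + (1-\alpha)(u_t^{(F_2)} - l_t^{(F_2)})$ and bound the two pieces separately. The $F_1$ piece is immediate from the construction in Section \ref{sec:cred}: since $u_t - l_t = 2\beta_t^{1/2}\sigma_{t-1}$ pointwise, we have $u_t^{(F_1)}(\bm{x}) - l_t^{(F_1)}(\bm{x}) = 2\beta_t^{1/2}\int_\Omega \sigma_{t-1}(\bm{x}, \bm{w}) p(\bm{w}) d\bm{w}$. Summing over $t$ and using that $\beta_t$ is non-decreasing in $t$ (because $\ln\det(\bm{I}_t + \sigma^{-2}\bm{K}_t)$ is) supplies the first term on the right-hand side, with coefficient $2\alpha \beta_T^{1/2}$.

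The real work is the $F_2$ piece. First I would apply the elementary inequality $\sqrt{a} - \sqrt{b} \leq \sqrt{a - b}$ (valid for $a \geq b \geq 0$) to the definitions of $u_t^{(F_2)}, l_t^{(F_2)}$ to reduce the task to a pointwise bound on $\tilde{u}_t^{(\text{sq})} - \tilde{l}_t^{(\text{sq})}$:
\begin{equation*}
u_t^{(F_2)}(\bm{x}_t) - l_t^{(F_2)}(\bm{x}_t) \leq \sqrt{\int_\Omega \bigl(\tilde{u}_t^{(\text{sq})}(\bm{x}_t, \bm{w}) - \tilde{l}_t^{(\text{sq})}(\bm{x}_t, \bm{w})\bigr) p(\bm{w}) d\bm{w}}.
\end{equation*}
The key observation for the pointwise bound is that, by (\ref{eq:V_int}), the residual $s := f(\bm{x}, \bm{w}) - \mathbb{E}_{\bm{w}}[f(\bm{x}, \bm{w})]$ lies in $[\tilde{l}_t, \tilde{u}_t]$ and satisfies $|s| \leq \tilde{B}$. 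Writing $\delta_l = s - \tilde{l}_t \geq 0$ and $\delta_u = \tilde{u}_t - s \geq 0$, one has the clean identity $\delta_l + \delta_u = \tilde{u}_t - \tilde{l}_t = 2\beta_t^{1/2}(\sigma_{t-1}(\bm{x}, \bm{w}) + \mathbb{E}_{\bm{w}}[\sigma_{t-1}(\bm{x}, \bm{w})])$. A short case analysis on whether $[\tilde{l}_t, \tilde{u}_t]$ contains $0$ (exploiting $|s| \leq \tilde{B}$ in the identity $\tilde{u}_t^2 - \tilde{l}_t^2 = (\delta_l + \delta_u)(2s + \delta_u - \delta_l)$ for the non-containing-$0$ cases) yields a bound of the form $\tilde{u}_t^{(\text{sq})} - \tilde{l}_t^{(\text{sq})} \leq 2\tilde{B}(\delta_l + \delta_u) + C(\delta_l + \delta_u)^2$.

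With the pointwise bound in hand, the remainder is routine: substitute $\delta_l + \delta_u = 2\beta_t^{1/2}(\sigma_{t-1} + \mathbb{E}_{\bm{w}}[\sigma_{t-1}])$, integrate against $p(\bm{w})$, and apply Jensen's inequality $(\int_\Omega \sigma_{t-1} p\, d\bm{w})^2 \leq \int_\Omega \sigma_{t-1}^2 p\, d\bm{w}$ to absorb the $(\mathbb{E}_{\bm{w}}[\sigma_{t-1}])^2$ terms into $\int_\Omega \sigma_{t-1}^2 p\, d\bm{w}$. This gives
\begin{equation*}
\int_\Omega (\tilde{u}_t^{(\text{sq})} - \tilde{l}_t^{(\text{sq})})\, p(\bm{w})\, d\bm{w} \leq c_1 \tilde{B}\beta_t^{1/2} \int_\Omega \sigma_{t-1}\, p\, d\bm{w} + c_2 \beta_t \int_\Omega \sigma_{t-1}^2\, p\, d\bm{w}.
\end{equation*}
Summing the bound on $u_t^{(F_2)}(\bm{x}_t) - l_t^{(F_2)}(\bm{x}_t)$ over $t$, applying Cauchy--Schwarz in the form $\sum_{t=1}^T \sqrt{a_t} \leq \sqrt{T \sum_{t=1}^T a_t}$, and then pulling out $\beta_t \leq \beta_T$ produces the second (square-root) term of the lemma with the stated constants.

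The main obstacle is the pointwise bound on $\tilde{u}_t^{(\text{sq})} - \tilde{l}_t^{(\text{sq})}$: a naive bound that discards the information $s \in [\tilde{l}_t, \tilde{u}_t]$ would leave behind a standalone $\tilde{B}^2$ term uncoupled from $\sigma_{t-1}$, which would prevent the cumulative bound from shrinking as the GP posterior concentrates. The case analysis above, which uses $s \in [\tilde{l}_t, \tilde{u}_t]$ and $|s| \leq \tilde{B}$ jointly, is what keeps $\tilde{B}$ linearly coupled to the summable posterior widths, and is the only delicate step; the rest is bookkeeping via Jensen and Cauchy--Schwarz.
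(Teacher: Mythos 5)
Your proposal is correct and follows essentially the same route as the paper: the same split into the $F_1$ and $F_2$ contributions, the same reduction via $\sqrt{a}-\sqrt{b}\leq\sqrt{a-b}$, the same use of the containment $f(\bm{x},\bm{w})-\mathbb{E}_{\bm{w}}[f(\bm{x},\bm{w})]\in[\tilde{l}_t,\tilde{u}_t]$ together with $|f-\mathbb{E}_{\bm{w}}[f]|\leq\tilde{B}$ to keep $\tilde{B}$ linearly coupled to the interval width, and the same Jensen plus Cauchy--Schwarz endgame. The only divergence is local: the paper bounds $\tilde{u}_t^{(\text{sq})}-\tilde{l}_t^{(\text{sq})}$ by splitting it into $\bigl(\max\{\tilde{l}_t^2,\tilde{u}_t^2\}-\min\{\tilde{l}_t^2,\tilde{u}_t^2\}\bigr)+\text{STR}_{0,t}^2$ and controlling $|\tilde{\mu}_{t-1}|\leq\tilde{B}+\beta_t^{1/2}\tilde{\sigma}_{t-1}$, whereas your $(\delta_l,\delta_u)$ parameterization with a case split on whether the interval straddles $0$ reaches the same pointwise estimate (in fact with a marginally smaller constant, $4\beta_t\tilde{\sigma}_{t-1}^2$ in place of $5\beta_t\tilde{\sigma}_{t-1}^2$, which still implies the stated bound).
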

\begin{proof}
    From the definition of  $u_t^{(G)}$ and $l_t^{(G)}$, we have
    \begin{equation}
        \label{eq:cred_G_bound}
        \sum_{t=1}^T\left\{ u_t^{(G)}(\bm{x}_t) - l_t^{(G)}(\bm{x}_t) \right\}
        = \alpha \sum_{t=1}^T \left\{ u_t^{(F_1)}(\bm{x}_t) - l_t^{(F_1)}(\bm{x}_t) \right\}
        +(1 - \alpha)\sum_{t=1}^T \left\{ u_t^{(F_2)}(\bm{x}_t) - l_t^{(F_2)}(\bm{x}_t) \right\}.
    \end{equation}
    Similarly, from the definition of  $u_t^{(F_1)}$ and $l_t^{(F_1)}$, we get the following inequality:
    \begin{align}
        \sum_{t=1}^T \left\{u_t^{(F_1)}(\bm{x}_t) - l_t^{(F_1)}(\bm{x}_t)\right\}
        &= \sum_{t=1}^T \int_{\Omega} \left\{ u_t(\bm{x}_t, \bm{w}) - l_t(\bm{x}_t, \bm{w})\right\}
        p(\bm{w}) \text{d}\bm{w} \nonumber \\
        &= 2 \sum_{t=1}^T \beta_t^{1/2} \int_{\Omega} \sigma_{t-1}(\bm{x}_t, \bm{w})
        p(\bm{w}) \text{d}\bm{w} \nonumber \\
        \label{eq:F1_ineq_int}
        &\leq 2  \beta_T^{1/2} \sum_{t=1}^T \int_{\Omega} \sigma_{t-1}(\bm{x}_t, \bm{w})
        p(\bm{w}) \text{d}\bm{w}.
    \end{align}
    Here, the last inequality is given by monotonicity of   $\beta_t$.
    In addition, noting that the definition of  $u_t^{(F_2)}$ and  $l_t^{(F_2)}$ we obtain
    \begin{align}
        u_t^{(F_2)}(\bm{x}_t) - l_t^{(F_2)}(\bm{x}_t)
        &= \sqrt{\int_{\Omega} \tilde{u}_t^{(\text{sq})}(\bm{x}_t, \bm{w}) p(\bm{w}) \text{d}\bm{w}} -
        \sqrt{\int_{\Omega} \tilde{l}_t^{(\text{sq})}(\bm{x}_t, \bm{w}) p(\bm{w}) \text{d}\bm{w}} \nonumber \\
        \label{eq:F2_cred_bound}
        &\leq \sqrt{\int_{\Omega} \left\{ \tilde{u}_t^{(\text{sq})}(\bm{x}_t, \bm{w})  -
        \tilde{l}_t^{(\text{sq})}(\bm{x}_t, \bm{w}) \right\} p(\bm{w}) \text{d}\bm{w}},
    \end{align}
    where the last inequality is obtained  by using the fact that  $\sqrt{a} - \sqrt{b} \leq \sqrt{a - b}$ for any 
 $a \geq b \geq 0$. Furthermore, we have 
    \begin{equation}
        \label{eq:sq_bound}
        \tilde{u}_t^{(\text{sq})}(\bm{x}_t, \bm{w}) -
         \tilde{l}_t^{(\text{sq})}(\bm{x}_t, \bm{w})
         = \max\left\{ \tilde{l}_t^2(\bm{x}_t, \bm{w}), \tilde{u}_t^2(\bm{x}_t, \bm{w})\right\}
         - \min\left\{ \tilde{l}_t^2(\bm{x}_t, \bm{w}), \tilde{u}_t^2(\bm{x}_t, \bm{w})\right\} + \text{STR}_{0, t}^2(\bm{x}_t, \bm{w}),
    \end{equation}
   where $\text{STR}_{0, t}(\bm{x}_t, \bm{w}) = \max\left\{
    0, \min\left(\tilde{u}_t(\bm{x}_t, \bm{w}), -\tilde{l}_t(\bm{x}_t, \bm{w})\right)\right\}$. 
    Moreover, we define $\tilde{\mu}_{t-1}(\bm{x}, \bm{w})$ and $\tilde{\sigma}_{t-1}(\bm{x}, \bm{w})$
     as 
    \begin{align*}
        \tilde{\mu}_{t-1}(\bm{x}, \bm{w}) =
        \mu_{t-1}(\bm{x}, \bm{w}) - \mathbb{E}_{\bm{w}}[\mu_{t-1}(\bm{x}, \bm{w})], \\
        \tilde{\sigma}_{t-1}(\bm{x}, \bm{w}) =
        \sigma_{t-1}(\bm{x}, \bm{w}) + \mathbb{E}_{\bm{w}}[\sigma_{t-1}(\bm{x}, \bm{w})].
    \end{align*}
    Then, $\tilde{l}_t(\bm{x}, \bm{w})$ and $\tilde{u}_t(\bm{x}, \bm{w})$ can be expressed as follows:
    \begin{align*}
        \tilde{l}_t(\bm{x}, \bm{w}) = \tilde{\mu}_{t-1}(\bm{x}, \bm{w}) - \beta_t^{1/2} \tilde{\sigma}_{t-1}(\bm{x}, \bm{w}), \\
        \tilde{u}_t(\bm{x}, \bm{w}) =  \tilde{\mu}_{t-1}(\bm{x}, \bm{w}) + \beta_t^{1/2} \tilde{\sigma}_{t-1}(\bm{x}, \bm{w}).
    \end{align*}
    Here, if $\tilde{l}_t^2(\bm{x}_t, \bm{w}) \leq \tilde{u}_t^2(\bm{x}_t, \bm{w})$, then we have 
     $\tilde{\mu}_{t-1}(\bm{x}_t, \bm{w}) \geq 0$ and 
    \begin{align*}
        &\quad \max\left\{\tilde{l}_t^2(\bm{x}_t, \bm{w}), \tilde{u}_t^2(\bm{x}_t, \bm{w})\right\}
        - \min\left\{\tilde{l}_t^2(\bm{x}_t, \bm{w}), \tilde{u}_t^2(\bm{x}_t, \bm{w})\right\} \\
        &= \left\{ \tilde{\mu}_{t-1}(\bm{x}_t, \bm{w}) + \beta_t^{1/2} \tilde{\sigma}_{t-1}(\bm{x}_t, \bm{w}) \right\}^2
        - \left\{ \tilde{\mu}_{t-1}(\bm{x}_t, \bm{w}) - \beta_t^{1/2} \tilde{\sigma}_{t-1}(\bm{x}_t, \bm{w}) \right\}^2 \\
        &= 4 \beta_t^{1/2} \tilde{\mu}_{t-1}(\bm{x}_t, \bm{w}) \tilde{\sigma}_{t-1}(\bm{x}_t, \bm{w}) \\
        &= 4 \beta_t^{1/2} |\tilde{\mu}_{t-1}(\bm{x}_t, \bm{w})| \tilde{\sigma}_{t-1}(\bm{x}_t, \bm{w}).
    \end{align*}
    On the other hand, if 
     $\tilde{l}_t^2(\bm{x}_t, \bm{w}) > \tilde{u}_t^2(\bm{x}_t, \bm{w})$, then we get 
     $\tilde{\mu}_{t-1}(\bm{x}_t, \bm{w}) < 0$ and 
    \begin{align*}
        &\quad \max\left\{\tilde{l}_t^2(\bm{x}_t, \bm{w}), \tilde{u}_t^2(\bm{x}_t, \bm{w})\right\}
        - \min\left\{\tilde{l}_t^2(\bm{x}_t, \bm{w}), \tilde{u}_t^2(\bm{x}_t, \bm{w})\right\} \\
        &= \left\{ \tilde{\mu}_{t-1}(\bm{x}_t, \bm{w}) - \beta_t^{1/2} \tilde{\sigma}_{t-1}(\bm{x}_t, \bm{w}) \right\}^2
        - \left\{ \tilde{\mu}_{t-1}(\bm{x}_t, \bm{w}) + \beta_t^{1/2} \tilde{\sigma}_{t-1}(\bm{x}_t, \bm{w}) \right\}^2 \\
        &= -4 \beta_t^{1/2} \tilde{\mu}_{t-1}(\bm{x}_t, \bm{w}) \tilde{\sigma}_{t-1}(\bm{x}_t, \bm{w}) \\
        &= 4 \beta_t^{1/2} |\tilde{\mu}_{t-1}(\bm{x}_t, \bm{w})| \tilde{\sigma}_{t-1}(\bm{x}_t, \bm{w}).
    \end{align*}
    Therefore, in all cases the following equality holds:
    \begin{equation*}
        \max\left\{\tilde{l}_t^2(\bm{x}_t, \bm{w}), \tilde{u}_t^2(\bm{x}_t, \bm{w})\right\}
        - \min\left\{\tilde{l}_t^2(\bm{x}_t, \bm{w}), \tilde{u}_t^2(\bm{x}_t, \bm{w})\right\} = 4 \beta_t^{1/2} |\tilde{\mu}_{t-1}(\bm{x}_t, \bm{w})| \tilde{\sigma}_{t-1}(\bm{x}_t, \bm{w}).
    \end{equation*}
Next, since  \eqref{eq:f_cred_cond} holds, we get  
    $f(\bm{x}, \bm{w}) - \mathbb{E}_{\bm{w}}[f(\bm{x}_t, \bm{w})] \in [\tilde{l}_t(\bm{x}, \bm{w}), \tilde{u}_t(\bm{x}, \bm{w})]$. 
    This implies that 
    \begin{equation*}
        |f(\bm{x}, \bm{w}) - \mathbb{E}_{\bm{w}}[f(\bm{x}_t, \bm{w})]
        - \tilde{\mu}_{t-1}(\bm{x}, \bm{w})|
        \leq \beta_t^{1/2} \tilde{\sigma}(\bm{x}, \bm{w}).
    \end{equation*}
   Hence, we have
    \begin{align*}
        &\quad |f(\bm{x}, \bm{w}) - \mathbb{E}_{\bm{w}}[f(\bm{x}_t, \bm{w})]
        - \tilde{\mu}_{t-1}(\bm{x}, \bm{w})|
        \leq \beta_t^{1/2} \tilde{\sigma}_{t-1}(\bm{x}, \bm{w}) \\
        &\Rightarrow |\tilde{\mu}_{t-1}(\bm{x}, \bm{w})| \leq
        |f(\bm{x}, \bm{w}) - \mathbb{E}_{\bm{w}}[f(\bm{x}_t, \bm{w})]| +
        \beta_t^{1/2} \tilde{\sigma}_{t-1}(\bm{x}, \bm{w}) \\
        &\Rightarrow |\tilde{\mu}_{t-1}(\bm{x}, \bm{w})| \leq
        \tilde{B} +
        \beta_t^{1/2} \tilde{\sigma}_{t-1}(\bm{x}, \bm{w}).
    \end{align*}
    Thus, the following inequality holds:
    \begin{align}
        &\quad\max\left\{\tilde{l}_t^2(\bm{x}_t, \bm{w}), \tilde{u}_t^2(\bm{x}_t, \bm{w})\right\}
        - \min\left\{\tilde{l}_t^2(\bm{x}_t, \bm{w}), \tilde{u}_t^2(\bm{x}_t, \bm{w})\right\} \nonumber \\
        &\leq 4\beta_t^{1/2} \tilde{\sigma}_{t-1}(\bm{x}_t, \bm{w}) \left\{\tilde{B} +
        \beta_t^{1/2}\tilde{\sigma}_{t-1}(\bm{x}_t, \bm{w})\right\} \nonumber \\
        \label{eq:max_min_bound}
        &= 4\tilde{B} \beta_t^{1/2} \tilde{\sigma}_{t-1}(\bm{x}_t, \bm{w})
        + 4 \beta_t \tilde{\sigma}_{t-1}^2(\bm{x}_t, \bm{w}).
    \end{align}
    Moreover, $\text{STR}_{0, t}(\bm{x}_t, \bm{w})$ can be bounded as 
    \begin{align}
        \text{STR}_{0, t}(\bm{x}_t, \bm{w})
        &\leq \frac{\tilde{u}_t(\bm{x}_t, \bm{w}) - \tilde{l}_t(\bm{x}_t, \bm{w})}{2} \nonumber \\
        \label{eq:str_bound}
        &= \beta_t^{1/2}\tilde{\sigma}_{t-1}(\bm{x}_t, \bm{w}).
    \end{align}
    Hence, from  (\ref{eq:sq_bound}), (\ref{eq:max_min_bound}) and 
     (\ref{eq:str_bound}), we obtain 
    \begin{equation*}
        u_t^{(\text{sq})}(\bm{x}_t, \bm{w}) - l_t^{(\text{sq})}(\bm{x}_t, \bm{w}) \leq 4\tilde{B}
        \beta_t^{1/2} \tilde{\sigma}_{t-1}(\bm{x}_t, \bm{w}) +
        5 \beta_t \tilde{\sigma}_{t-1}^2(\bm{x}_t, \bm{w})
    \end{equation*}
    and 
    \begin{align*}
        &\quad \int_{\Omega} \left\{ u_t^{(\text{sq})}(\bm{x}_t, \bm{w}) - l_t^{(\text{sq})}(\bm{x}_t, \bm{w}) \right\} p(\bm{w}) \text{d}\bm{w} \\
        &\leq 4\tilde{B}
         \beta_t^{1/2}\int_{\Omega} \tilde{\sigma}_{t-1}(\bm{x}_t, \bm{w}) p(\bm{w})
         \text{d}\bm{w} + 5\beta_t \int_{\Omega} \tilde{\sigma}_{t-1}^2(\bm{x}_t, \bm{w}) p(\bm{w})
          \text{d}\bm{w}.
    \end{align*}
     In addition, from the definition of  $\tilde{\sigma}_{t-1}(\bm{x}_t, \bm{w})$, the following holds:
    \begin{align*}
        \int_{\Omega} \tilde{\sigma}_{t-1}(\bm{x}_t, \bm{w}) p(\bm{w})
         \text{d}\bm{w}
         &= \mathbb{E}_{\bm{w}}[\sigma_{t-1}(\bm{x}_t, \bm{w})]
         + \int_{\Omega} \sigma_{t-1}(\bm{x}_t, \bm{w}) p(\bm{w}) \text{d}\bm{w} \\
         &= 2\int_{\Omega} \sigma_{t-1}(\bm{x}_t, \bm{w}) p(\bm{w})
         \text{d}\bm{w}, \\
         \int_{\Omega} \tilde{\sigma}_{t-1}^2(\bm{x}_t, \bm{w}) p(\bm{w})
          \text{d}\bm{w}
          &= \int_{\Omega} \sigma_{t-1}^2(\bm{x}_t, \bm{w}) p(\bm{w})
          \text{d}\bm{w}
          + 2\mathbb{E}_{\bm{w}}[\sigma_{t-1}(\bm{x}_t, \bm{w})]
          \int_{\Omega} \sigma_{t-1}(\bm{x}_t, \bm{w}) p(\bm{w}) \text{d}\bm{w}
          + \left\{\mathbb{E}_{\bm{w}}[\sigma_{t-1}(\bm{x}_t, \bm{w})]\right\}^2 \\
          &= \int_{\Omega} \sigma_{t-1}^2(\bm{x}_t, \bm{w}) p(\bm{w})
          \text{d}\bm{w}
          + 3\left\{\int_{\Omega} \sigma_{t-1}(\bm{x}_t, \bm{w})p(\bm{w})
          \text{d}\bm{w} \right\}^2 \\
          &\leq 4\int_{\Omega} \sigma_{t-1}^2(\bm{x}_t, \bm{w}) p(\bm{w})
          \text{d}\bm{w}.
    \end{align*}
    Here, the last inequality is obtained by using Jensen's inequality and convexity of $g(x)=x^2$. 
Therefore, we have
    \begin{align}
        &\quad\int_{\Omega} \left\{ u_t^{(\text{sq})}(\bm{x}_t, \bm{w}) - l_t^{(\text{sq})}(\bm{x}_t, \bm{w}) \right\} p(\bm{w}) \text{d}\bm{w} \nonumber \\
        \label{eq:F2_sq_cred_bound}
        &\leq 8\tilde{B} \beta_t^{1/2}
        \int_{\Omega} \sigma_{t-1}(\bm{x}_t, \bm{w}) p(\bm{w}) \text{d}\bm{w}
         + 20 \beta_t \int_{\Omega} \sigma_{t-1}^2(\bm{x}_t, \bm{w}) p(\bm{w})
          \text{d}\bm{w}.
    \end{align}
Thus, by using (\ref{eq:F2_sq_cred_bound}) and Schwartz's inequality for 
      (\ref{eq:F2_cred_bound}), we get 
    \begin{align}
        &\quad\sum_{t=1}^T\left\{u_t^{(F_2)}(\bm{x}_t) - l_t^{(F_2)}(\bm{x}_t)\right\} \nonumber \\
        \label{eq:F2_cred_sum}
        &\leq \sqrt{8T\tilde{B} \beta_T^{1/2}
        \sum_{t=1}^T\int_{\Omega} \sigma_{t-1}(\bm{x}_t, \bm{w}) p(\bm{w}) \text{d}\bm{w}
         + 20 T\beta_T \sum_{t=1}^T\int_{\Omega} \sigma_{t-1}^2(\bm{x}_t, \bm{w}) p(\bm{w})
          \text{d}\bm{w}}.
    \end{align}
    Therefore, from  (\ref{eq:cred_G_bound}), (\ref{eq:F1_ineq_int}) and (\ref{eq:F2_cred_sum}), 
    we have the desired inequality.
\end{proof}

Next, in order to evaluate $\sum_{t=1}^T\int_{\Omega}\sigma_{t-1}(\bm{x}_t, \bm{w}) \text{d}\bm{w}$ and 
 $\sum_{t=1}^T\int_{\Omega}\sigma_{t-1}^2(\bm{x}_t, \bm{w}) \text{d}\bm{w}$ in the right hand side of 
the inequality of Lemma  \ref{lem:R_bound}, 
we introduce the following lemma given by \cite{kirschner18heteroscedastic}:
\begin{lemma}
    \label{lem:mean_bound}
Let 
    $S_t$ be any non-negative stochastic process adapted to a filtration $\{\mathcal{F}_t\}$, and define 
      $m_t = E[S_t \mid \mathcal{F}_{t-1}]$. 
Assume that $S_t \leq K$ for 
 $K \geq 1$. 
Then, for any $T \geq 1$, the following holds with probability at least $1-\delta$:
    \begin{align*}
        \sum_{t=1}^T m_t \leq 2 \sum_{t=1}^T S_t + 8K \ln \frac{6K}{\delta}.
    \end{align*}
\end{lemma}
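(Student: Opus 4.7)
The plan is to realize $\{m_t - S_t\}$ as a martingale difference sequence and apply a Bernstein-type (Freedman) concentration inequality, then exploit the $S_t \le K$ assumption to self-bound the predictable variation in terms of $\sum_t m_t$ itself. This kind of ``self-normalized'' bound is standard in the bandit/BO literature and is precisely what turns a $\sqrt{V_T}$-type deviation into an additive one.

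First I would set $X_t := m_t - S_t$, which is $\mathcal{F}_t$-measurable with $\mathbb{E}[X_t \mid \mathcal{F}_{t-1}] = 0$ and $|X_t| \le K$. The crucial observation is that the conditional second moment is controlled by $m_t$ itself:
\[
\mathbb{E}[X_t^2 \mid \mathcal{F}_{t-1}] \le \mathbb{E}[S_t^2 \mid \mathcal{F}_{t-1}] \le K \, \mathbb{E}[S_t \mid \mathcal{F}_{t-1}] = K m_t,
\]
so the predictable quadratic variation $V_T := \sum_{t=1}^T \mathbb{E}[X_t^2 \mid \mathcal{F}_{t-1}]$ satisfies $V_T \le K \sum_{t=1}^T m_t$. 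Next I would invoke Freedman's inequality, applied in a time-uniform form obtained by a dyadic peeling over the possible values of $V_T$ (layers $\{V_T \in [K 2^{i-1}, K 2^i)\}$), to conclude that, with probability at least $1-\delta$,
\[
\sum_{t=1}^T X_t \; \le \; \sqrt{2 V_T \ln(6K/\delta)} + \tfrac{K}{3}\ln(6K/\delta).
\]
The $O(\log K)$ peeling layers are what produce the factor $6K$ inside the logarithm in the statement.

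Finally, substituting $V_T \le K \sum_t m_t$ and using $\sum_t m_t - \sum_t S_t = \sum_t X_t$, I would obtain a quadratic inequality in $M := \sum_t m_t$:
\[
M - \textstyle\sum_t S_t \;\le\; \sqrt{2K L \cdot M} + \tfrac{K}{3} L, \qquad L := \ln(6K/\delta).
\]
Applying the AM--GM estimate $\sqrt{2KL\cdot M} \le M/2 + KL$ linearises the square root and gives $M/2 \le \sum_t S_t + \tfrac{4K}{3}L$; multiplying by two and coarsening $8/3 \le 8$ yields the claimed bound.

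The main obstacle is making the concentration step time-uniform with the stated constants. A naive union bound over $t \le T$ would introduce an unwanted $\ln T$ factor, so the peeling argument over dyadic scales of the self-bounded predictable variation — together with carefully tracking constants through the AM--GM self-bounding step — is the bulk of the technical work. Everything else (the martingale identification and the conditional-variance bound) is essentially bookkeeping once the $S_t \le K$ hypothesis is invoked.
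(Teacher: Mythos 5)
First, note that the paper itself does not prove this lemma: it is imported verbatim from Kirschner and Krause \cite{kirschner18heteroscedastic}, so there is no in-paper proof to compare against and your attempt must stand on its own. Judged on its own terms, it has a genuine gap at its central step. The martingale identification, the bound $|X_t|\le K$, the conditional-variance estimate $\mathbb{E}[X_t^2\mid\mathcal{F}_{t-1}]\le K m_t$, and the closing AM--GM arithmetic (which indeed gives $M\le 2\sum_t S_t+\tfrac{8}{3}KL\le 2\sum_t S_t+8KL$) are all fine. What is never established is the claimed concentration inequality $\sum_t X_t\le\sqrt{2V_T\ln(6K/\delta)}+\tfrac{K}{3}\ln(6K/\delta)$ with probability $1-\delta$. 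Freedman's inequality controls $P(\sum_t X_t\ge a,\ V_T\le v)$ only for a \emph{fixed} $v$, while $V_T$ is random with range $[0,K^2T]$; dyadic peeling over that range requires $O(\log(K^2T))$ layers, not the $O(\log K)$ you assert, so the union bound places a $T$-dependent quantity inside the logarithm --- which the stated lemma does not have. The claim that the peeling ``produces the factor $6K$'' is asserted rather than derived, and as described it is false; since you yourself identify this step as ``the bulk of the technical work,'' the proof is incomplete precisely where it matters.

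The gap is repairable, but not by the route you sketch. One option is to exploit the self-bounding $V_T\le K\sum_t m_t$ \emph{inside} the peeling, layering over the value of $M=\sum_t m_t$ so that on high-variance layers the deviation threshold grows proportionally and the per-layer failure probabilities decay geometrically, making the union bound $T$-free; this works but is delicate and will not reproduce your stated constants without real effort. The far simpler route --- and essentially the one behind the cited result --- avoids Freedman entirely: for $0\le S_t\le K$ and $0\le\lambda\le 1/K$, the elementary bound $e^{-x}\le 1-x+x^2/2$ for $x\ge 0$ gives $\mathbb{E}[e^{\lambda(m_t-S_t)}\mid\mathcal{F}_{t-1}]\le e^{\lambda^2 K m_t/2}$, so $\exp\bigl(\lambda\sum_{t\le T}(m_t-S_t)-\tfrac{\lambda^2K}{2}\sum_{t\le T}m_t\bigr)$ is a supermartingale with initial value $1$; Markov's (or Ville's) inequality with $\lambda=1/K$ then yields $\sum_t m_t\le 2\sum_t S_t+2K\ln(1/\delta)$ in three lines, which is stronger than the claimed bound and needs no peeling at all.
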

Furthermore, from the assumption about the kernel function, we get 
 $k((\bm{x}_t, \bm{w}), (\bm{x}_t, \bm{w})) \leq 1$ and 
$\sigma_{t-1}(\bm{x}_t, \bm{w}) \leq k((\bm{x}_t, \bm{w}), (\bm{x}_t, \bm{w})) \leq  1$. Hence, from 
Lemma \ref{lem:mean_bound}, with probability at least $1-\delta/3$, it holds that 
\begin{equation}\label{eq:sigma_int}
        \sum_{t=1}^T \int_{\Omega} \sigma_{t-1}(\bm{x}_t, \bm{w})
        p(\bm{w}) \text{d}\bm{w} \leq 2 \sum_{t=1}^T
        \sigma_{t-1}(\bm{x}_t, \bm{w}_t) + 8 \ln \frac{18}{\delta}.
\end{equation}
Similarly, the following inequality holds with probability at least  $1-\delta/3$:
\begin{equation}\label{eq:sigma_int_sq}
        \sum_{t=1}^T \int_{\Omega} \sigma_{t-1}^2(\bm{x}_t, \bm{w})
        p(\bm{w}) \text{d}\bm{w} \leq 2 \sum_{t=1}^T
        \sigma_{t-1}^2(\bm{x}_t, \bm{w}_t) + 8 \ln \frac{18}{\delta}.
\end{equation}

In addition, we introduce the following lemma given by \cite{DBLP:conf/icml/SrinivasKKS10} about the 
 maximum information gain $\gamma_T$:
\begin{lemma}
    \label{lem:gamma_T}
    Fix $T \geq 1$.  
Then, the following inequality holds:
    \begin{equation}
        \label{eq:sigma_sum}
        \sum_{t=1}^T \sigma_{t-1}^2(\bm{x}_t, \bm{w}_t)
        \leq \frac{2}{\ln(1 + \sigma^{-2})}\gamma_T.
    \end{equation}
\end{lemma}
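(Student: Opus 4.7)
The plan is to follow the standard argument from Srinivas et al.~\cite{DBLP:conf/icml/SrinivasKKS10} that converts a sum of posterior variances into an information-gain quantity, with the small twist that the inputs here live in the product space $\mathcal{X}\times\Omega$ rather than a single search space. I would split the proof into three self-contained steps: (i) a closed-form expression for the information gain of the observed sequence, (ii) a scalar inequality turning variances into logarithms, and (iii) the conclusion via the definition of $\gamma_T$.

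First, I would fix the particular sequence of queries $A_T=\{(\bm{x}_1,\bm{w}_1),\ldots,(\bm{x}_T,\bm{w}_T)\}$ and compute the mutual information $I(\bm{y}_T;f)$ using the chain rule and the fact that $y_t\mid f,\bm{x}_t,\bm{w}_t\sim\mathcal{N}(f(\bm{x}_t,\bm{w}_t),\sigma^2)$. Since the predictive distribution of $y_t$ given $\bm{y}_{t-1}$ is Gaussian with variance $\sigma^2+\sigma_{t-1}^2(\bm{x}_t,\bm{w}_t)$ while its conditional variance given $f$ is just $\sigma^2$, the standard Gaussian entropy formula yields
\begin{equation*}
I(\bm{y}_T;f)=\tfrac{1}{2}\sum_{t=1}^{T}\ln\!\left(1+\sigma^{-2}\sigma_{t-1}^{2}(\bm{x}_t,\bm{w}_t)\right).
\end{equation*}
By the definition of the maximum information gain and because $A_T\subset\mathcal{X}\times\Omega$ has cardinality $T$, the left-hand side is bounded above by $\gamma_T$.

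Second, I would invoke an elementary scalar inequality: for any $u\in[0,C]$ with $C>0$, concavity of $\ln(1+\cdot)$ and the fact that $\ln(1+u)/u$ is decreasing give $u\leq \frac{C}{\ln(1+C)}\ln(1+u)$. Applying this with $u=\sigma^{-2}\sigma_{t-1}^{2}(\bm{x}_t,\bm{w}_t)$ and $C=\sigma^{-2}$ is legitimate because the normalization assumption on the kernel ensures $\sigma_{t-1}^{2}(\bm{x}_t,\bm{w}_t)\leq k((\bm{x}_t,\bm{w}_t),(\bm{x}_t,\bm{w}_t))\leq 1$. Multiplying both sides by $\sigma^{2}$ and re-arranging yields
\begin{equation*}
\sigma_{t-1}^{2}(\bm{x}_t,\bm{w}_t)\leq\frac{1}{\ln(1+\sigma^{-2})}\ln\!\left(1+\sigma^{-2}\sigma_{t-1}^{2}(\bm{x}_t,\bm{w}_t)\right).
\end{equation*}

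Finally, I would sum this inequality over $t=1,\ldots,T$ and recognize the right-hand side as $\frac{2}{\ln(1+\sigma^{-2})}I(\bm{y}_T;f)$ by the formula derived in the first step, and then bound $I(\bm{y}_T;f)\leq\gamma_T$ by the definition of the maximum information gain. Combining gives exactly the claimed bound. There is no serious obstacle in this proof: the only point that requires care is verifying the range condition $u\in[0,\sigma^{-2}]$ for the scalar inequality, which hinges on the kernel being bounded by one on the diagonal; if the kernel only satisfies $k\leq\kappa$ for some $\kappa>0$, the same argument still goes through after replacing the constant by $\kappa/\ln(1+\sigma^{-2}\kappa)$.
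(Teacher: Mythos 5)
Your proof is correct and is precisely the standard argument from Srinivas et al.\ that the paper itself invokes by citation for this lemma (the paper gives no independent proof). The chain-rule computation of $I(\bm{y}_T;f)$, the scalar inequality $u\leq \frac{C}{\ln(1+C)}\ln(1+u)$ justified by $\sigma_{t-1}^2\leq k\leq 1$, and the final bound via the definition of $\gamma_T$ all match the intended derivation.
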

Moreover, from Schwarz's inequality and Lemma  
\ref{lem:gamma_T}, we get the following inequality:
\begin{equation}
    \label{eq:sigma_sq_sum}
    \sum_{t=1}^T \sigma_{t-1}(\bm{x}_t, \bm{w}_t)
    \leq \sqrt{\frac{2T}{\ln(1 + \sigma^{-2})}\gamma_T}.
\end{equation}
Thus, from 
(\ref{eq:sigma_int}), (\ref{eq:sigma_int_sq}), (\ref{eq:sigma_sum}) and 
(\ref{eq:sigma_sq_sum}) we obtain the following corollary:
\begin{corollary}
    \label{cor:R_bound_final}
    Assume that (\ref{eq:f_cred_cond}), (\ref{eq:sigma_int}) and (\ref{eq:sigma_int_sq}) hold. Then, for any 
    $T \geq 1$, it holds that 
    \begin{align*}
        &\sum_{t=1}^T\left\{ u_t^{(G)}(\bm{x}_t) - l_t^{(G)}(\bm{x}_t)\right\}
        \leq \alpha \beta_T^{1/2}\left\{ \sqrt{2TC_1\gamma_T} + C_2\right\} \\
        &\hspace{20pt} +(1 - \alpha) \sqrt{2T\tilde{B} \beta_T^{1/2}
        \left\{ \sqrt{8TC_1\gamma_T} + 2C_2 \right\}
         + 5 T\beta_T \left\{C_1\gamma_T + 2C_2 \right\}},
    \end{align*}
    where $C_1=\frac{16}{\ln(1 + \sigma^{-2})}$ and $C_2=16\ln \frac{18}{\delta}$.
\end{corollary}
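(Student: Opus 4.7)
The plan is to derive the corollary by substituting the conditional bounds directly into Lemma \ref{lem:R_bound}. Under the assumption (\ref{eq:f_cred_cond}), Lemma \ref{lem:R_bound} already gives a bound on $\sum_{t=1}^T\{u_t^{(G)}(\bm{x}_t) - l_t^{(G)}(\bm{x}_t)\}$ in terms of the two integrated posterior-standard-deviation sums $A_T \coloneqq \sum_{t=1}^T\int_\Omega \sigma_{t-1}(\bm{x}_t,\bm{w})p(\bm{w})\text{d}\bm{w}$ and $B_T \coloneqq \sum_{t=1}^T\int_\Omega \sigma_{t-1}^2(\bm{x}_t,\bm{w})p(\bm{w})\text{d}\bm{w}$, so the only remaining task is to control these two quantities by the maximum information gain $\gamma_T$.

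First I would invoke the two assumed events (\ref{eq:sigma_int}) and (\ref{eq:sigma_int_sq}) to replace the integrals over $\bm{w}$ by sums of the posterior (co)variance at the realized pairs $(\bm{x}_t,\bm{w}_t)$, at the cost of an additive $8\ln(18/\delta)$ slack in each case. Then I would apply Lemma \ref{lem:gamma_T} together with the Cauchy--Schwarz consequence (\ref{eq:sigma_sq_sum}) to obtain $\sum_{t=1}^T\sigma_{t-1}^2(\bm{x}_t,\bm{w}_t) \leq 2\gamma_T/\ln(1+\sigma^{-2})$ and $\sum_{t=1}^T\sigma_{t-1}(\bm{x}_t,\bm{w}_t) \leq \sqrt{2T\gamma_T/\ln(1+\sigma^{-2})}$. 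Chaining these inequalities yields explicit upper bounds on $A_T$ and $B_T$ in terms of $T$ and $\gamma_T$ only.

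Finally, I would substitute these bounds into Lemma \ref{lem:R_bound} and collect constants using $C_1 = 16/\ln(1+\sigma^{-2})$ and $C_2 = 16\ln(18/\delta)$. The arithmetic aligns cleanly: the leading $\alpha$-term $2\alpha\beta_T^{1/2}A_T$ collapses to $\alpha\beta_T^{1/2}(\sqrt{2TC_1\gamma_T} + C_2)$ (noting that $4\sqrt{2T\gamma_T/\ln(1+\sigma^{-2})} = \sqrt{2TC_1\gamma_T}$ and $16\ln(18/\delta) = C_2$), while inside the square root the mixed term $8T\tilde{B}\beta_T^{1/2}A_T$ becomes $2T\tilde{B}\beta_T^{1/2}(\sqrt{8TC_1\gamma_T} + 2C_2)$ and the pure-variance term $20T\beta_T B_T$ becomes $5T\beta_T(C_1\gamma_T + 2C_2)$. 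There is no substantive obstacle here: the genuinely nontrivial work—handling the non-GP nature of $F_2$, squaring the confidence intervals, and producing the $\tilde{B}$ factor from the cross term—is already absorbed inside Lemma \ref{lem:R_bound}, so the corollary reduces to bookkeeping of numerical constants, with the only point requiring care being to track exactly which numerical factors from Lemma \ref{lem:R_bound} are pulled inside which square root.
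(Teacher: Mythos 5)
Your proposal is correct and follows exactly the paper's own route: apply Lemma \ref{lem:R_bound}, use (\ref{eq:sigma_int}) and (\ref{eq:sigma_int_sq}) to pass from the integrated posterior deviations to the realized sums, then invoke Lemma \ref{lem:gamma_T} and the Cauchy--Schwarz bound (\ref{eq:sigma_sq_sum}) and collect constants into $C_1$ and $C_2$. The constant bookkeeping you describe (e.g.\ $4\sqrt{2T\gamma_T/\ln(1+\sigma^{-2})}=\sqrt{2TC_1\gamma_T}$ and $40(S'_T+4\ln\frac{18}{\delta})\leq 5(C_1\gamma_T+2C_2)$) checks out against the paper's intermediate display (\ref{eq:R_bound_nonint}).
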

\begin{proof}
    From Lemma \ref{lem:R_bound}, 
     (\ref{eq:sigma_int}) and (\ref{eq:sigma_int_sq}), it holds that 
    \begin{align}
        &\sum_{t=1}^T\left\{ u_t^{(G)}(\bm{x}_t) - l_t^{(G)}(\bm{x}_t)\right\}
        \leq 4\alpha \beta_T^{1/2}\left\{ \sum_{t=1}^T \sigma_{t-1}(\bm{x}_t, \bm{w}_t) + 4\ln \frac{18}{\delta}\right\} \nonumber \\
        \label{eq:R_bound_nonint}
        &\hspace{20pt} + (1 - \alpha) \sqrt{16T\tilde{B} \beta_T^{1/2}
        \left\{ \sum_{t=1}^T \sigma_{t-1}(\bm{x}_t, \bm{w}_t) + 4\ln \frac{18}{\delta} \right\}
         + 40 T\beta_T \left\{\sum_{t=1}^T\sigma_{t-1}^2(\bm{x}_t, \bm{w}_t) + 4\ln \frac{18}{\delta} \right\}}.
    \end{align}
    Therefore, by combining 
     (\ref{eq:sigma_sum}), (\ref{eq:sigma_sq_sum}) and 
     (\ref{eq:R_bound_nonint}), we get the desired inequality.
\end{proof}
Finally, we prove Theorem 
 \ref{thm:sca_conv}. 
Let  $T\geq 1$, and define  
$\hat{T} = {\rm argmax}_{t = 1,\ldots,T} l_t^{(G)}(\bm{x}_t)$. 
Assume that 
 (\ref{eq:f_cred_cond}) holds. 
Then, for any $\bm{x} \in \mathcal{X}$, it holds that 
 $G(\bm{x}) \in [l_t^{(G)}(\bm{x}), u_t^{(G)}(\bm{x})]$. Thus, for any  $t^\prime = 1, \ldots, T$, we get 
\begin{align*}
    G(\bm{x}^\ast) - G(\hat{\bm{x}}_T)
    &\leq u_{{t}^\prime}^{(G)}(\bm{x}_{t^\prime})
    - l_{\hat{T}}^{(G)}(\hat{\bm{x}}_T) \\
    &= u_{{t}^\prime}^{(G)}(\bm{x}_{t^\prime}) -
    \max_{t = 1, \ldots, T} l_t^{(G)}(\hat{\bm{x}}_t) \\
    &\leq u_{t^\prime}^{(G)}(\bm{x}_{t^\prime}) - l_{t^\prime}^{(G)}(\bm{x}_{t^\prime}).
\end{align*}
 This implies that 
\begin{equation}
    \label{eq:regret_bound}
    G(\bm{x}^\ast) - G(\hat{\bm{x}}_T) \leq
    \frac{1}{T}\sum_{t=1}^T \left\{ u_{t}^{(G)}(\bm{x}_{t}) - l_{t}^{(G)}(\bm{x}_{t}) \right\}.
\end{equation}
Here, note that with probability at least $1-\delta$, 
 (\ref{eq:f_cred_cond}), (\ref{eq:sigma_int}) and  (\ref{eq:sigma_int_sq}) hold. 
Therefore, by combining Corollary 
 \ref{cor:R_bound_final}, the following holds with probability at least 
 $1-\delta$:
\begin{align*}
    G(\bm{x}^\ast) - G(\hat{\bm{x}}_T) \leq
    \alpha T^{-1} \beta_T^{1/2}\left( \sqrt{2TC_1\gamma_T} + C_2\right) + (1 - \alpha)T^{-1} \sqrt{2T\tilde{B} \beta_T^{1/2} \left( \sqrt{8TC_1\gamma_T} + 2C_2 \right)
     + 5 T\beta_T \left(C_1\gamma_T + 2C_2 \right)}.
\end{align*}
Hence, if $T$ satisfies  \eqref{eq:sca_conv_rate}, with probability at least 
 $1-\delta$, it holds that 
$G(\bm{x}^\ast) - G(\hat{\bm{x}}_T) \leq \epsilon$. Therefore, 
  $\hat{\bm{x}}_T$ is the $\epsilon$-accurate solution.

\subsection{Proof of Theorem \ref{thm:par_conv}}
In this subsection, we prove Theorem \ref{thm:par_conv}. 
First, we show several lemmas.
\begin{lemma}\label{lem:existence_pihat}
For any $t \geq 1$, $\hat{\Pi}_t$ has at least one element (i.e.,  $\hat{\Pi}_t \neq \emptyset$).
\end{lemma}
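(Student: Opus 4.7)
\medskip

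The plan is to exhibit an explicit element of $\hat{\Pi}_t$. The key observation is that $\hat{\Pi}_t$ is just the Pareto set of the finite family of points $\{\bm{F}_t^{(\text{pes})}(\bm{x}) : \bm{x} \in \mathcal{X}\} \subset \mathbb{R}^2$ under the partial order $\preceq$, and any maximizer of a strictly monotone scalarization of this family must be Pareto-optimal.

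Concretely, since $\mathcal{X}$ is finite and nonempty, I can pick
\begin{equation*}
    \bm{x}^\star \in \mathop{\rm arg~max}\limits_{\bm{x} \in \mathcal{X}} \left[ l_t^{(F_1)}(\bm{x}) + l_t^{(F_2)}(\bm{x}) \right].
\end{equation*}
I will then verify that $\bm{x}^\star \in \hat{\Pi}_t$. Suppose, toward a contradiction, that there exists $\bm{x}^\prime \in E_{t,\bm{x}^\star}^{(\text{pes})}$ with $\bm{F}_t^{(\text{pes})}(\bm{x}^\star) \preceq \bm{F}_t^{(\text{pes})}(\bm{x}^\prime)$. Unfolding the definition of $\preceq$, this gives $l_t^{(F_i)}(\bm{x}^\star) \leq l_t^{(F_i)}(\bm{x}^\prime)$ for $i = 1, 2$, and summing yields $l_t^{(F_1)}(\bm{x}^\prime) + l_t^{(F_2)}(\bm{x}^\prime) \geq l_t^{(F_1)}(\bm{x}^\star) + l_t^{(F_2)}(\bm{x}^\star)$. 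By the maximality of $\bm{x}^\star$, this inequality must be an equality, which in turn forces both of the coordinatewise inequalities to be equalities. Hence $\bm{F}_t^{(\text{pes})}(\bm{x}^\star) = \bm{F}_t^{(\text{pes})}(\bm{x}^\prime)$, contradicting $\bm{x}^\prime \in E_{t,\bm{x}^\star}^{(\text{pes})}$. This contradiction establishes $\bm{x}^\star \in \hat{\Pi}_t$, so $\hat{\Pi}_t$ is nonempty.

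There is essentially no obstacle here: the argument only uses (i) finiteness of $\mathcal{X}$ to guarantee an argmax exists, and (ii) the fact that strict coordinatewise domination together with equality of the sum is impossible. The same proof works verbatim for any strictly increasing linear combination of the two pessimistic lower bounds.
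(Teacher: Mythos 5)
Your proof is correct. The only facts you use --- finiteness of $\mathcal{X}$ (so the argmax exists) and the impossibility of weak coordinatewise domination with a tied sum unless the two vectors coincide --- are both available, and the contradiction with $\bm{x}^\prime \in E_{t,\bm{x}^\star}^{(\text{pes})}$ is exactly the right way to close the argument. The vacuous case $E_{t,\bm{x}^\star}^{(\text{pes})} = \emptyset$ is handled automatically by your contradiction framing.

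The paper proves the same lemma by exhibiting a witness via a \emph{lexicographic} maximization rather than your additive scalarization: it first takes $\tilde{\bm x}_t \in \argmax_{\bm{x}} l_t^{(F_2)}(\bm{x})$ and then, among the maximizers of $l_t^{(F_2)}$, takes $\bm{x}_t^\dagger \in \argmax l_t^{(F_1)}(\bm{x})$, followed by a case split on whether $E^{(\text{pes})}_{t,\bm{x}_t^\dagger}$ is empty and on whether the $l_t^{(F_2)}$ values are tied. Both routes instantiate the same principle --- any maximizer of a selection rule that is strictly monotone with respect to $\preceq$ is Pareto optimal among a finite family --- so the difference is one of bookkeeping rather than substance. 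Your version is arguably tidier: a single argmax and a two-line equality-forcing argument replace the paper's nested argmax and case analysis, and as you note it generalizes verbatim to any strictly positive linear combination of the two lower bounds (and indeed to any number of objectives). The lexicographic construction has the mild advantage of not requiring the coordinates to be summed, which would matter only if one wanted to avoid adding quantities on incommensurable scales; for the purposes of this existence lemma that is irrelevant.
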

\begin{proof}
Let $t \geq 1$.  We define $\tilde{\bm x}_t$ and ${\bm x}^\dagger _t$ as 
\begin{align*}
\tilde{\bm x}_t &= \argmax _{ {\bm x} \in \mathcal{X} } l^{(F_2)}_t ({\bm x} ), \\
{\bm x}^\dagger_t &= \argmax _{ {\bm x} \in \mathcal{X} ; l^{(F_2)}_t ({\bm x} )=l^{(F_2)}_t (\tilde{\bm x}_t )} l^{(F_1)}_t ({\bm x} ).
\end{align*} 
 Assume that $ E^{(\text{pes})}_{t,{\bm x}^\dagger_t}=\emptyset$. 
Then, it holds that 
$$
\forall {\bm x}^{\prime} \in   \emptyset = E^{(\text{pes})}_{t,{\bm x}^\dagger_t}, \ 
 \bm{F}_{t}^{(\text{pes})}(\bm{x}^\dagger_t) \npreceq \bm{F}_{t}^{(\text{pes})}(\bm{x}^{\prime }).
$$
This implies that ${\bm x}^\dagger_t \in \hat{\Pi} _t $. 

On the other hand, if $ E^{(\text{pes})}_{t,{\bm x}^\dagger_t} \neq \emptyset$, then the following holds for any 
${\bm x}^\prime \in E^{(\text{pes})}_{t,{\bm x}^\dagger_t}$:
$$
l^{(F_2)}_t ({\bm x}^\dagger_t ) =  l^{(F_2)}_t (\tilde{\bm x}_t )  \geq l^{(F_2)}_t ({\bm x}^\prime ) .
$$
Here, if $l^{(F_2)}_t ({\bm x}^\dagger_t )  >  l^{(F_2)}_t ({\bm x}^\prime ) $, it holds that 
$ \bm{F}_{t}^{(\text{pes})}(\bm{x}^\dagger_t) \npreceq \bm{F}_{t}^{(\text{pes})}(\bm{x}^{\prime })$. 
Similarly, if  $l^{(F_2)}_t ({\bm x}^\dagger_t )  =  l^{(F_2)}_t ({\bm x}^\prime ) $, it holds that 
$$
l^{(F_1)}_t ({\bm x}^\dagger_t )   \geq l^{(F_1)}_t ({\bm x}^\prime ) .
$$
Noting that ${\bm F}^{(\text{pes})}_t ({\bm x}^\dagger_t ) \neq {\bm F}^{(\text{pes})}_t ({\bm x}^\prime )$ and 
 $l^{(F_2)}_t ({\bm x}^\dagger_t )  =  l^{(F_2)}_t ({\bm x}^\prime ) $, we have 
 $l^{(F_1)}_t ({\bm x}^\dagger_t )  >  l^{(F_1)}_t ({\bm x}^\prime ) $. 
Thus, we have $ \bm{F}_{t}^{(\text{pes})}(\bm{x}^\dagger_t) \npreceq \bm{F}_{t}^{(\text{pes})}(\bm{x}^{\prime })$. 
Form the definition of $\hat{\Pi}_t$, we get ${\bm x}^\dagger_t \in \hat{\Pi} _t $. 
\end{proof}

\begin{lemma}\label{lem:existence_Mt}
Let $t \geq 1$, and assume that  $M_t  \neq \emptyset$. Also let 
${\bm x}^{(1)} $ be an element of $M_t$. Then, there exists an element ${\bm x}^\prime \in \hat{\Pi}_t $ such that 
$$
 \bm{F}_{t}^{(\text{pes})}(\bm{x}^{(1)}) \preceq \bm{F}_{t}^{(\text{pes})}(\bm{x}^{\prime }).
$$
\end{lemma}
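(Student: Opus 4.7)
The plan is to extend ${\bm x}^{(1)}$ to a Pareto-ascending chain in $\mathcal{X}$ (with respect to $\bm{F}_t^{(\text{pes})}$) that is forced by finiteness of $\mathcal{X}$ to terminate inside $\hat{\Pi}_t$. The key observation is that for any ${\bm y}^\prime \in E_{t,{\bm y}}^{(\text{pes})}$ with $\bm{F}_t^{(\text{pes})}({\bm y}) \preceq \bm{F}_t^{(\text{pes})}({\bm y}^\prime)$, the defining condition $\bm{F}_t^{(\text{pes})}({\bm y}) \neq \bm{F}_t^{(\text{pes})}({\bm y}^\prime)$ of $E_{t,{\bm y}}^{(\text{pes})}$ upgrades the inequality to the strict form $\bm{F}_t^{(\text{pes})}({\bm y}) \prec \bm{F}_t^{(\text{pes})}({\bm y}^\prime)$, so no chain of such extensions can revisit a point.

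Concretely, first I would note that ${\bm x}^{(1)} \in M_t$ implies ${\bm x}^{(1)} \in \mathcal{X} \setminus \hat{\Pi}_t$, and so by the defining condition of $\hat{\Pi}_t$ in (\ref{eq:est_par_set}) the universal statement fails: there exists ${\bm x}^{(2)} \in E_{t,{\bm x}^{(1)}}^{(\text{pes})}$ with $\bm{F}_t^{(\text{pes})}({\bm x}^{(1)}) \preceq \bm{F}_t^{(\text{pes})}({\bm x}^{(2)})$, and the above observation then yields $\bm{F}_t^{(\text{pes})}({\bm x}^{(1)}) \prec \bm{F}_t^{(\text{pes})}({\bm x}^{(2)})$. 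If ${\bm x}^{(2)} \in \hat{\Pi}_t$, we take ${\bm x}^\prime = {\bm x}^{(2)}$ and are done. Otherwise I iterate: having constructed ${\bm x}^{(k)} \notin \hat{\Pi}_t$, I use the same failure of the $\hat{\Pi}_t$-condition to produce ${\bm x}^{(k+1)}$ with $\bm{F}_t^{(\text{pes})}({\bm x}^{(k)}) \prec \bm{F}_t^{(\text{pes})}({\bm x}^{(k+1)})$, and transitivity of $\preceq$ gives $\bm{F}_t^{(\text{pes})}({\bm x}^{(1)}) \preceq \bm{F}_t^{(\text{pes})}({\bm x}^{(k+1)})$.

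Because the pessimistic values $\bm{F}_t^{(\text{pes})}({\bm x}^{(1)}), \bm{F}_t^{(\text{pes})}({\bm x}^{(2)}), \ldots$ form a strictly ascending chain in the $\preceq$-order, they are pairwise distinct, so the chain enumerates distinct equivalence classes of $\mathcal{X}$ under the relation ${\bm x} \sim {\bm x}^\prime \Leftrightarrow \bm{F}_t^{(\text{pes})}({\bm x}) = \bm{F}_t^{(\text{pes})}({\bm x}^\prime)$. Since $\mathcal{X}$ is finite by the assumption in the problem setup, this process must terminate after finitely many steps at some ${\bm x}^{(n)}$ for which no further extension exists; by construction, ${\bm x}^{(n)}$ then satisfies the defining condition of $\hat{\Pi}_t$, so ${\bm x}^{(n)} \in \hat{\Pi}_t$. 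Setting ${\bm x}^\prime = {\bm x}^{(n)}$ and using transitivity of $\preceq$ along the whole chain gives the desired conclusion $\bm{F}_t^{(\text{pes})}({\bm x}^{(1)}) \preceq \bm{F}_t^{(\text{pes})}({\bm x}^\prime)$.

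The only subtlety, which I would expect to be the main obstacle, is the termination argument: one must be careful to rule out cycles in the chain. This is handled by noting that each new element strictly dominates the previous one in the partial order, so $F^{(\text{pes})}$-values cannot repeat, and finiteness of $\mathcal{X}$ closes the argument without any further input (in particular, Lemma~\ref{lem:existence_pihat} is not logically required here, though it is consistent with the conclusion since the terminal point of the chain lies in $\hat{\Pi}_t$).
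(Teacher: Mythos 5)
Your proof is correct, and it rests on the same core mechanism as the paper's: build a chain $\bm{x}^{(1)}, \bm{x}^{(2)}, \ldots$ by repeatedly invoking the failure of the defining condition of $\hat{\Pi}_t$ in (\ref{eq:est_par_set}), and use antisymmetry of componentwise $\leq$ together with the requirement $\bm{F}_t^{(\text{pes})}(\bm{x}^{(k)}) \neq \bm{F}_t^{(\text{pes})}(\bm{x}^{(k+1)})$ built into $E_{t,\bm{x}^{(k)}}^{(\text{pes})}$ to rule out repeated values, so finiteness of $\mathcal{X}$ forces termination. The organizational difference is that the paper argues by contradiction: it assumes no element of $\hat{\Pi}_t$ dominates $\bm{x}^{(1)}$, uses that assumption to keep every chain element outside $\hat{\Pi}_t$, exhausts $\mathcal{X}$, concludes $\hat{\Pi}_t = \emptyset$, and then invokes Lemma~\ref{lem:existence_pihat} to close the contradiction. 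You instead run the chain forward until it can no longer be extended and observe that the terminal point satisfies the defining condition of $\hat{\Pi}_t$ verbatim; this is slightly cleaner and, as you note, removes the logical dependence on Lemma~\ref{lem:existence_pihat} entirely (indeed your argument reproves its conclusion as a byproduct). One minor notational caution: the paper defines $\prec$ as ``at least one coordinate strictly smaller'' rather than as ``$\preceq$ and $\neq$,'' so your ``strictly ascending'' steps should be read as the conjunction of $\preceq$ with inequality of the value vectors; this is exactly what your distinctness argument uses, so nothing breaks, but it is worth phrasing carefully to avoid colliding with the paper's symbol.
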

\begin{proof}
Let $t \geq 1$, $M_t \neq \emptyset$ and ${\bm x}^{(1)} \in M_t$. 
Assume that the following holds:
\begin{equation}
 \bm{F}_{t}^{(\text{pes})}(\bm{x}^{(1)}) \npreceq \bm{F}_{t}^{(\text{pes})}(\bm{x}^{\prime }),\ \forall   {\bm x}^\prime \in \hat{\Pi}_t  . \label{eq:w_assumption}
\end{equation}
From the definition of $M_t$, we have ${\bm x}^{(1)} \notin \hat{\Pi}_t $. 
Here,  since  ${\bm x}^{(1)} \notin \hat{\Pi}_t $, there exists ${\bm x}^{(2)} \in E^{(\text{pes})}_{t,{\bm x}^{(1)} } $ such that 
$$
 \bm{F}_{t}^{(\text{pes})}(\bm{x}^{(1)}) \preceq \bm{F}_{t}^{(\text{pes})}(\bm{x}^{(2) }).
$$
Therefore, there exists ${\bm x}^{(3)} \in E^{(\text{pes})}_{t,{\bm x}^{(2)} } $ such that 
$$
 \bm{F}_{t}^{(\text{pes})}(\bm{x}^{(2)}) \preceq \bm{F}_{t}^{(\text{pes})}(\bm{x}^{(3) }).
$$
Furthermore, by combining 
$$
 \bm{F}_{t}^{(\text{pes})}(\bm{x}^{(1)}) \preceq \bm{F}_{t}^{(\text{pes})}(\bm{x}^{(2) }), \ 
 \bm{F}_{t}^{(\text{pes})}(\bm{x}^{(2)}) \preceq \bm{F}_{t}^{(\text{pes})}(\bm{x}^{(3) })
$$
we get $ \bm{F}_{t}^{(\text{pes})}(\bm{x}^{(1)}) \preceq \bm{F}_{t}^{(\text{pes})}(\bm{x}^{(3) })$. 
Thus, from \eqref{eq:w_assumption} we obtain ${\bm x}^{(3)} \notin \hat{\Pi}_t$. 
By repeating the same argument, we have ${\bm x}^{(1)},\ldots, {\bm x}^{(|\mathcal{X}|)}$, where ${\bm x}^{(k)} \notin \hat{\Pi}_t $, 
$k=1,\ldots, |\mathcal{X}|$.  
Next, we show that ${\bm x}^{(i)} \neq {\bm x}^{(j) } $ for any $i $ and $j$ with $i \neq j$. 
In fact, if there exist $i$ and $j$ with $i < j$ such that ${\bm x}^{(i)} = {\bm x}^{(j)} $, we get ${\bm  F}^{(\text{pes})}_t ({\bm x}^{(i)}) =
  {\bm  F}^{(\text{pes})}_t ({\bm x}^{(j)})$. 
Here, from $i \leq j-1$, noting that the definition of ${\bm x}^{(i)}$ and ${\bm x}^{(j-1)}$ we get 
$$
  {\bm  F}^{(\text{pes})}_t ({\bm x}^{(j)})={\bm  F}^{(\text{pes})}_t ({\bm x}^{(i)}) \leq
  {\bm  F}^{(\text{pes})}_t ({\bm x}^{(j-1)}).
$$
Similarly, from the definition of ${\bm x}^{(j-1)}$ and ${\bm x}^{(j)}$, we obtain 
$$
{\bm  F}^{(\text{pes})}_t ({\bm x}^{(j-1)}) \leq
  {\bm  F}^{(\text{pes})}_t ({\bm x}^{(j)}).
$$
Thus, we get ${\bm  F}^{(\text{pes})}_t ({\bm x}^{(j-1)}) =
  {\bm  F}^{(\text{pes})}_t ({\bm x}^{(j)})$. However, it contradicts  ${\bm x}^{(j)} \in E^{(\text{pes})}_{t,{\bm x}^{(j-1)} } $. 
Hence, it holds that ${\bm x}^{(i)} \neq {\bm x}^{(j) } $ for any $i $ and $j$ with $i \neq j$. 
Therefore, the set $\{ {\bm x}^{(1)},\ldots, {\bm x}^{(|\mathcal{X}|)}  \} $ is equal to $\mathcal{X}$. 
Recall that ${\bm x}^{(k)} \notin \hat{\Pi}_t $ for any $k=1,\ldots, |\mathcal{X}|$. 
By combining this and $\{ {\bm x}^{(1)},\ldots, {\bm x}^{(|\mathcal{X}|)}  \} = \mathcal{X}$, we have $\hat{\Pi}_t = \emptyset$. 
However, it contradicts Lemma \ref{lem:existence_pihat}. 
Hence, the assumption \eqref{eq:w_assumption} is incorrect.
\end{proof}

\begin{lemma}\label{lem:Ze}
Let ${\bm x}$ be an element of $\mathcal{X}$, and let $\bm\epsilon  =(\epsilon _1,\epsilon_2)$ be a positive vector.
 Assume that at least one of the following inequalities holds for any ${\bm x}^\prime \in \mathcal{X}$:
$$
F_1 ({\bm x}) + \epsilon _1 \geq F_1 ({\bm x}^\prime ), \ 
F_2 ({\bm x}) + \epsilon _2 \geq F_2 ({\bm x}^\prime ).
$$ 
Then, it holds that ${\bm F } ({\bm x} )  \in Z_{\bm\epsilon}$.
\end{lemma}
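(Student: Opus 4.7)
\medskip
\noindent\textbf{Proof plan.} Set $S = \{\bm{y} \in \mathbb{R}^2 \mid \exists \bm{x}' \in \mathcal{X},\ \bm{y} \preceq \bm{F}(\bm{x}')\}$, so that $Z = \partial S$. Because $\mathcal{X}$ is finite, $S$ is a finite union of closed southwest quadrants, hence closed and downward-closed, and its interior admits the following characterization: $\bm{y} \in \operatorname{int}(S)$ if and only if there exists $\bm{x}'$ with $F_i(\bm{x}') > y_i$ for both $i = 1,2$. I plan to verify the two existential clauses in the definition of $Z_{\bm{\epsilon}}$ separately.

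For the upper clause I need $\bm{y}' \in Z$ with $\bm{F}(\bm{x}) \preceq \bm{y}'$. Since $\bm{x}$ belongs to the non-empty finite set $\{\bm{x}' \in \mathcal{X} : \bm{F}(\bm{x}) \preceq \bm{F}(\bm{x}')\}$, I can extract a $\preceq$-maximal element $\bm{x}^\ast$ of this set, which is automatically Pareto-optimal in $\mathcal{X}$. A Pareto-optimal point cannot lie in $\operatorname{int}(S)$, for otherwise some $\bm{F}(\bm{x}'')$ would strictly dominate it in both coordinates, contradicting optimality; therefore $\bm{F}(\bm{x}^\ast) \in \partial S = Z$ serves as $\bm{y}'$.

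The heart of the argument, and the step where the hypothesis is actually used, is the lower clause: producing $\bm{y}'' \in Z$ with $\bm{y}'' \preceq_{\bm{\epsilon}} \bm{F}(\bm{x})$, i.e.\ $\bm{y}'' \preceq \bm{P}$ for $\bm{P} := \bm{F}(\bm{x}) + \bm{\epsilon}$. The hypothesis rules out any $\bm{x}' \in \mathcal{X}$ with $F_1(\bm{x}') > P_1$ \emph{and} $F_2(\bm{x}') > P_2$, so by the interior characterization above $\bm{P} \notin \operatorname{int}(S)$. I then split into two cases. If $\bm{P} \in S$ then $\bm{P} \in S \setminus \operatorname{int}(S) = \partial S = Z$, and I take $\bm{y}'' = \bm{P}$. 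If $\bm{P} \notin S$, I follow the segment $\bm{F}(\bm{x}) + \lambda \bm{\epsilon}$ from $\lambda = 0$ (which is in $S$) to $\lambda = 1$ (which is not), set $\lambda^\ast = \sup\{\lambda \in [0,1] : \bm{F}(\bm{x}) + \lambda \bm{\epsilon} \in S\}$, and use closedness of $S$ together with the definition of the supremum to conclude $\bm{y}'' := \bm{F}(\bm{x}) + \lambda^\ast \bm{\epsilon}$ lies in $\partial S = Z$. In both cases $\bm{y}'' \preceq \bm{P}$, which is exactly $\bm{y}'' \preceq_{\bm{\epsilon}} \bm{F}(\bm{x})$.

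The main obstacle is the geometric step that converts the pointwise hypothesis into the topological fact $\bm{P} \notin \operatorname{int}(S)$; once that is in hand, the segment / supremum argument producing $\bm{y}''$ is a routine consequence of closedness and downward-closure of $S$, and the upper witness is straightforward by finiteness.
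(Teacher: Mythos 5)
Your proposal is correct, and it takes a genuinely different route from the paper. The paper argues combinatorially: it splits into the case where all Pareto-optimal points share the same image and the case where they do not, orders the distinct image points $\bm{F}(\bm{x}^{(1)}),\dots,\bm{F}(\bm{x}^{(l)})$ along the staircase, and runs a finite iteration to locate an explicit corner $(F_1(\bm{x}^{(j)}), F_2(\bm{x}^{(j+1)}))^\top \in Z$ (or an endpoint of the staircase) serving as the witness $\bm{y}''$. You instead work directly with the dominated region $S$, whose closedness (finite union of closed quadrants) gives $\partial S = S \setminus \operatorname{int}(S) = Z$, and you convert the lemma's hypothesis into the single topological fact $\bm{F}(\bm{x}) + \bm{\epsilon} \notin \operatorname{int}(S)$ via the characterization of $\operatorname{int}(S)$ by strict componentwise domination; the witness for the lower clause then falls out of a supremum argument along the segment $\bm{F}(\bm{x}) + \lambda\bm{\epsilon}$, and the witness for the upper clause from taking a maximal element of the finite set of images dominating $\bm{F}(\bm{x})$. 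I verified the two load-bearing steps: the interior characterization is correct in both directions (the forward direction by perturbing into the open ball along $(\delta,\delta)$), and in the segment argument $\lambda^\ast$ is attained by closedness while points with $\lambda > \lambda^\ast$ witness that $\bm{F}(\bm{x}) + \lambda^\ast\bm{\epsilon} \notin \operatorname{int}(S)$, so it lies on $Z$ and satisfies $\preceq_{\bm{\epsilon}}$ since $\lambda^\ast \leq 1$. Your version buys a shorter, case-free argument that never needs to identify which corner of the staircase does the work, and it would extend more readily beyond finite $\mathcal{X}$ (any compact $\mathcal{X}$ with continuous $\bm{F}$ keeps $S$ closed); the paper's version buys an explicit, constructive description of the witnessing points, which it partly reuses in the surrounding termination analysis.
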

\begin{proof}
In order to prove Lemma \ref{lem:Ze}, we consider the following two cases:
\begin{description}
\item [(1)] For any ${\bm x},{\bm x}^\prime \in \Pi$, ${\bm F} ({\bm x}) ={\bm F} ({\bm x}^\prime )$.
\item  [(2)] There exist  ${\bm x},{\bm x}^\prime \in \Pi$ such that  ${\bm F} ({\bm x}) \neq {\bm F} ({\bm x}^\prime )$.
\end{description}
First, we consider {\sf (1)}. We define ${\bm x}^{(1)} $ and ${\bm x}^{(2)} $ as 
\begin{align*}
\tilde{\bm x} = \argmax _{ {\bm x} \in \mathcal{X} }  F_1 ({\bm x} ), \ {\bm x}^{(1)} &= \argmax_ { {\bm x}; F_1 ({\bm x} ) =F_1 (\tilde{\bm x})} F_2 ({\bm x} ), \\
{\bm x}^\dagger = \argmax _{ {\bm x} \in \mathcal{X} }  F_2 ({\bm x} ), \ {\bm x}^{(2)} &= \argmax_ { {\bm x}; F_2 ({\bm x} ) =F_2 ({\bm x}^\dagger)} F_1 ({\bm x} ).
\end{align*}
From the definition of ${\bm x}^{(1)} $ and ${\bm x}^{(2)} $, it holds that ${\bm x}^{(1)},{\bm x}^{(2)} \in \Pi$. 
Thus, from {\sf (1)}, we get ${\bm F} ({\bm x}^{(1)} ) =  {\bm F} ({\bm x}^{(2)} ) $. Hence, 
the following holds for any ${\bm x}^\prime \in \mathcal{X}$:
\begin{align*}
F_1 ({\bm x}^\prime ) \leq F_1 ({\bm x} ^{(1)} ) , \ 
F_2 ({\bm x}^\prime ) \leq F_2 ({\bm x} ^{(2)} )=F_2 ({\bm x} ^{(1)} ). \ 
\end{align*}
Therefore, we get ${\bm F} ({\bm x}^\prime ) \preceq {\bm F} ({\bm x}^{(1)} )$.
 Note that ${\bm F} ({\bm x}^{(1)} ) \in Z$. 
Here, let ${\bm x} \in \mathcal{X}$. Then, from the lemma's assumption, at least one of the following inequalities holds:
$$
F_1 ({\bm x}) + \epsilon _1 \geq F_1 ({\bm x}^{(1)} ), \ 
F_2 ({\bm x}) + \epsilon _2 \geq F_2 ({\bm x}^{(1)} ).
$$ 
If $F_1 ({\bm x}) + \epsilon _1 \geq F_1 ({\bm x}^{(1)} )$, we set ${\bm a} = ( F_1 ({\bm x}^{(1)} ),  F_2 ({\bm x} ))^\top $. 
Noting that $F_2 ({\bm x}^\prime ) \leq F_2 ({\bm x}^{(1)} ) $ for any ${\bm x}^\prime \in \mathcal{X}$, 
we have ${\bm a} \preceq {\bm F} ({\bm x}^{(1)} )$. This implies that ${\bm a} \in Z$. 
Thus, the following holds:
$$
{\bm a} = ( F_1 ({\bm x}^{(1)} ),  F_2 ({\bm x} ))^\top  \preceq  ( F_1 ({\bm x} )+\epsilon_1,  F_2 ({\bm x} )+\epsilon_2)^\top = {\bm F} ({\bm x} ) + {\bm\epsilon}.
$$
Furthermore, since ${\bm F} ({\bm x}) \preceq  {\bm F} ({\bm x}^{(1)})$ and $ {\bm F} ({\bm x}^{(1)}) \in Z$, we obtain 
${\bm F} ({\bm x} ) \in Z_{\bm\epsilon} $. 
Similarly, if $F_2 ({\bm x}) + \epsilon _2 \geq F_2 ({\bm x}^{(1)} )$,   we set ${\bm b} = ( F_1 ({\bm x} ),  F_2 ({\bm x}^{(1)} ))^\top $. Also in this case, by  using the same argument, we get ${\bm b} \in Z$ and 
$$
{\bm b}  \preceq  {\bm F} ({\bm x} ) + {\bm\epsilon}.
$$
By combining this and ${\bm F} ({\bm x}) \preceq  {\bm F} ({\bm x}^{(1)})$ (and $ {\bm F} ({\bm x}^{(1)}) \in Z$), we obtain 
${\bm F} ({\bm x} ) \in Z_{\bm\epsilon} $. 

Next, we consider {\sf (2)}. From {\sf (2)}, there exist ${\bm x}^{(1)} ,\ldots, {\bm x}^{(l)} $ such that 
$$
{\bm F} (\Pi ) = \{ {\bm F} ({\bm x} ) \mid {\bm x} \in \Pi \} = \{ {\bm F} ({\bm x}^{(i)} ) \mid i=1,\ldots,l \} ,\quad {\bm F} ({\bm x}^{(i)} ) \neq
{\bm F} ({\bm x}^{(j)} ), i \neq j.
$$
Here, without loss of generality, we may assume the following:
$$
F_1 ({\bm x} ^{(1)} ) < \cdots < F_1 ({\bm x} ^{(l)} ), \quad 
F_2 ({\bm x} ^{(1)} ) > \cdots > F_2 ({\bm x} ^{(l)} ).
$$
Let ${\bm x} $ be an element of $\mathcal{X}$. 
Assume that there exists $j $ such that 
$$
F_1 ({\bm x} ) + \epsilon_1  \geq F_1 ({\bm x} ^{(j)} )  , \ F_2 ({\bm x} ) + \epsilon_2 \geq F_2  ({\bm x} ^{(j+1)} ).
$$
Note that $( F_1 ({\bm x} ^{(j)} ), F_2 ({\bm x} ^{(j+1)} )^\top \in Z$. 
In addition, there exists $ i \in \{1,\ldots, l \}$ such that ${\bm F} ({\bm x} ) \preceq {\bm F} ({\bm x}^{(i)} ) \in Z$. 
Therefore, ${\bm F} ({\bm x}) \in Z_{\bm\epsilon }$. 

Similarly, assume that at least one of the following inequalities holds for any $j$:
\begin{align}
F_1 ({\bm x} ) + \epsilon_1 < F_1 ({\bm x} ^{(j)} )  , \ F_2 ({\bm x} ) + \epsilon_2 < F_2  ({\bm x} ^{(j+1)} ). \label{eq:ineq2}
\end{align}
Here, if $F_1 ({\bm x} ) + \epsilon_1 < F_1 ({\bm x} ^{(1)} ) $, from lemma's assumption it holds that $F_2 ({\bm x}) + \epsilon_2 \geq 
F_2  ({\bm x} ^{(1)} )$. Moreover, we define ${\bm c} = ( F_1 ({\bm x} ), F_2  ({\bm x} ^{(1)} ))^\top \in Z$. 
Then, the following holds:
$$
{\bm F} ({\bm x} ) +{\bm\epsilon} =  ( F_1 ({\bm x} )+\epsilon_1,F_2 ({\bm x} )+\epsilon_2)^\top \succeq  ( F_1 ({\bm x} ), F_2  ({\bm x} ^{(1)} ))^\top ={\bm c} \in Z.
$$
Furthermore, from the definition of ${\bm x}^{(1)}$, it holds that $F_2 ({\bm x}^{(1)} ) \geq F_2 ({\bm x} ) $. 
Thus, noting that $F_1 ({\bm x} ) + \epsilon_1 < F_1 ({\bm x} ^{(1)} ) $, we get $F_1 ({\bm x} ) \leq F_1 ({\bm x} ^{(1)} ) $.
By combining these, we have ${\bm F} ({\bm x} ) \preceq {\bm F} ({\bm x} ^{(1)} )  \in Z$. This implies that ${\bm F} ({\bm x} ) \in Z_{\bm\epsilon} $. 
On the other hand, if $F_1 ({\bm x} ) + \epsilon_1 \geq F_1 ({\bm x} ^{(1)} ) $, from \eqref{eq:ineq2} we get 
$F_2 ({\bm x} ) + \epsilon_2 < F_2 ({\bm x} ^{(2)} ) $. Therefore, from lemma's  assumption, we obtain 
$F_1 ({\bm x} ) + \epsilon_1 \geq F_1 ({\bm x} ^{(2)} ) $. By using \eqref{eq:ineq2} again, 
we have 
$F_2 ({\bm x} ) + \epsilon_2 < F_2 ({\bm x} ^{(3)} ) $. Hence, by repeating these procedures, 
we get $F_1 ({\bm x} ) + \epsilon_1 \geq F_1 ({\bm x} ^{(l)} ) $ and $F_2 ({\bm x} ) + \epsilon_2 < F_2 ({\bm x} ^{(l)} ) $.
Finally, noting that 
\begin{align*}
{\bm F} ({\bm x} ) & \preceq ( F_1 ({\bm x}^{(l)} ), F_2 ({\bm x}) + \epsilon_2 ) ^\top  \preceq  ( F_1 ({\bm x}^{(l)} ), F_2 ({\bm x}^{(l)} ) ) ^\top = {\bm F} ({\bm x}^{(l)} ) \in Z, \\
{\bm F} ({\bm x} )  + {\bm\epsilon} & \succeq (F_1 ({\bm x}^{(l)}) , F_2 ({\bm x} ) )^\top \in Z,
\end{align*}
we get ${\bm F} ({\bm x} ) \in Z_{\bm\epsilon} $.
\end{proof}

By using these lemmas, we prove Theorem \ref{thm:par_conv}. 
\begin{proof}
First, we prove that the algorithm terminates after at most $t^\prime$ iterations where 
$t^\prime$ is the positive integer satisfying 
${\rm max}_{\bm{x} \in M_{t^\prime} \cup \hat{\Pi}_{t^\prime}} \lambda_{t^\prime}(\bm{x})
= \lambda_{t^\prime}(\bm{x}_{t^\prime}) \leq \min\left\{\epsilon_1, \epsilon_2 \right\}$.
From the definition of $\lambda_t$, noting that $u_t^{(F_1)}(\bm{x}) - l_t^{(F_1)}(\bm{x}) \leq \lambda_t(\bm{x})$ 
 and $u_t^{(F_2)}(\bm{x}) - l_t^{(F_2)}(\bm{x}) \leq \lambda_t(\bm{x})$, we have
\begin{equation*}
    \max_{\bm{x} \in M_{t^\prime} \cup \hat{\Pi}_{t^\prime}}
    \left\{u_{t^\prime}^{(F_1)}(\bm{x}) - l_{t^\prime}^{(F_1)}(\bm{x})\right\} \leq \epsilon_1
\end{equation*}
and 
\begin{equation*}
    \max_{\bm{x} \in M_{t^\prime} \cup \hat{\Pi}_{t^\prime}}
    \left\{u_{t^\prime}^{(F_2)}(\bm{x}) - l_{t^\prime}^{(F_2)}(\bm{x})\right\} \leq \epsilon_2.
\end{equation*}
Then, for any $\bm{x}^\prime \in \hat{\Pi}_t$, it holds that 
\begin{equation}
    \label{eq:any_F1_band}
    u_{t^\prime}^{(F_1)}(\bm{x}^\prime) \leq l_{t^\prime}^{(F_1)}(\bm{x}^\prime) + \epsilon_1
\end{equation}
and 
\begin{equation}
    \label{eq:any_F2_band}
    u_{t^\prime}^{(F_2)}(\bm{x}^\prime) \leq l_{t^\prime}^{(F_2)}(\bm{x}^\prime) + \epsilon_2.
\end{equation}
Here, let $\bm{x}$ be an element of $  \hat{\Pi}_{t^\prime}$. 
Then, from the definition of 
 $\hat{\Pi}_t$, 
for any $\bm{x}^\prime \in \hat{\Pi}_{t^\prime}$, at least one of the following inequalities holds:
$$l_{t^\prime}^{(F_1)}(\bm{x}^\prime) \leq l_{t^\prime}^{(F_1)}(\bm{x}),\ l_{t^\prime}^{(F_2)}(\bm{x}^\prime) \leq l_{t^\prime}^{(F_2)}(\bm{x}).$$
Thus, from (\ref{eq:any_F1_band}) and  (\ref{eq:any_F2_band}), 
 for any $\bm{x}^\prime \in \hat{\Pi}_{t^\prime}$, it holds that 
$\bm{F}_{t}^{(\text{pes})}(\bm{x}) + \bm{\epsilon} \nprec \bm{F}_{t^\prime}^{(\text{opt})}(\bm{x}^\prime)$.
This implies that  $U_{t^\prime} = \emptyset$.
Similarly, if $M_{t^\prime} \neq \emptyset$, 
there exists $\bm{x} \in M_{t^\prime}$ such that 
$\bm{F}_{t^\prime}^{(\text{opt})}(\bm{x}) \npreceq_{\bm{\epsilon}} \bm{F}_{t^\prime}^{(\text{pes})}(\bm{x}^\prime)$ for any 
 $\bm{x}^\prime \in \hat{\Pi}_t$. 
On the other hand, from Lemma \ref{lem:existence_Mt}, 
there exists $\bm{x}^{\prime\prime} \in \hat{\Pi}_{t^\prime}$ such that 
$\bm{F}_{t^\prime}^{(\text{pes})}(\bm{x}) \preceq \bm{F}_{t^\prime}^{(\text{pes})}(\bm{x}^{\prime\prime})$. 
Moreover, 
from  (\ref{eq:any_F1_band}) and  (\ref{eq:any_F2_band}), 
$\bm{x}^{\prime\prime}$ satisfies 
$ \bm{F}_{t^\prime}^{(\text{opt})}(\bm{x}) \preceq_{\bm{\epsilon}} \bm{F}_{t^\prime}^{(\text{pes})}(\bm{x}^{\prime\prime})$.
However, it contradicts the definition of $M_t$. Hence, we get  $M_{t^\prime} = \emptyset$. 

Hereafter, we assume that (\ref{eq:f_cred_cond}), (\ref{eq:sigma_int}) and  (\ref{eq:sigma_int_sq}) hold. 
From the definition of  $\lambda_t$, we obtain 
\begin{equation*}
    \lambda_t(\bm{x}) \leq \left\{ u_t^{(F_1)}(\bm{x}) - l_t^{(F_1)}(\bm{x}) \right\} + \left\{ u_t^{(F_2)}(\bm{x}) - l_t^{(F_2)}(\bm{x}) \right\}.
\end{equation*}
This implies that
\begin{equation*}
    \sum_{t=1}^T \lambda_t(\bm{x}_t) \leq \sum_{t=1}^T\left\{ u_t^{(F_1)}(\bm{x}_t) - l_t^{(F_1)}(\bm{x}_t) \right\}
    + \sum_{t=1}^T \left\{ u_t^{(F_2)}(\bm{x}_t) - l_t^{(F_2)}(\bm{x}_t) \right\}.
\end{equation*}
Therefore, from 
(\ref{eq:F1_ineq_int}), (\ref{eq:F2_cred_sum}), (\ref{eq:sigma_int}) and  (\ref{eq:sigma_int_sq}), we get 
\begin{align*}
    &\quad \sum_{t=1}^T \lambda_t(\bm{x}_t) \leq 4 \beta_T^{1/2}\left\{ \sum_{t=1}^T \sigma_{t-1}(\bm{x}_t, \bm{w}_t) + 4\ln \frac{18}{\delta}\right\} \nonumber \\
    \label{eq:R_bound_nonint}
    &\hspace{20pt} + \sqrt{16T\tilde{B} \beta_T^{1/2}
    \left\{ \sum_{t=1}^T \sigma_{t-1}(\bm{x}_t, \bm{w}_t) + 4\ln \frac{18}{\delta} \right\}
     + 40 T\beta_T \left\{\sum_{t=1}^T\sigma_{t-1}^2(\bm{x}_t, \bm{w}_t) + 4\ln \frac{18}{\delta} \right\}}.
\end{align*}
Hence, from 
 (\ref{eq:sigma_sum}) and (\ref{eq:sigma_sq_sum}), it holds that
\begin{equation}\label{eq:lambda_mean}
    \frac{1}{T}\sum_{t=1}^T \lambda_t(\bm{x}_t) \leq T^{-1} \beta_T^{1/2}\left\{ \sqrt{2TC_1\gamma_T} + C_2\right\} + T^{-1} \sqrt{2T\tilde{B} \beta_T^{1/2} \left\{ \sqrt{8TC_1\gamma_T} + 2C_2 \right\}
     + 5 T\beta_T \left\{C_1\gamma_T + 2C_2 \right\}}.
\end{equation}
Here, let $T$ be a positive integer such that the right hand side in 
 (\ref{eq:lambda_mean}) is less than or equal to 
$\min\{\epsilon_1, \epsilon_2\}$.
Then,  there exists a positive integer $t^\prime$ such that 
 $t^\prime \leq T$ and 
$\lambda_{t^\prime}(\bm{x}_{t^\prime}) \leq \min\{\epsilon_1, \epsilon_2\}$. 
Therefore, we have $M_{t^\prime} = \emptyset$ and $U_{t^\prime} = \emptyset$. 
This means that the algorithm terminates after at most $t^\prime$ iterations. 

Next, under (\ref{eq:f_cred_cond}) we show that 
$\hat{\Pi}_t$ is the $\bm{\epsilon}$-accurate Pareto set when 
$M_t = \emptyset$ and $U_t=\emptyset$.
First, we prove $\bm{F}(\hat{\Pi}_t) \subset Z_{\bm{\epsilon}}$. 
Let $\bm{x}$ be an element of $  \hat{\Pi}_t$. 
For any ${\bm x}^\prime  \in \hat{\Pi}_t \setminus \{\bm{x}\}$, it holds that 
$\bm{F}_{t}^{(\text{pes})}(\bm{x}) + \bm{\epsilon} \nprec \bm{F}_{t}^{(\text{opt})}(\bm{x}^\prime)$ because 
  $U_t = \emptyset$.
Furthermore, 
noting that $M_t = \emptyset$, 
for any $\bm{x}^\prime \in \mathcal{X} \setminus \hat{\Pi}_t  $, there exists 
 $\bm{x}^{\prime\prime} \in \hat{\Pi}_t$ such that 
 $\bm{F}_{t}^{(\text{opt})}(\bm{x}^\prime) \preceq_{\bm{\epsilon}}
\bm{F}_{t}^{(\text{pes})}(\bm{x}^{\prime\prime})$. 
In addition, since 
 ${\bm x} \in \hat{\Pi}_t$, from the definition of $\hat{\Pi}_t$, at least one of the following inequalities holds:
$$
l_t^{(F_1)}(\bm{x}^{\prime\prime}) \leq l_t^{(F_1)}(\bm{x}), l_t^{(F_2)}(\bm{x}^{\prime\prime}) \leq l_t^{(F_2)}(\bm{x}).
$$
By combining this and $\bm{F}_{t}^{(\text{opt})}(\bm{x}^\prime) \preceq_{\bm{\epsilon}}
\bm{F}_{t}^{(\text{pes})}(\bm{x}^{\prime\prime})$, we get 
$\bm{F}_{t}^{(\text{pes})}(\bm{x}) + \bm{\epsilon} \nprec \bm{F}_{t}^{(\text{opt})}(\bm{x}^\prime)$.
Therefore, under  (\ref{eq:f_cred_cond}) at least one of the following inequalities holds for any ${\bm x}^\prime \in \mathcal{X} \setminus  \{{\bm x} \} $:
$$
F_1 ({\bm x} ) + \epsilon_1 \geq F_1 ({\bm x}^\prime ) , \ 
F_2 ({\bm x} ) + \epsilon_2 \geq F_2 ({\bm x}^\prime ). 
$$
Moreover, it is clear that $F_1 ({\bm x} ) + \epsilon_1 \geq F_1 ({\bm x}) $.
 Hence, from Lemma \ref{lem:Ze}, we get $\bm{F}(\hat{\Pi}_t) \subset Z_{\bm{\epsilon}}$.

Finally, we show that for any $\bm{x}^\prime \in \Pi$, 
 there exists $\bm{x} \in \hat{\Pi}_t$ such that 
$\bm{x}^\prime \preceq_{\bm{\epsilon}} \bm{x}$.
When 
 $\bm{x}^\prime \in \hat{\Pi}_t$, the existence of ${\bm x} $ is obvious because $\bm{x}^\prime \preceq_{\bm{\epsilon}} \bm{x}^\prime$.
On the other hand, when $\bm{x}^\prime \in \mathcal{X}\setminus \hat{\Pi}_t$, 
since 
$M_t = \emptyset$ there exists $\bm{x} \in \hat{\Pi}_t$ such that 
${\bm F}^{(\text{opt})}_t (\bm{x}^\prime)   \preceq_{\bm{\epsilon}} {\bm F}^{(\text{pes})}_t (\bm{x})$. 
Thus, under (\ref{eq:f_cred_cond}), this implies that $\bm{x}^\prime \preceq_{\bm{\epsilon}} \bm{x}$.
Hence, for any 
 $\bm{x}^\prime \in \Pi$, 
 there exists $\bm{x} \in \hat{\Pi}_t$ such that 
$\bm{x}^\prime \preceq_{\bm{\epsilon}} \bm{x}$.
From this and  $\bm{F}(\hat{\Pi}_t) \subset Z_{\bm{\epsilon}}$, we have that 
  $\hat{\Pi}_t$ is the $\bm{\epsilon}$-accurate Pareto set.
Here, note that  (\ref{eq:f_cred_cond}), (\ref{eq:sigma_int}) and  (\ref{eq:sigma_int_sq}) hold with probability at least 
$1-\delta$. Therefore, we get the desired result.
\end{proof}
\section{Extension to Constraint Optimization Problem}\label{sec:suppB}
In real applications, there exists a situation where the known tolerance level for the value of the function $F_2$ is defined.
For example, in the parameter tuning of an engineering system, this situation corresponds to the case where the variance of the performance must be below a certain level.
In such a situation, it is necessary to treat the functions $F_1$ and $F_2$ as in the following constrained optimization problem:
\begin{equation*}
    \bm{x}^\ast = \argmax_{\bm{x} \in \mathcal{X}} F_1(\bm{x})
    ~\text{s.t.}~F_2(\bm{x}) \geq h,
\end{equation*}
where $h < 0$ is a user-specified known threshold parameter. 
Moreover, in order to show theoretical guarantees, for a non-negative vector 
 $\bm{\epsilon} = (\epsilon_1, \epsilon_2)$, we define an $\bm{\epsilon}$-accurate solution as a solution 
 $\hat{\bm{x}}$ satisfying 
\begin{equation*}
     F_1(\hat{\bm{x}}) \geq F_1(\bm{x}^\ast) - \epsilon_1,~
     F_2(\hat{\bm{x}}) \geq h - \epsilon_2.
\end{equation*}

\paragraph{Proposed Algorithm}
First, we define  $M_t^{(\text{cons})}, S_t$ and  $M_t^{(\text{obj})}$ as 
\begin{align*}
    M_t^{(\text{cons})} &=
    \left\{ \bm{x} \in \mathcal{X} \mid u_t^{(F_2)}(\bm{x}) \geq h - \epsilon_2 \right\}, \\
    S_t &= \left\{ \bm{x} \in \mathcal{X} \mid l_t^{(F_2)}(\bm{x}) \geq h - \epsilon_2 \right\}, \\
    M_t^{(\text{obj})} &= \left\{ \bm{x} \in \mathcal{X} \mid
    u_t^{(F_1)}(\bm{x}) \geq \max_{\bm{x}^\prime \in S_t} l_t^{(F_1)}(\bm{x}^\prime) - \epsilon_1 \right\}.
\end{align*}
Here, we define $M_t^{(\text{obj})} = \mathcal{X}$ if  $S_t = \emptyset$. 
Note that an element in the complement of 
  $M_t^{(\text{cons})}$ or $M_t^{(\text{obj})}$ is not an $\epsilon$-accurate solution with high probability. 
In addition, 
$S_t$ is a set that is determined to be a feasible region with high probability.
Based on these definitions, we define a latent optimal solution set $M_t$ at the $t$th step as follows:
\begin{equation*}
    M_t = M_t^{(\text{cons})} \cap M_t^{(\text{obj})}.
\end{equation*}
In our proposed algorithm, we select the most uncertain point in the latent optimal solution set $M_t$.
In other words, the observation point $\bm{x}_t$ at the $t$th step is selected by using $\lambda_t$ as defined by Equation (\ref{eq:lambda}) as follows:
\begin{align}
    \bm{x}_t = \argmax_{\bm{x} \in M_t} \lambda_t(\bm{x}).
\end{align}

Furthermore, if $S_t \neq \emptyset$ at the $t$th step, then we define 
 the estimated optimal solution $\hat{\bm{x}}_t$ by
$\hat{\bm{x}}_t = {\rm argmax}_{\bm{x} \in S_t} l_t^{(F_1)}(\bm{x})$.
In order to ensure that $\hat{\bm{x}}_t$ is an $\epsilon$-accurate solution,  the uncertainties of the function values $F_1$ and $F_2$ for the latent optimal solution should be sufficiently small. 
In the proposed method, the algorithm terminates at the $t$th step which satisfies the following:
\begin{equation*}
    \max_{\bm{x} \in M_t} \lambda_t(\bm{x})
     \leq \min\{\epsilon_1, \epsilon_2\}.
\end{equation*}
The pseudo code of the proposed method is shown as Algorithm \ref{alg:cons}.

\begin{algorithm}[t]
    \caption{Proposed Algorithm for Constrained Optimization}
    \label{alg:cons}
    \begin{algorithmic}
        \REQUIRE GP prior $\mathcal{GP}(0,\ k)$,
        ~$\{\beta_t\}_{t \in \mathbb{N}}$,~Threshold $h$,~Non-negative vector $\bm{\epsilon} = (\epsilon_1, \epsilon_2)$.
        \STATE $M_0 \leftarrow \mathcal{X}$, $S_0 \leftarrow \emptyset$, $t \leftarrow 0$.
        \STATE Compute $\lambda_0(\bm{x})$ for any $\bm{x} \in M_0$
        \WHILE {${\rm max}_{\bm{x} \in M_t} \lambda_t(\bm{x}) \nleq \min\{\epsilon_1, \epsilon_2\}$}
            \STATE Choose $\bm{x}_t = {\rm argmax}_{\bm{x} \in M_t}
            \lambda_t(\bm{x})$.
            \STATE Sample $\bm{w}_t \sim p(\bm{w})$.
            \STATE Observe $y_t \leftarrow f(\bm{x}_t, \bm{w}_t) + \eta_t$.
            \STATE Update the GP by adding $((\bm{x}_t, \bm{w}_t), y_t)$.
            \STATE $t \leftarrow t + 1$.
            \STATE Compute $S_t$, $M_t$.
            \STATE Compute $\lambda_t(\bm{x})$ for any $\bm{x} \in M_t$
        \ENDWHILE
        \IF {$S_t \neq \emptyset$}
            \STATE Output $\hat{\bm{x}}_t = {\rm argmax}_{\bm{x} \in S_t}l_t^{(F_1)}(\bm{x})$.
        \ENDIF
    \end{algorithmic}
\end{algorithm}

\paragraph{Theoretical Analysis}
For Algorithm \ref{alg:cons}, the following theorem holds:
\begin{theorem}\label{thm:cons_conv}
    Let $k$ be a positive-definite kernel, and let   $f \in \mathcal{H}_k$ with  $\|f\|_{\mathcal{H}_k} \leq B$. 
    Also let $\delta \in (0, 1)$ and $\epsilon_1 > 0,~\epsilon_2 > 0 $, and define 
    $\beta_t = \left(\sqrt{\ln \det (\bm{I}_t + \sigma^{-2}\bm{K}_t) + 2 \ln \frac{3}{\delta}} + B\right)^2$. Then, with probability at least 
$1-\delta$, the following 1. and 2. hold:
    \begin{description}
        \item [1.] Algorithm \ref{alg:cons} terminates after at most $T$ iterations,  where  $T$ is the smallest positive integer satisfying 
        \begin{align*}
            T^{-1} \beta_T^{1/2}\left\{ \sqrt{2TC_1\gamma_T} + C_2\right\}
            + T^{-1} \sqrt{2T\tilde{B} \beta_T^{1/2}
            \left\{ \sqrt{8TC_1\gamma_T} + 2C_2 \right\}
             + 5 T\beta_T \left\{C_1\gamma_T + 2C_2 \right\}}
             \leq \min\{\epsilon_1, \epsilon_2\}.
        \end{align*}
        Here, $\tilde{B} = {\rm max}_{(\bm{x}, \bm{w}) \in (\mathcal{X} \times \Omega)}
        \left| f(\bm{x}, \bm{w}) - \mathbb{E}_{\bm{w}}[f(\bm{x}, \bm{w})]\right|$, $C_1=\frac{16}{\ln(1 + \sigma^{-2})}$ and $ C_2=16\ln \frac{18}{\delta}$.
        \item [2.] If $\bm{x}^\ast$ exists, 
        then $S_{t^\prime} \neq \emptyset$ at the termination step 
 $t^\prime \leq T$. 
Moreover, $\hat{\bm{x}}_{t^\prime} = {\rm argmax}_{\bm{x} \in S_{t^\prime}} l_t^{(F_1)}(\bm{x})$
        is an $\epsilon$-accurate solution.
    \end{description}
\end{theorem}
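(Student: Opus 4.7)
The plan is to work on the joint event on which the confidence inequality (\ref{eq:f_cred_cond}) and the concentration inequalities (\ref{eq:sigma_int}) and (\ref{eq:sigma_int_sq}) all hold. Each of these carries probability at least $1-\delta/3$ from the choice of $\beta_t$ in Lemma~\ref{lem:f_cred} and from Lemma~\ref{lem:mean_bound}, so by a union bound the joint event has probability at least $1-\delta$, exactly as in the proofs of Theorems~\ref{thm:sca_conv} and~\ref{thm:par_conv}. Both conclusions will be derived under this event.

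For Part~1 (termination), I would start from the elementary bound $\lambda_t(\bm{x}) \leq \{u_t^{(F_1)}(\bm{x}) - l_t^{(F_1)}(\bm{x})\} + \{u_t^{(F_2)}(\bm{x}) - l_t^{(F_2)}(\bm{x})\}$ that follows directly from the definition of the rectangle diameter. Summing over $t=1,\dots,T$ and plugging in (\ref{eq:F1_ineq_int}), (\ref{eq:F2_cred_sum}), (\ref{eq:sigma_int}), (\ref{eq:sigma_int_sq}), (\ref{eq:sigma_sum}) and (\ref{eq:sigma_sq_sum}) in the same manner as in the proof of Theorem~\ref{thm:par_conv} reproduces the identical bound (\ref{eq:lambda_mean}) on $T^{-1}\sum_{t=1}^T \lambda_t(\bm{x}_t)$. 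Hence, once $T$ is large enough that the right-hand side of (\ref{eq:lambda_mean}) falls below $\min\{\epsilon_1,\epsilon_2\}$, there must exist $t' \leq T$ with $\lambda_{t'}(\bm{x}_{t'}) \leq \min\{\epsilon_1,\epsilon_2\}$; since the algorithm selects $\bm{x}_{t'} = \argmax_{\bm{x}\in M_{t'}} \lambda_{t'}(\bm{x})$, this also gives $\max_{\bm{x}\in M_{t'}} \lambda_{t'}(\bm{x}) \leq \min\{\epsilon_1,\epsilon_2\}$, triggering termination no later than step~$t'$.

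For Part~2, I would first rule out $S_{t'}=\emptyset$ by contradiction: if $S_{t'}=\emptyset$, then $M_{t'}^{(\text{obj})}=\mathcal{X}$, and the confidence bound gives $u_{t'}^{(F_2)}(\bm{x}^\ast)\geq F_2(\bm{x}^\ast)\geq h > h-\epsilon_2$, so $\bm{x}^\ast \in M_{t'}^{(\text{cons})}=M_{t'}$; termination then forces $l_{t'}^{(F_2)}(\bm{x}^\ast) \geq u_{t'}^{(F_2)}(\bm{x}^\ast)-\epsilon_2 \geq h-\epsilon_2$, placing $\bm{x}^\ast$ in $S_{t'}$ and contradicting the assumption. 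With $S_{t'}\neq\emptyset$ established, I split on whether $\bm{x}^\ast \in M_{t'}^{(\text{obj})}$. If it is, then $\bm{x}^\ast\in M_{t'}$ and the termination condition gives $\lambda_{t'}(\bm{x}^\ast)\leq\min\{\epsilon_1,\epsilon_2\}$, from which the same rectangle-width calculation yields $l_{t'}^{(F_1)}(\bm{x}^\ast)\geq F_1(\bm{x}^\ast)-\epsilon_1$ and $\bm{x}^\ast \in S_{t'}$, so $F_1(\hat{\bm{x}}_{t'}) \geq l_{t'}^{(F_1)}(\hat{\bm{x}}_{t'}) \geq l_{t'}^{(F_1)}(\bm{x}^\ast) \geq F_1(\bm{x}^\ast)-\epsilon_1$. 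If instead $\bm{x}^\ast \notin M_{t'}^{(\text{obj})}$, the very definition of $M_{t'}^{(\text{obj})}$ yields $l_{t'}^{(F_1)}(\hat{\bm{x}}_{t'}) = \max_{\bm{x}\in S_{t'}} l_{t'}^{(F_1)}(\bm{x}) > u_{t'}^{(F_1)}(\bm{x}^\ast) + \epsilon_1 \geq F_1(\bm{x}^\ast)+\epsilon_1$, which is strictly stronger than required. The constraint side $F_2(\hat{\bm{x}}_{t'}) \geq l_{t'}^{(F_2)}(\hat{\bm{x}}_{t'}) \geq h-\epsilon_2$ is immediate from $\hat{\bm{x}}_{t'} \in S_{t'}$. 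The main conceptual obstacle is exactly this second sub-case: because elements of $S_{t'}$ are only guaranteed to be $\epsilon_2$-feasible rather than truly feasible, one cannot directly invoke optimality of $\bm{x}^\ast$ over the true feasible region to bound their $F_1$-values; the resolution is to turn the defining inequality of $M_{t'}^{(\text{obj})}$ around and obtain a strictly stronger $F_1$-guarantee for $\hat{\bm{x}}_{t'}$ precisely in the regime where the comparison would otherwise be inaccessible.
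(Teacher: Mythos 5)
Your proposal is correct and follows essentially the same route as the paper's proof: the same high-probability event, the same bound on $T^{-1}\sum_t \lambda_t(\bm{x}_t)$ for termination, and the same case split on whether $\bm{x}^\ast \in M_{t'}^{(\text{obj})}$, with the identical use of the definition of $M_{t'}^{(\text{obj})}$ to handle the case where $\bm{x}^\ast$ has been excluded. The only (immaterial) difference is that you rule out $S_{t'}=\emptyset$ upfront by contradiction, whereas the paper establishes $S_{t'}\neq\emptyset$ separately inside each of the two cases.
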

\begin{proof}
Assume that (\ref{eq:f_cred_cond}), (\ref{eq:sigma_int}) and  (\ref{eq:sigma_int_sq})
 hold. 
Then, by using the same argument as in the proof of Theorem 
 \ref{thm:par_conv}, we get 
\begin{equation}
    \label{eq:lambda_mean_cons}
    \frac{1}{T}\sum_{t=1}^T \lambda_t(\bm{x}_t) \leq T^{-1} \beta_T^{1/2}\left\{ \sqrt{2TC_1\gamma_T} + C_2\right\} + T^{-1} \sqrt{2T\tilde{B} \beta_T^{1/2} \left\{ \sqrt{8TC_1\gamma_T} + 2C_2 \right\}
     + 5 T\beta_T \left\{C_1\gamma_T + 2C_2 \right\}}.
\end{equation}
Here, from the definition of  $T$, the right-hand side of (\ref{eq:lambda_mean_cons}) is less than or equal to $\min\{\epsilon_1, \epsilon_2\}$. 
Hence,  there exists a positive integer $t^\prime \leq T$ such that 
 ${\rm max}_{\bm{x} \in M_{t^\prime}} \lambda_{t^\prime}(\bm{x}) =
\lambda_{t^\prime}(\bm{x}_{t^\prime}) \leq \min\{\epsilon_1, \epsilon_2\}$. 
This implies that the algorithm terminates after at most $T$ iterations. 

Next, we prove claim 2 of the theorem. Assume that ${\bm x}^\ast$ exists.  Here, we consider the two cases  ${\bm x}^\ast \in M_{t^\prime} ^{({\rm obj})}$ and ${\bm x}^\ast \notin M_{t^\prime} ^{({\rm obj})}$. For case ${\bm x}^\ast \in M_{t^\prime} ^{({\rm obj})}$, 
since (\ref{eq:f_cred_cond}) holds, the following inequality holds:
$$
h-\epsilon_2 \leq h \leq F_2 ({\bm x}^\ast ) \leq u^{(F_2)}_{t^\prime} ({\bm x}^\ast ) .
$$
This means that ${\bm x}^\ast \in M_{t^\prime} ^{({\rm cons})}$. Therefore, we have 
${\bm x}^\ast \in M_{t^\prime} $. 
Furthermore, noting that 
$u_t^{(F_1)}(\bm{x}) - l_t^{(F_1)}(\bm{x}) \leq \lambda_t(\bm{x})$ and 
$u_t^{(F_2)}(\bm{x}) - l_t^{(F_2)}(\bm{x}) \leq \lambda_t(\bm{x})$, 
it holds that 
 \begin{align}
     \label{eq:F1_cred_eps}
     \max_{\bm{x} \in M_{t^\prime}} \left\{ u_{t^\prime}^{(F_1)}(\bm{x})
      - l_{t^\prime}^{(F_1)}(\bm{x}) \right\} &\leq \epsilon_1, \\
     \label{eq:F2_cred_eps}
     \max_{\bm{x} \in M_{t^\prime}} \left\{ u_{t^\prime}^{(F_2)}(\bm{x})
      - l_{t^\prime}^{(F_2)}(\bm{x}) \right\}
      &\leq \epsilon_2.
 \end{align}
 Here, if  $l_{t^\prime}^{(F_2)}(\bm{x}^\ast) < h - \epsilon_2$, then from 
  (\ref{eq:F2_cred_eps}), we get  $u_{t^\prime}^{(F_2)}(\bm{x}^\ast) < h$. 
Thus, from (\ref{eq:f_cred_cond}), we obtain $F_2(\bm{x}^\ast) < h$. However, this contradicts the definition of $\bm{x}^\ast$, implying that $l_{t^\prime}^{(F_2)}(\bm{x}^\ast) \geq h - \epsilon_2$ and 
 $\bm{x}^\ast \in S_{t^\prime} \neq \emptyset$.
Moreover, from (\ref{eq:F1_cred_eps}) the following holds:
 \begin{align}
     \quad &\max_{\bm{x} \in M_{t^\prime}} \left\{ u_{t^\prime}^{(F_1)}(\bm{x})
      - l_{t^\prime}^{(F_1)}(\bm{x}) \right\} \leq \epsilon_1 \nonumber \\
      \Rightarrow & u_{t^\prime}^{(F_1)}(\bm{x}^\ast )
       - l_{t^\prime}^{(F_1)}({\bm x}^\ast) \leq \epsilon_1 \nonumber \\
       \Rightarrow &u_{t^\prime}^{(F_1)}(\bm{x}^\ast )
       - \max_{{\bm x} \in S_{t^\prime}}l_{t^\prime}^{(F_1)}({\bm x}) \leq \epsilon_1 \nonumber \\
\Rightarrow & u_{t^\prime}^{(F_1)}(\bm{x}^\ast )
       - l_{t^\prime}^{(F_1)}(\hat{\bm x}_{t^\prime}) \leq \epsilon_1 \nonumber \\
        \Rightarrow
         &l_{t^\prime}^{(F_1)}(\hat{\bm{x}}_{t^\prime})
         \geq  u_{t^\prime}^{(F_1)}(\bm{x}^\ast) - \epsilon_1. \nonumber
 \end{align}
 In addition, from the definition of 
  $S_{t^\prime}$, we have
 \begin{equation}
     l_{t^\prime}(\hat{\bm{x}}_{t^\prime}) \geq h - \epsilon_2. \nonumber
 \end{equation}
On the other hand, if ${\bm x}^\ast \notin M_{t^\prime} ^{({\rm obj})}$, then $M_{t^\prime} ^{({\rm obj})} \neq \mathcal{X}$. 
Thus, from the definition of $M_{t^\prime} ^{({\rm obj})}  $, it holds that $S_{t^\prime}  \neq \emptyset$. 
Therefore, we get 
$$
   l_{t^\prime}(\hat{\bm{x}}_{t^\prime}) =   \max_{{\bm x} \in S_{t^\prime}} l_{t^\prime}({\bm x})   \geq h - \epsilon_2.
$$
Furthermore, since  ${\bm x}^\ast \notin M_{t^\prime} ^{({\rm obj})}$, it holds that 
$$
u_{t^\prime}^{(F_1)}(\bm{x}^\ast) - \epsilon_1 \leq u_{t^\prime}^{(F_1)}(\bm{x}^\ast)  < l_{t^\prime}^{(F_1)}(\hat{\bm{x}}_{t^\prime}) - \epsilon_1 \leq l_{t^\prime}^{(F_1)}(\hat{\bm{x}}_{t^\prime}).
$$ 
Therefore, if ${\bm x}^\ast $ exists, then we have $S_{t^\prime} \neq \emptyset $ and 
\begin{align}
        \label{eq:cred_eps_max}
         l_{t^\prime}^{(F_1)}(\hat{\bm{x}}_{t^\prime})
        & \geq  u_{t^\prime}^{(F_1)}(\bm{x}^\ast) - \epsilon_1, \\
 \label{eq:xhat_cons}
     l_{t^\prime}(\hat{\bm{x}}_{t^\prime}) &\geq h - \epsilon_2.
\end{align}
Note that 
(\ref{eq:cred_eps_max}) and  (\ref{eq:xhat_cons}) imply that 
 $\hat{\bm{x}}_{t^\prime}$ is an $\epsilon$-accurate solution when 
  (\ref{eq:f_cred_cond}) holds. 
 Finally, since (\ref{eq:f_cred_cond}), (\ref{eq:sigma_int}) and (\ref{eq:sigma_int_sq}) hold 
with probability at least $1-\delta$, we have Theorem \ref{thm:cons_conv}.
\end{proof}


\section{Details of Section \ref{sec:ext_setting}}\label{sec:suppC}
\subsection{Noisy Input Setting}
In this subsection, we consider the setting where the input $\bm{x}$ contains a noise $\bm{\xi} \in \Delta$.
Let $\mathcal{X} \subset \mathbb{R}^{d}$ be an input space for optimization. 
In addition, assume that $\mathcal{X} $ is a finite set. 
Furthermore, let 
$\Delta \subset \mathbb{R}^{d}$ be a compact and convex set, and let 
$\bm{\xi}$ be a random noise satisfying $\bm{\xi} \in \Delta$.
Moreover, let 
  $f$ be a black-box function on $\mathcal{D}
\coloneqq \{\bm{x} + \bm{\xi} \mid \bm{x} \in \mathcal{X}, \bm{\xi} \in \Delta\}$, and 
let $k: \mathcal{D} \times \mathcal{D} \rightarrow \mathbb{R}$ be a positive-definite kernel with  
$f \in \mathcal{H}_k$ and  $\|f\|_{\mathcal{H}_k} \leq B$.

For each step $t$, we select an observation point  $\bm{x}_t \in \mathcal{X}$, and the observed value is 
obtained as 
 $y_t = f(\bm{x}_t + \bm{\xi}_t) + \eta_t$.
Here,  $\eta_t$ is the independent normal distribution 
 $\eta_t \sim \mathcal{N}(0, \sigma^2)$, and 
  $\bm{\xi}_t$ is the observed value of $\bm{\xi}$.

In this setting, the expected value and variance of $f(\bm{x})$ with respect to 
 $\bm{\xi}$ are given by 
\begin{align}
    \label{eq:noisy_e_app}
    \mathbb{E}_{\bm{\xi}}[f(\bm{x} + \bm{\xi})] &= \int_{\Delta} f(\bm{x} + \bm{\xi}) p(\bm{\xi}) \text{d}\bm{\xi}, \\
    \label{eq:noisy_v_app}
    \mathbb{V}_{\bm{\xi}}[f(\bm{x} + \bm{\xi})] &= \int_{\Delta}
    \{f(\bm{x} + \bm{\xi}) - \mathbb{E}_{\bm{\xi}}[f(\bm{x} +
     \bm{\xi})]\}^2 p(\bm{\xi}) \text{d}\bm{\xi},
\end{align}
where $p(\bm{\xi})$ is a known probability density function of $\bm{\xi}$. 
Similarly as in (\ref{eq:obj}), using \eqref{eq:noisy_e_app} and \eqref{eq:noisy_v_app} we define the optimization objective functions 
 $F_1$ and  $F_2$. 
In addition,  
let $\mu_t(\bm{x})$, $\sigma_t^2(\bm{x})$ and $Q_t(\bm{x}) \coloneqq [l_t(\bm{x}), u_t(\bm{x})]$ denote the posterior mean, posterior variance and confidence bound of $f(\bm{x})$ at the step $t$, respectively.

\paragraph{Confidence Bound}
Confidence bounds of objective functions $F_1$ and  $F_2$ defined by using 
 (\ref{eq:noisy_e_app}) and  (\ref{eq:noisy_v_app}) can also be constructed by using the same procedure as in 
 \ref{sec:cred}.
First, assume that 
$f(\tilde{\bm{x}}) \in Q_t(\tilde{\bm{x}})$ for any 
$\tilde{\bm{x}} \in \mathcal{D}$. 
Then, the following holds for any $\bm{x} \in \mathcal{X}$:
\begin{equation*}
    \int_{\Delta} l_t(\bm{x} + \bm{\xi}) p(\bm{\xi}) \text{d}\bm{\xi}
    \leq \int_{\Delta} f(\bm{x} + \bm{\xi}) p(\bm{\xi}) \text{d}\bm{\xi}
    \leq \int_{\Delta} u_t(\bm{x} + \bm{\xi}) p(\bm{\xi}) \text{d}\bm{\xi}.
\end{equation*}
Therefore, the confidence bound 
 $Q_t^{(F_1)}(\bm{x})$ of 
 $F_1(\bm{x})$ can be constructed as 
$Q_t^{(F_1)}(\bm{x}) \coloneqq
[l_t^{(F_1)}(\bm{x}), u_t^{(F_1)}(\bm{x})]$  using  
\begin{equation*}
    l_t^{(F_1)}(\bm{x}) = \int_{\Delta} l_t(\bm{x} + \bm{\xi}) p(\bm{\xi}) \text{d}\bm{\xi},~
    u_t^{(F_1)}(\bm{x}) = \int_{\Delta} u_t(\bm{x} + \bm{\xi}) p(\bm{\xi}) \text{d}\bm{\xi}.
\end{equation*}
Similarly,   the confidence bound 
$Q_t^{(F_2)}(\bm{x}) $ of 
 $F_2(\bm{x})$ can be expressed as 
$Q_t^{(F_2)}(\bm{x}) \coloneqq [l_t^{(F_2)}(\bm{x}), u_t^{(F_2)}(\bm{x})]$ using 
 \begin{align*}
     l_t^{(F_2)}(\bm{x}) = -\sqrt{\int_{\Delta} \tilde{u}_t^{(\text{sq})}
     (\bm{x} + \bm{\xi}) p(\bm{\xi}) \text{d}\bm{\xi}},~
     l_t^{(F_2)}(\bm{x}) = -\sqrt{\int_{\Delta} \tilde{l}_t^{(\text{sq})}
     (\bm{x} + \bm{\xi}) p(\bm{\xi}) \text{d}\bm{\xi}},
 \end{align*}
where $\tilde{l}_t^{(\text{sq})}(\bm{x} + \bm{\xi})$  and $ \tilde{u}_t^{(\text{sq})}(\bm{x} + \bm{\xi}) $ are given by 
 \begin{align*}
     \tilde{l}_t(\bm{x} + \bm{\xi})
     &= l_t(\bm{x} + \bm{\xi}) - \mathbb{E}_{\bm{\xi}}[u_t(\bm{x} + \bm{\xi})], \\
     \tilde{u}_t(\bm{x} + \bm{\xi})
     &= u_t(\bm{x} + \bm{\xi}) - \mathbb{E}_{\bm{\xi}}[l_t(\bm{x} + \bm{\xi})], \\
     \tilde{l}_t^{(\text{sq})}(\bm{x} + \bm{\xi})
     &= \begin{cases}
        0 & \text{if}~ \tilde{l}_t(\bm{x} + \bm{\xi}) \leq
        0 \leq \tilde{u}_t(\bm{x} + \bm{\xi}), \\
        \min \left\{\tilde{l}_t^2(\bm{x} + \bm{\xi}), \tilde{u}_t^2(\bm{x} +
        \bm{\xi}) \right\} & \text{otherwise}
     \end{cases}, \\
     \tilde{u}_t^{(\text{sq})}(\bm{x} + \bm{\xi}) &=
     \max\left\{\tilde{l}_t^2(\bm{x} + \bm{\xi}), \tilde{u}_t^2(\bm{x} +
     \bm{\xi})\right\}.
 \end{align*}
 
Using $Q_t^{(F_1)}$ and $Q_t^{(F_2)}$ above, we can construct the proposed algorithm in the same procedure.

\subsection{Simulator Based Experiment}
In this subsection, we consider the setting that $\bm{w}_t$ can be selected in the optimization phase at each step.
Furthermore, we show theoretical guarantees in this setting. 
Hereafter, we   only discuss the multi-task scenario, but the same argument can be made for multi-objective and constraint optimization scenarios by selecting  $\bm{w}_t$ and $ \bm{\xi}_t$ in the same procedure.
 
In our proposed algorithm, $(\bm{x}_t, \bm{w}_t)$  at the step $t$ is selected by
\begin{align*}
    \bm{x}_t &= \argmax_{\bm{x} \in \mathcal{X}} u_t^{(G)}(\bm{x}), \\
    \bm{w}_t &= \argmax_{\bm{w} \in \Omega} \sigma_{t-1}(\bm{x}_t, \bm{w}).
\end{align*}
In this algorithm, the following theorem holds:
\begin{theorem}\label{thm:sim_sca_conv}
    Let $k$ be a positive-definite kernel, and let   $f \in \mathcal{H}_k$ with  $\|f\|_{\mathcal{H}_k} \leq B$. 
Also let 
    $\delta \in (0, 1)$, $\epsilon > 0$, and define $\beta_t = \left( \sqrt{\ln \det (\bm{I}_t + \sigma^{-2}\bm{K}_t) + 2 \ln \frac{1}{\delta}} + B\right)^2$. 
Moreover, for any 
     $t$, define $\hat{\bm{x}}_t = {\rm argmax}_{\bm{x}_{t^\prime} \in \{\bm{x}_1, \ldots, \bm{x}_t\}} l_{t^\prime}^{(G)}(\bm{x}_{t^\prime})$. 
Then, when the proposed algorithm in the 
  simulator based setting is performed,   $\hat{\bm{x}}_T$ is the $\epsilon$-accurate solution with probability at least 
     $1-\delta$, where $T$ is the smallest positive integer satisfying 
    \begin{align*}
        \alpha T^{-1} \beta_T^{1/2} \sqrt{TC_1\gamma_T} + (1 - \alpha)T^{-1} \sqrt{4T\tilde{B} \beta_T^{1/2} \sqrt{TC_1\gamma_T}
         + 5 T\beta_T C_1\gamma_T} \leq \epsilon.
    \end{align*}
    Here,   $\tilde{B}$ and $C_1$ are given by   $\tilde{B} = {\rm max}_{(\bm{x}, \bm{w}) \in (\mathcal{X} \times \Omega)}
    \left\{ f(\bm{x}, \bm{w}) - \mathbb{E}_{\bm{w}}[f(\bm{x}, \bm{w})]\right\}$
     and $C_1=\frac{8}{\ln(1 + \sigma^{-2})}$.
\end{theorem}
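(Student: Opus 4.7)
The plan is to mirror the proof of Theorem \ref{thm:sca_conv}, replacing the two martingale-concentration steps (\ref{eq:sigma_int}) and (\ref{eq:sigma_int_sq}) by direct deterministic bounds that exploit the fact that $\bm{w}_t$ is now chosen as $\argmax_{\bm{w}} \sigma_{t-1}(\bm{x}_t, \bm{w})$. Because we no longer need to split the failure probability across the confidence bound and the two Freedman-type lemmas, setting $\beta_t$ with $\ln(1/\delta)$ rather than $\ln(3/\delta)$ is sufficient, and the Lemma \ref{lem:f_cred} event (\ref{eq:f_cred_cond}) holds with probability at least $1-\delta$ by itself. Throughout the proof I would condition on (\ref{eq:f_cred_cond}).

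First, I would reuse (\ref{eq:regret_bound}) verbatim: since $G(\bm{x}) \in [l_t^{(G)}(\bm{x}), u_t^{(G)}(\bm{x})]$ under (\ref{eq:f_cred_cond}) and $\bm{x}_t$ maximizes the UCB,
\begin{equation*}
G(\bm{x}^\ast) - G(\hat{\bm{x}}_T) \le \frac{1}{T}\sum_{t=1}^T \left\{u_t^{(G)}(\bm{x}_t) - l_t^{(G)}(\bm{x}_t)\right\}.
\end{equation*}
Next I would apply Lemma \ref{lem:R_bound}, which is purely deterministic given (\ref{eq:f_cred_cond}), to upper bound the right hand side by
\begin{equation*}
2\alpha\beta_T^{1/2}\sum_{t=1}^T\! \int_{\Omega}\!\!\sigma_{t-1}(\bm{x}_t,\bm{w})p(\bm{w}) \text{d}\bm{w} + (1-\alpha)\sqrt{8T\tilde{B}\beta_T^{1/2}\sum_{t=1}^T\! \int_{\Omega}\!\!\sigma_{t-1}(\bm{x}_t,\bm{w})p(\bm{w})\text{d}\bm{w} + 20T\beta_T\sum_{t=1}^T\!\int_{\Omega}\!\!\sigma_{t-1}^2(\bm{x}_t,\bm{w})p(\bm{w})\text{d}\bm{w}}.
\end{equation*}

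Here is where the simulator choice enters. By definition $\sigma_{t-1}(\bm{x}_t,\bm{w}_t) = \max_{\bm{w}\in\Omega}\sigma_{t-1}(\bm{x}_t,\bm{w})$, so for every $t$
\begin{equation*}
\int_{\Omega}\sigma_{t-1}(\bm{x}_t,\bm{w})p(\bm{w})\text{d}\bm{w} \le \sigma_{t-1}(\bm{x}_t,\bm{w}_t), \qquad \int_{\Omega}\sigma_{t-1}^2(\bm{x}_t,\bm{w})p(\bm{w})\text{d}\bm{w} \le \sigma_{t-1}^2(\bm{x}_t,\bm{w}_t).
\end{equation*}
These replace (\ref{eq:sigma_int}) and (\ref{eq:sigma_int_sq}) and, crucially, remove the $8\ln(18/\delta)$ additive slack (hence the absence of $C_2$ in the stated rate). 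Then I would invoke Lemma \ref{lem:gamma_T} and Cauchy–Schwarz (as in (\ref{eq:sigma_sum}) and (\ref{eq:sigma_sq_sum})) to bound
\begin{equation*}
\sum_{t=1}^T\sigma_{t-1}^2(\bm{x}_t,\bm{w}_t) \le \frac{2\gamma_T}{\ln(1+\sigma^{-2})}, \qquad \sum_{t=1}^T\sigma_{t-1}(\bm{x}_t,\bm{w}_t) \le \sqrt{\frac{2T\gamma_T}{\ln(1+\sigma^{-2})}}.
\end{equation*}
Plugging these in and simplifying (the factor 2 instead of the factor 4 that arose from doubling $\tilde\sigma_{t-1}$ via $2\int\sigma_{t-1}p\,d\bm{w}$ gives the constant $C_1=8/\ln(1+\sigma^{-2})$ instead of $16/\ln(1+\sigma^{-2})$) yields
\begin{equation*}
G(\bm{x}^\ast)-G(\hat{\bm{x}}_T) \le \alpha T^{-1}\beta_T^{1/2}\sqrt{TC_1\gamma_T} + (1-\alpha)T^{-1}\sqrt{4T\tilde{B}\beta_T^{1/2}\sqrt{TC_1\gamma_T}+5T\beta_T C_1\gamma_T},
\end{equation*}
from which the statement follows by the defining condition on $T$.

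There is no real obstacle beyond carefully tracking the constants when the simulator bound is substituted: the factors of $2$ that previously came from $\tilde\sigma_{t-1}=\sigma_{t-1}+\mathbb{E}_{\bm{w}}[\sigma_{t-1}]$ and from the Freedman step now coalesce, and one needs to verify that the $20$ and $8$ inside the square root of Lemma \ref{lem:R_bound} become $5$ and $4$ respectively after substitution, matching the stated rate. Everything else is mechanical given Lemmas \ref{lem:f_cred}, \ref{lem:R_bound}, and \ref{lem:gamma_T}.
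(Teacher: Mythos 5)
Your proposal is correct and follows essentially the same route as the paper's proof: condition on the Lemma \ref{lem:f_cred} event (which alone costs the full $\delta$, hence $\ln(1/\delta)$ in $\beta_t$), apply Lemma \ref{lem:R_bound}, replace the concentration steps (\ref{eq:sigma_int})--(\ref{eq:sigma_int_sq}) by the deterministic bounds $\int_\Omega \sigma_{t-1}(\bm{x}_t,\bm{w})p(\bm{w})\,\mathrm{d}\bm{w} \le \sigma_{t-1}(\bm{x}_t,\bm{w}_t)$ coming from the max-variance choice of $\bm{w}_t$, and finish with Lemma \ref{lem:gamma_T} and the averaging argument of (\ref{eq:regret_bound}). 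Your constant tracking ($C_1 = 8/\ln(1+\sigma^{-2})$, the coefficients $4$ and $5$ inside the square root, and the disappearance of $C_2$) matches the paper exactly.
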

\begin{proof}
Assume that (\ref{eq:f_cred_cond}) holds. Then, from Lemma 
    \ref{lem:R_bound} we have
    \begin{align*}
        &\sum_{t=1}^{T}\left\{u_t^{(G)}(\bm{x}_t) - l_t^{(G)}(\bm{x}_t)\right\}
        \leq 2 \alpha \beta_T^{1/2} \sum_{t=1}^T \int_{\Omega} \sigma_{t-1}(\bm{x}_t, \bm{w})
        p(\bm{w}) \text{d}\bm{w} \\
        &\hspace{20pt} + (1-\alpha) \sqrt{8T\tilde{B} \beta_T^{1/2}
        \sum_{t=1}^T\int_{\Omega} \sigma_{t-1}(\bm{x}_t, \bm{w}) p(\bm{w}) \text{d}\bm{w}
         + 20 T\beta_T \sum_{t=1}^T\int_{\Omega} \sigma_{t-1}^2(\bm{x}_t, \bm{w}) p(\bm{w})
          \text{d}\bm{w}}.
    \end{align*}
    In addition, from the definition of $\bm{w}_t$, it holds that 
    \begin{align*}
        \sum_{t=1}^T \int_{\Omega} \sigma_{t-1}(\bm{x}_t, \bm{w})
        p(\bm{w}) \text{d}\bm{w} \leq \sum_{t=1}^T \sigma_{t-1}(\bm{x}_t, \bm{w}_t), \\
        \sum_{t=1}^T \int_{\Omega} \sigma_{t-1}^2(\bm{x}_t, \bm{w})
        p(\bm{w}) \text{d}\bm{w} \leq \sum_{t=1}^T \sigma_{t-1}^2(\bm{x}_t, \bm{w}_t).
    \end{align*}
   Hence, we get 
    \begin{equation*}
        \sum_{t=1}^{T}\left\{u_t^{(G)}(\bm{x}_t) - l_t^{(G)}(\bm{x}_t)\right\}
        \leq 2 \alpha \beta_T^{1/2} \sum_{t=1}^T \sigma_{t-1}(\bm{x}_t, \bm{w}_t)
      + (1-\alpha) \sqrt{8T\tilde{B} \beta_T^{1/2}
        \sum_{t=1}^T \sigma_{t-1}(\bm{x}_t, \bm{w}_t)
         + 20 T\beta_T \sum_{t=1}^T \sigma_{t-1}^2(\bm{x}_t, \bm{w}_t)}.
    \end{equation*}
   Furthermore, from   (\ref{eq:sigma_sum}) and 
     (\ref{eq:sigma_sq_sum}), we obtain
    \begin{equation*}
        \sum_{t=1}^{T}\left\{u_t^{(G)}(\bm{x}_t) - l_t^{(G)}(\bm{x}_t)\right\}
         \leq \alpha \beta_T^{1/2} \sqrt{C_1T\gamma_T}
      + (1-\alpha) \sqrt{4T\tilde{B} \beta_T^{1/2}
        \sqrt{C_1T\gamma_T}
         + 5 T\beta_T C_1\gamma_T}.
    \end{equation*}
   Finally, by using the same argument as in the proof of Theorem   \ref{thm:sca_conv}, the following inequality holds:
    \begin{equation*}
        G(\bm{x}^\ast) - G(\hat{\bm{x}}_T)
       \leq  \sum_{t=1}^{T}\left\{u_t^{(G)}(\bm{x}_t) - l_t^{(G)}(\bm{x}_t)\right\}/T.
    \end{equation*}
    Therefore, noting that the definition of $T$, we get the desired result.
\end{proof}

\paragraph{Noisy Input Extension}
Here, we extend the setting defined in subsection 
\ref{sec:noise_ext} to the simulator based setting.   
Since there is the noise $\bm{\xi} \in \Delta$ instead of 
 $\bm{w}$, we consider the observation point $\bm{x}_t$ at the step $t$ as 
$\bm{x}_t \coloneqq \tilde{\bm{x}}_t + \bm{\xi}_t$, where 
  $(\tilde{\bm{x}}_t, \bm{\xi}_t)$ is given by  
\begin{align*}
    \tilde{\bm{x}}_t &= \argmax_{\bm{x} \in \mathcal{X}} u_t^{(G)}(\bm{x}), \\
    \bm{\xi}_t &= \argmax_{\bm{\xi} \in \Delta} \sigma_{t-1}(\tilde{\bm{x}}_t + \bm{\xi}).
\end{align*}
Then,  similar theorems as in Theorem    \ref{thm:sim_sca_conv} hold. 
%
However, the practical performance of this algorithm is not much different from that of Uncertainty Sampling, which was used as the base method in numerical experiments.
For this reason, in the  simulator based noisy input setting, we propose a method for selecting $(\tilde{\bm{x}}_t, \bm{\xi}_t)$ as follows:
\begin{align*}
    \tilde{\bm{x}}_t &= \argmax_{\bm{x} \in \mathcal{X}} u_t^{(G)}(\bm{x}), \\
    \bm{\xi}_t &= \argmax_{\bm{\xi} \in \Delta} \sigma_{t-1}(\tilde{\bm{x}}_t + \bm{\xi})p(\bm{\xi}).
\end{align*}
In order to derive similar convergence results as in Theorem \ref{thm:sim_sca_conv}, we assume that 
the probability density function  $p(\bm{\xi})$ of ${\bm\xi}$ is a bounded function on $\Delta$, i.e.,  
 $\sup_{\bm{\xi} \in \mathcal{D}} p(\bm{\xi}) < \infty$.
\begin{theorem}\label{thm:sim_sca_noise_conv}
    Let $\delta \in (0, 1)$, $\epsilon > 0$, and set $\beta_t = \left( \sqrt{\ln \det (\bm{I}_t + \sigma^{-2}\bm{K}_t) + 2 \ln \frac{1}{\delta}} + B \right)^2$. For any 
     $t$,  define $\hat{\bm{x}}_t = {\rm argmax}_{\bm{x}_{t^\prime} \in \{\bm{x}_1, \ldots, \bm{x}_t\}} l_{t^\prime}^{(G)}(\bm{x}_{t^\prime})$. 
    Moreover, assume that ${\rm sup}_{\bm{\bm{\xi}} \in \Delta} p(\bm{\xi}) \leq R < \infty$. Then, when the proposed algorithm in the 
    simulator based noisy input setting is performed, $\hat{\bm{x}}_T$ is the $\epsilon$-accurate solution with probability at least 
$1-\delta$, where $T$ is the smallest positive integer satisfying
    \begin{align*}
        \alpha T^{-1} \beta_T^{1/2}R \sqrt{TC_1\gamma_T} + (1 - \alpha)T^{-1} \sqrt{4T\tilde{B}R \beta_T^{1/2} \sqrt{TC_1\gamma_T}
         + 5 TR\beta_T C_1\gamma_T} \leq \epsilon.
    \end{align*}
Here,  $\tilde{B} $ and $C_1$ are given by 
    $\tilde{B} = {\rm max}_{(\bm{x}, \bm{\xi}) \in (\mathcal{X} \times \Delta)}
    \left\{ f(\bm{x} + \bm{\xi}) - \mathbb{E}_{\bm{\xi}}[f(\bm{x} + \bm{\xi})]\right\}$
    and $C_1=\frac{8}{\ln(1 + \sigma^{-2})}$.
\end{theorem}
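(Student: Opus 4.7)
The plan is to mirror the proof of Theorem \ref{thm:sim_sca_conv} essentially line for line, with the only substantive novelty being the step that relates the integrated posterior variance against $p(\bm{\xi})\,\mathrm{d}\bm{\xi}$ to the posterior variance at the selected point $\bm{x}_t = \tilde{\bm{x}}_t + \bm{\xi}_t$. The density bound $p(\bm{\xi}) \leq R$ is precisely what enables this step once we change the selection rule from $\argmax_{\bm{\xi}} \sigma_{t-1}(\tilde{\bm{x}}_t + \bm{\xi})$ to the weighted version $\argmax_{\bm{\xi}} \sigma_{t-1}(\tilde{\bm{x}}_t + \bm{\xi}) p(\bm{\xi})$.

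First I would instantiate Lemma \ref{lem:R_bound} in the noisy-input framework, where $F_1$, $F_2$ are defined through \eqref{eq:noisy_e_app} and \eqref{eq:noisy_v_app}. The derivation of the lemma carries over verbatim under the substitutions $\bm{w}\mapsto \bm{\xi}$, $(\bm{x},\bm{w})\mapsto \bm{x}+\bm{\xi}$, and $\Omega \mapsto \Delta$, because it only uses linearity of expectation, the squaring bounds for $\tilde{l}_t^2,\tilde{u}_t^2$, Jensen's inequality, and the identity $\sqrt{a}-\sqrt{b}\leq \sqrt{a-b}$. This yields, assuming (\ref{eq:f_cred_cond}) holds,
\begin{align*}
\sum_{t=1}^T\bigl\{u_t^{(G)}(\tilde{\bm{x}}_t)-l_t^{(G)}(\tilde{\bm{x}}_t)\bigr\}
&\leq 2\alpha \beta_T^{1/2}\sum_{t=1}^T\int_{\Delta}\sigma_{t-1}(\tilde{\bm{x}}_t+\bm{\xi})p(\bm{\xi})\,\mathrm{d}\bm{\xi} \\
&\quad +(1-\alpha)\sqrt{8T\tilde{B}\beta_T^{1/2}\sum_{t=1}^T\!\int_\Delta\!\sigma_{t-1}(\tilde{\bm{x}}_t+\bm{\xi})p(\bm{\xi})\,\mathrm{d}\bm{\xi}+20T\beta_T\sum_{t=1}^T\!\int_\Delta\!\sigma^2_{t-1}(\tilde{\bm{x}}_t+\bm{\xi})p(\bm{\xi})\,\mathrm{d}\bm{\xi}}.
\end{align*}

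The next and key step is to bound each integrated variance term using the new selection rule together with $p \leq R$. By the definition of $\bm{\xi}_t$ we get, pointwise in $\bm{\xi}$, the key inequality
\[
\sigma_{t-1}(\tilde{\bm{x}}_t+\bm{\xi})\,p(\bm{\xi}) \;\leq\; \sigma_{t-1}(\tilde{\bm{x}}_t+\bm{\xi}_t)\,p(\bm{\xi}_t) \;=\; \sigma_{t-1}(\bm{x}_t)\,p(\bm{\xi}_t) \;\leq\; R\,\sigma_{t-1}(\bm{x}_t),
\]
from which I would argue the integrated bound $\int_\Delta \sigma_{t-1}(\tilde{\bm{x}}_t+\bm{\xi})p(\bm{\xi})\,\mathrm{d}\bm{\xi} \leq R\,\sigma_{t-1}(\bm{x}_t)$, and analogously $\int_\Delta \sigma^2_{t-1}(\tilde{\bm{x}}_t+\bm{\xi})p(\bm{\xi})\,\mathrm{d}\bm{\xi} \leq R\,\sigma^2_{t-1}(\bm{x}_t)$ by the same chain applied to $\sigma_{t-1}^2\,p$. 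Summing over $t$ and invoking Lemma \ref{lem:gamma_T} and \eqref{eq:sigma_sq_sum} delivers $\sum_{t=1}^T\sigma_{t-1}(\bm{x}_t)\leq \tfrac{1}{2}\sqrt{TC_1\gamma_T}$ and $\sum_{t=1}^T\sigma^2_{t-1}(\bm{x}_t)\leq C_1\gamma_T/8$, hence the $R$-weighted versions of these bounds for the two integrated sums.

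The proof then closes by the standard regret-decomposition argument from Theorem \ref{thm:sca_conv}: since $\tilde{\bm{x}}_t$ maximizes $u_t^{(G)}$ and $G(\bm{x})\in[l_t^{(G)}(\bm{x}),u_t^{(G)}(\bm{x})]$ for all $\bm{x}$ under (\ref{eq:f_cred_cond}), one obtains $G(\bm{x}^\ast)-G(\hat{\bm{x}}_T)\leq T^{-1}\sum_{t=1}^T\{u_t^{(G)}(\tilde{\bm{x}}_t)-l_t^{(G)}(\tilde{\bm{x}}_t)\}$, and substituting the bound from the previous steps gives the inequality defining $T$. Note that unlike Theorem \ref{thm:sca_conv}, no $\delta/3$ split is needed because $\bm{\xi}_t$ is chosen deterministically, so Lemma \ref{lem:mean_bound} and inequalities (\ref{eq:sigma_int}), (\ref{eq:sigma_int_sq}) are not invoked; this explains why $\beta_t$ in the theorem uses $\ln(1/\delta)$ rather than $\ln(3/\delta)$. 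The main technical obstacle will be making rigorous the step from the pointwise estimate $\sigma_{t-1}(\tilde{\bm{x}}_t+\bm{\xi})p(\bm{\xi})\leq R\,\sigma_{t-1}(\bm{x}_t)$ to the integrated estimate $\int_\Delta \sigma_{t-1}(\tilde{\bm{x}}_t+\bm{\xi})p(\bm{\xi})\,\mathrm{d}\bm{\xi}\leq R\,\sigma_{t-1}(\bm{x}_t)$, since a naive pointwise-then-integrate argument produces a spurious $|\Delta|$ factor; the boundedness assumption $\sup_{\bm{\xi}\in\Delta}p(\bm{\xi})\leq R$ is precisely what is needed to absorb this factor into $R$ (together with the fact that $p$ is a probability density, $\int_\Delta p\,\mathrm{d}\bm{\xi}=1$).
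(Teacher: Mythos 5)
Your proposal follows the paper's proof of Theorem \ref{thm:sim_sca_noise_conv} essentially step for step: instantiate Lemma \ref{lem:R_bound} with $\bm{\xi}$, $\Delta$ in place of $\bm{w}$, $\Omega$; use the weighted selection rule together with $p \le R$ to replace the two integrated posterior-variance sums by $R\sum_t \sigma_{t-1}(\bm{x}_t+\bm{\xi}_t)$ and $R\sum_t \sigma_{t-1}^2(\bm{x}_t+\bm{\xi}_t)$; then close with Lemma \ref{lem:gamma_T}, \eqref{eq:sigma_sq_sum}, and the regret decomposition from the proof of Theorem \ref{thm:sca_conv}. Your remark that no $\delta/3$ split (hence $\ln(1/\delta)$ in $\beta_t$) is needed because $\bm{\xi}_t$ is chosen deterministically is exactly the paper's reasoning. (One minor arithmetic slip: with $C_1 = 8/\ln(1+\sigma^{-2})$, Lemma \ref{lem:gamma_T} gives $\sum_t \sigma_{t-1}^2 \le C_1\gamma_T/4$, not $C_1\gamma_T/8$; the factor $1/4$ is what produces the $5TR\beta_T C_1\gamma_T$ term.)

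The step you flag as ``the main technical obstacle'' is, however, not actually closed by your proposal: the two facts you invoke, $\sup_{\bm{\xi}} p(\bm{\xi}) \le R$ and $\int_\Delta p\,\mathrm{d}\bm{\xi} = 1$, do not combine to give $\int_\Delta \sigma_{t-1}(\tilde{\bm{x}}_t+\bm{\xi})p(\bm{\xi})\,\mathrm{d}\bm{\xi} \le R\,\sigma_{t-1}(\bm{x}_t)$. Integrating your pointwise bound $\sigma_{t-1}(\tilde{\bm{x}}_t+\bm{\xi})p(\bm{\xi}) \le R\,\sigma_{t-1}(\bm{x}_t)$ yields $\mathrm{vol}(\Delta)\,R\,\sigma_{t-1}(\bm{x}_t)$, i.e.\ exactly the spurious volume factor; using $\int_\Delta p = 1$ instead yields $\max_{\bm{\xi}}\sigma_{t-1}(\tilde{\bm{x}}_t+\bm{\xi})$, which under the \emph{weighted} rule is not controlled by $R\,\sigma_{t-1}(\bm{x}_t)$, since the maximizer of $\sigma_{t-1}p$ need not be near the maximizer of $\sigma_{t-1}$. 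The inequality can genuinely fail: at $t=1$ a normalized kernel gives $\sigma_0 \equiv 1$, so with $p$ uniform on a $\Delta$ of volume $V>1$ and the natural choice $R = \sup p = 1/V$, the claim reads $1 \le R\cdot 1 = 1/V$. To be fair, the paper's own proof asserts the very same chain $\sum_t\int_\Delta \sigma_{t-1}p\,\mathrm{d}\bm{\xi} \le \sum_t \sigma_{t-1}(\bm{x}_t+\bm{\xi}_t)p(\bm{\xi}_t) \le R\sum_t\sigma_{t-1}(\bm{x}_t+\bm{\xi}_t)$ with no further justification, so your write-up is no less rigorous than the original; but to make it airtight you need an additional hypothesis such as $\mathrm{vol}(\Delta) \le 1$, or more generally $\mathrm{vol}(\Delta)\,\sup_{\bm{\xi}}p(\bm{\xi}) \le R$, under which the pointwise-then-integrate route does go through.
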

\begin{proof}
    Similarly as in Lemma \ref{lem:R_bound}, with probability at least $1-\delta$, it holds that 
    \begin{align*}
        &\sum_{t=1}^T\left\{u_t^{(G)}(\bm{x}_t) - l_t^{(G)}(\bm{x}_t)\right\}
        \leq 2 \alpha \beta_T^{1/2} \sum_{t=1}^T \int_{\Delta} \sigma_{t-1}(\bm{x}_t + \bm{\xi})
        p(\bm{\xi}) \text{d}\bm{\xi} \\
        &\hspace{20pt} + (1-\alpha) \sqrt{8T\tilde{B} \beta_T^{1/2}
        \sum_{t=1}^T\int_{\Delta} \sigma_{t-1}(\bm{x}_t + \bm{\xi}) p(\bm{w}) \text{d}\bm{\xi}
         + 20 T\beta_T \sum_{t=1}^T\int_{\Delta} \sigma_{t-1}^2(\bm{x}_t + \bm{\xi}) p(\bm{\xi})
          \text{d}\bm{\xi}}.
    \end{align*}
    Moreover, from the definition of $\bm{\xi}_t$, we have
    \begin{align*}
        \sum_{t=1}^T \int_{\Delta} \sigma_{t-1}(\bm{x}_t + \bm{\xi})
        p(\bm{\xi}) \text{d}\bm{\xi} \leq \sum_{t=1}^T \sigma_{t-1}(\bm{x}_t + \bm{\xi}_t)p(\bm{\xi}_t)
        \leq R \sum_{t=1}^T \sigma_{t-1}(\bm{x}_t + \bm{\xi}_t), \\
        \sum_{t=1}^T \int_{\Omega} \sigma_{t-1}^2(\bm{x}_t + \bm{\xi})
        p(\bm{\xi}) \text{d}\bm{\xi} \leq \sum_{t=1}^T \sigma_{t-1}^2(\bm{x}_t + \bm{\xi}_t)p(\bm{\xi}_t)
        \leq R \sum_{t=1}^T \sigma_{t-1}^2(\bm{x}_t + \bm{\xi}_t).
    \end{align*}
    Thus, we get 
    \begin{equation*}
        \sum_{t=1}^T\left\{u_t^{(G)}(\bm{x}_t) - l_t^{(G)}(\bm{x}_t)\right\}
         \leq 2 \alpha \beta_T^{1/2}R \sum_{t=1}^T \sigma_{t-1}(\bm{x}_t + \bm{\xi}_t)
      + (1-\alpha) \sqrt{8T\tilde{B} \beta_T^{1/2}
        R \sum_{t=1}^T \sigma_{t-1}(\bm{x}_t + \bm{\xi}_t)
         + 20 T\beta_T R\sum_{t=1}^T \sigma_{t-1}^2(\bm{x}_t + \bm{\xi}_t)},
    \end{equation*}
   and
    \begin{equation*}
        \sum_{t=1}^T\left\{u_t^{(G)}(\bm{x}_t) - l_t^{(G)}(\bm{x}_t)\right\}
        \leq \alpha \beta_T^{1/2}R \sqrt{C_1T\gamma_T}
      + (1-\alpha) \sqrt{4T\tilde{B}R \beta_T^{1/2}
        \sqrt{C_1T\gamma_T}
         + 5 T\beta_TR C_1 \gamma_T}.
    \end{equation*}
    By using the same argument as in the proof of \ref{thm:sca_conv}, we obtain the following inequality:
    \begin{equation*}
        G(\bm{x}^\ast) - G(\hat{\bm{x}}_T)
        \leq \sum_{t=1}^T\left\{u_t^{(G)}(\bm{x}_t) - l_t^{(G)}(\bm{x}_t)\right\}/T.
    \end{equation*}
   Therefore, we get the desired result.
\end{proof}

\section{Extension to Continuous Set}\label{sec:suppD}
In this section, we consider the setting where $\mathcal{X}$ is a continuous set. 
First, in MT-MVA-BO, $\bm{x}_t = {\rm argmax}_{\bm{x} \in \mathcal{X}} u_t^{(G)}(\bm{x})$ 
can be calculated by using  
a continuous optimization solver. 
However, in MO-MVA-BO, it is difficult to calculate 
the estimated Pareto set 
 $\hat{\Pi}_t$ and set of latent optimal solutions $M_t$. 
In this paper, based on \cite{DBLP:conf/icml/SrinivasKKS10} we extend the proposed algorithm by using a discretization set $\tilde{\mathcal{X}}$ of $\mathcal{X}$.

Hereafter, let  $\mathcal{X} = [0, 1]^{d_1}$.  
Furthermore, assume that  
$f$ is an $L$-Lipschitz continuous function, i.e., 
  there exists  $L > 0$ such that 
\begin{align*}
    |f(\bm{x}, \bm{w}) - f(\bm{x}^\prime, \bm{w})| \leq L \|\bm{x} - \bm{x}^\prime\|_1,
\end{align*}
for any $\bm{x}, \bm{x}^\prime \in \mathcal{X}$.
Note that Lipschitz continuity holds if standard kernels are used 
 \cite{sui2015safe, DBLP:conf/icml/SuiZBY18}. 

From    Lipschitz continuity of $f$, the following lemmas  about   $F_1$ and $F_2$ hold:
\begin{lemma}\label{lem:F1_lip}
    Let $f$ be an $L$-Lipschitz continuous function.   
Then, it holds that 
    \begin{align*}
        |F_1(\bm{x}) - F_1(\bm{x}^\prime)| \leq L\|\bm{x} - \bm{x}^\prime\|_1
        ,~\forall\bm{x}, \bm{x}^\prime \in \mathcal{X},
    \end{align*}
where  
$F_1$ is given by 
(\ref{eq:obj}).
\end{lemma}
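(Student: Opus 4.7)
The plan is to reduce the inequality directly to the pointwise Lipschitz assumption on $f$ via the triangle inequality for integrals. Since $F_1(\bm{x})$ is defined as $\int_{\Omega} f(\bm{x}, \bm{w}) p(\bm{w}) \,\mathrm{d}\bm{w}$, the difference $F_1(\bm{x}) - F_1(\bm{x}^\prime)$ is simply the integral of $f(\bm{x}, \bm{w}) - f(\bm{x}^\prime, \bm{w})$ against $p(\bm{w})$. So the first step is to write this out and push the absolute value inside the integral using $|\int g\, \mathrm{d}\mu| \leq \int |g|\, \mathrm{d}\mu$, which is valid because $p(\bm{w})$ is a nonnegative density.

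Next, I would apply the Lipschitz assumption on $f$ pointwise in $\bm{w}$, bounding $|f(\bm{x}, \bm{w}) - f(\bm{x}^\prime, \bm{w})|$ by $L\|\bm{x} - \bm{x}^\prime\|_1$ uniformly in $\bm{w}$. Because this bound is independent of $\bm{w}$, it factors out of the integral, leaving $L\|\bm{x} - \bm{x}^\prime\|_1 \int_{\Omega} p(\bm{w})\,\mathrm{d}\bm{w} = L\|\bm{x} - \bm{x}^\prime\|_1$ since $p$ is a probability density on $\Omega$. That completes the inequality.

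There is no real obstacle here: the argument is a one-line application of Jensen's / the triangle inequality for integrals combined with the uniform Lipschitz bound on $f$. The only minor point worth flagging is that the lemma implicitly relies on measurability and integrability of $f(\cdot, \bm{w})$, which follow from continuity of $f$ on the compact set $\Omega$ and the assumption $f \in \mathcal{H}_k$ made earlier in the paper, so no additional hypotheses beyond Lipschitz continuity are needed.
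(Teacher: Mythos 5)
Your proposal is correct and follows exactly the same route as the paper's proof: write $F_1(\bm{x}) - F_1(\bm{x}^\prime)$ as the integral of $f(\bm{x},\bm{w}) - f(\bm{x}^\prime,\bm{w})$ against $p(\bm{w})$, move the absolute value inside the integral, and apply the uniform Lipschitz bound together with $\int_\Omega p(\bm{w})\,\mathrm{d}\bm{w} = 1$. No differences worth noting.
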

\begin{proof}
    From the definition of $F_1$ and Lipschitz continuity of   $f$, the following inequality holds: 
    \begin{align*}
        |F_1(\bm{x}) - F_1(\bm{x}^\prime)|
        &= \left|\int_{\Omega} \left\{ f(\bm{x}, \bm{w})
        - f(\bm{x}^\prime, \bm{w}) \right\} p(\bm{w})\text{d}\bm{w}\right| \\
        &\leq \int_{\Omega} |f(\bm{x}, \bm{w}) - f(\bm{x}^\prime, \bm{w})| p(\bm{w}) \text{d}\bm{w} \\
        &\leq L\|\bm{x} - \bm{x}^\prime\|_1.
    \end{align*}
\end{proof}
\begin{lemma}\label{lem:F2_lip}
    Let $f$ be an $L$-Lipschitz continuous function,        
    $\tilde{B} = {\rm max}_{(\bm{x}, \bm{w}) \in (\mathcal{X} \times \Omega)}
    \left| f(\bm{x}, \bm{w}) - \mathbb{E}_{\bm{w}}[f(\bm{x}, \bm{w})] \right|$, and define 
    $F_2$ as in (\ref{eq:obj}). Then, the following inequality holds for any 
     $\bm{x}, \bm{x}^\prime \in \mathcal{X}$:
    \begin{align*}
        |F_2(\bm{x}) - F_2(\bm{x}^\prime)| \leq
        \sqrt{4\tilde{B}L\|\bm{x} - \bm{x}^\prime\|_1}.
    \end{align*}
\end{lemma}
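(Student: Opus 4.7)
The plan is to work backwards from the square root in $F_2$ and then show that the variance itself is Lipschitz on $\mathcal{X}$ in the $\ell_1$ sense, with Lipschitz constant $4\tilde{B}L$. Since $F_2(\bm{x}) = -\sqrt{\mathbb{V}_{\bm{w}}[f(\bm{x},\bm{w})]}$, the elementary inequality $|\sqrt{a}-\sqrt{b}| \leq \sqrt{|a-b|}$ valid for $a,b \geq 0$ immediately gives
\[
|F_2(\bm{x}) - F_2(\bm{x}^\prime)| \leq \sqrt{\bigl|\mathbb{V}_{\bm{w}}[f(\bm{x},\bm{w})] - \mathbb{V}_{\bm{w}}[f(\bm{x}^\prime,\bm{w})]\bigr|},
\]
so the whole task reduces to bounding the variance difference by $4\tilde{B}L\|\bm{x}-\bm{x}^\prime\|_1$.

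To bound the variance difference, I would introduce the centered function $g_{\bm{x}}(\bm{w}) \coloneqq f(\bm{x},\bm{w}) - \mathbb{E}_{\bm{w}}[f(\bm{x},\bm{w})]$, so that $\mathbb{V}_{\bm{w}}[f(\bm{x},\bm{w})] = \mathbb{E}_{\bm{w}}[g_{\bm{x}}(\bm{w})^2]$. Two elementary facts about $g_{\bm{x}}$ drive the argument: first, $|g_{\bm{x}}(\bm{w})| \leq \tilde{B}$ uniformly in $(\bm{x},\bm{w})$ by definition of $\tilde{B}$; second, combining the Lipschitz property of $f$ with Lemma \ref{lem:F1_lip} yields the pointwise Lipschitz bound
\[
|g_{\bm{x}}(\bm{w}) - g_{\bm{x}^\prime}(\bm{w})| \leq |f(\bm{x},\bm{w}) - f(\bm{x}^\prime,\bm{w})| + |F_1(\bm{x}) - F_1(\bm{x}^\prime)| \leq 2L\|\bm{x}-\bm{x}^\prime\|_1.
\]

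Factoring the difference of squares $g_{\bm{x}}(\bm{w})^2 - g_{\bm{x}^\prime}(\bm{w})^2 = (g_{\bm{x}}(\bm{w}) - g_{\bm{x}^\prime}(\bm{w}))(g_{\bm{x}}(\bm{w}) + g_{\bm{x}^\prime}(\bm{w}))$ and using the two bounds above together with $|g_{\bm{x}}(\bm{w}) + g_{\bm{x}^\prime}(\bm{w})| \leq 2\tilde{B}$ gives the pointwise estimate $|g_{\bm{x}}(\bm{w})^2 - g_{\bm{x}^\prime}(\bm{w})^2| \leq 4\tilde{B}L\|\bm{x}-\bm{x}^\prime\|_1$. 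Taking expectation with respect to $\bm{w}$ and pulling the absolute value outside via Jensen's inequality then yields the desired bound on the variance difference, and substituting into the first display completes the proof.

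The only step that is not mechanical is the initial reduction via $|\sqrt{a}-\sqrt{b}| \leq \sqrt{|a-b|}$, which is what forces the square root to appear on the right-hand side of the claimed bound; everything after that is a routine difference-of-squares estimate exploiting the boundedness $\tilde{B}$ and the Lipschitz constant $L$. There is no essential obstacle beyond choosing the right order (centering first, then squaring), so I would expect the proof to be short.
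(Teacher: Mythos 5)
Your proposal is correct and follows essentially the same route as the paper's proof: reduce via $|\sqrt{a}-\sqrt{b}|\leq\sqrt{|a-b|}$ (the paper handles the two orderings of $F_2(\bm{x})$ and $F_2(\bm{x}^\prime)$ separately but this is the same inequality), then bound the pointwise difference of squared centered functions by factoring and using the bounds $2L\|\bm{x}-\bm{x}^\prime\|_1$ and $2\tilde{B}$ on the two factors, and finally integrate. No gaps.
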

\begin{proof}
    From Lipschitz continuity of $f$, for any $\bm{x}, \bm{x}^\prime \in \mathcal{X}$,
    $\bm{w} \in \Omega$, it holds that 
    \begin{align*}
        &\left|\left\{f(\bm{x}, \bm{w}) - \mathbb{E}_{\bm{w}}[f(\bm{x}, \bm{w})]\right\}^2 -
        \left\{f(\bm{x}^\prime, \bm{w}) - \mathbb{E}_{\bm{w}}[f(\bm{x}^\prime, \bm{w})]\right\}^2\right| \\
        =&\left|\left\{f(\bm{x}, \bm{w}) - \mathbb{E}_{\bm{w}}[f(\bm{x}, \bm{w})]\right\} -
        \left\{f(\bm{x}^\prime, \bm{w}) - \mathbb{E}_{\bm{w}}[f(\bm{x}^\prime, \bm{w})]\right\}\right| \times
        \left|\left\{f(\bm{x}, \bm{w}) - \mathbb{E}_{\bm{w}}[f(\bm{x}, \bm{w})]\right\} +
        \left\{f(\bm{x}^\prime, \bm{w}) - \mathbb{E}_{\bm{w}}[f(\bm{x}^\prime, \bm{w})]\right\}\right| \\
        \leq& \left(|f(\bm{x}, \bm{w}) - f(\bm{x}^\prime, \bm{w})|+
        |\mathbb{E}_{\bm{w}}[f(\bm{x}, \bm{w})] - \mathbb{E}_{\bm{w}}[f(\bm{x}^\prime, \bm{w})]|\right) \times
        \left(|f(\bm{x}, \bm{w}) - \mathbb{E}_{\bm{w}}[f(\bm{x}, \bm{w})]|+
        |f(\bm{x}^\prime, \bm{w}) - \mathbb{E}_{\bm{w}}[f(\bm{x}^\prime, \bm{w})]| \right)\\
        \leq& 2L\|\bm{x} - \bm{x}^\prime\|_1\times 2\tilde{B} \\
         =& 4\tilde{B}L \|\bm{x} - \bm{x}^\prime\|_1.
    \end{align*}
    Here, if $F_2(\bm{x}) \geq F_2(\bm{x}^\prime)$, then 
    \begin{align*}
        &\left|F_2(\bm{x}) - F_2(\bm{x}^\prime)\right| \\
        =&F_2(\bm{x}) - F_2(\bm{x}^\prime) \\
        =& \sqrt{\int_{\Omega} \left\{f(\bm{x}^\prime, \bm{w})
        - \mathbb{E}_{\bm{w}}[f(\bm{x}^\prime, \bm{w})] \right\}^2 p(\bm{w}) \text{d}\bm{w}}
        - \sqrt{\int_{\Omega} \left\{f(\bm{x}, \bm{w})
        - \mathbb{E}_{\bm{w}}[f(\bm{x}, \bm{w})] \right\}^2 p(\bm{w}) \text{d}\bm{w}} \\
        \leq &\sqrt{\int_{\Omega} \left\{f(\bm{x}^\prime, \bm{w})
        - \mathbb{E}_{\bm{w}}[f(\bm{x}^\prime, \bm{w})] \right\}^2 p(\bm{w}) \text{d}\bm{w}
        - \int_{\Omega} \left\{f(\bm{x}, \bm{w})
        - \mathbb{E}_{\bm{w}}[f(\bm{x}, \bm{w})] \right\}^2 p(\bm{w}) \text{d}\bm{w}} \\
        \leq &\sqrt{\int_{\Omega} \left|\left\{f(\bm{x}^\prime, \bm{w})
        - \mathbb{E}_{\bm{w}}[f(\bm{x}^\prime, \bm{w})] \right\}^2
        - \left\{f(\bm{x}, \bm{w})
        - \mathbb{E}_{\bm{w}}[f(\bm{x}, \bm{w})] \right\}^2\right| p(\bm{w}) \text{d}\bm{w}} \\
        \leq &\sqrt{4\tilde{B}L\|\bm{x} - \bm{x}^\prime\|_1}.
    \end{align*}
    On the other hand, if $F_2(\bm{x}) < F_2(\bm{x}^\prime)$, it holds that 
    $|F_2(\bm{x}) - F_2(\bm{x}^\prime)| \leq \sqrt{4\tilde{B}L\|\bm{x} - \bm{x}^\prime\|_1}$.
     Therefore, for any $\bm{x}, \bm{x}^\prime \in \mathcal{X}$, the desired inequality holds.
\end{proof}
Moreover, the following lemma holds:
\begin{lemma}\label{lem:separate_Pareto_front}
Let $Z$ be the Pareto front for $\mathcal{X}$, and let ${\bm\epsilon} =(\epsilon_1,\epsilon_2)^\top $ be a positive vector. 
Define 
\begin{align*}
Z^{+} &=  \bigcup_ { (y_1,y_2) \in Z }  (-\infty, y_1 ] \times (-\infty,y_2], \ 
Z^{-} ({\bm\epsilon}) =  \bigcup_ { (y_1,y_2) \in Z }  (-\infty, y_1-\epsilon_1 ) \times (-\infty,y_2-\epsilon_2), \\
Z^\ast ({\bm\epsilon})  &= \{ (y_1 - \epsilon^\prime_1, y_2 -\epsilon^\prime_2 ) \mid (y_1,y_2 ) \in Z, \ 0 \leq  \epsilon^\prime_1 \leq \epsilon_1, 
 0 \leq  \epsilon^\prime_2 \leq \epsilon_2 \}.
\end{align*}
Then, it holds that 
$$
Z^+ = Z^-   ({\bm\epsilon})  \cup Z^\ast  ({\bm\epsilon})  , \ Z^-  ({\bm\epsilon})  \cap Z^\ast  ({\bm\epsilon})  = \emptyset.
$$
\end{lemma}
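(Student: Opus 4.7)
The plan is to prove the two claims by leveraging the identification $Z^+ = S$, where $S := \{\bm y \in \mathbb{R}^2 : \exists \bm x \in \mathcal{X},\ \bm y \preceq \bm F(\bm x)\}$ is the closed downward-closed ``achievable region'' whose topological boundary is exactly $Z$. Under the regularity assumptions of the paper ($f \in \mathcal{H}_k$ with $\|f\|_{\mathcal{H}_k}\le B$, and $\mathcal{X}$ finite so that $\bm F(\mathcal{X})$ is bounded), $S$ is closed and bounded above in the direction $(+\epsilon_1,+\epsilon_2)$. I would first dispatch Part 2 (disjointness), then the easy inclusion of Part 1, and finally the main inclusion via an intermediate-value argument along a ray.

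For Part 2, I argue by contradiction. Assume $(z_1,z_2)\in Z^-(\bm\epsilon)\cap Z^*(\bm\epsilon)$. From $Z^*(\bm\epsilon)$, write $z_i = y_i' - \epsilon_i'$ with $(y_1',y_2')\in Z$ and $0\le\epsilon_i'\le\epsilon_i$; from $Z^-(\bm\epsilon)$, pick $(y_1,y_2)\in Z$ with $z_i < y_i-\epsilon_i$. Combining gives $y_i' = z_i+\epsilon_i' < y_i - \epsilon_i + \epsilon_i' \le y_i$ componentwise, so $(y_1',y_2')$ is strictly dominated by $(y_1,y_2)$ in both coordinates. Setting $\delta := \min\{y_1-y_1',\,y_2-y_2'\}>0$, downward-closedness of $S$ implies $(y_1'+\delta,\,y_2'+\delta)\preceq (y_1,y_2)\in S$, whence $(y_1',y_2')\in\mathrm{int}(S)$, contradicting $(y_1',y_2')\in \partial S = Z$.

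The easy inclusion $Z^-(\bm\epsilon)\cup Z^*(\bm\epsilon)\subseteq Z^+$ is immediate: a $Z^*$-point of the form $(y_1-\epsilon_1',y_2-\epsilon_2')$ with $\epsilon_i'\ge 0$ lies in $(-\infty,y_1]\times(-\infty,y_2]\subseteq Z^+$, and a $Z^-$-point is strictly dominated by a point of $Z\subseteq Z^+$. For the reverse inclusion $Z^+\subseteq Z^-(\bm\epsilon)\cup Z^*(\bm\epsilon)$, fix $(z_1,z_2)\in Z^+=S$ and consider the ray $\gamma(t) := (z_1+t\epsilon_1,\,z_2+t\epsilon_2)$ for $t\ge 0$. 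Boundedness of $F_1,F_2$ implies $t^*:=\sup\{t\ge 0 : \gamma(t)\in S\}$ is finite; closedness of $S$ yields $\gamma(t^*)\in S$; and by the definition of $t^*$ every neighborhood of $\gamma(t^*)$ meets $S^c$, so $\gamma(t^*)\notin\mathrm{int}(S)$, forcing $\gamma(t^*)\in\partial S=Z$. If $t^*\le 1$, set $(y_1,y_2):=\gamma(t^*)$ and $\epsilon_i':=t^*\epsilon_i\in[0,\epsilon_i]$ to conclude $(z_1,z_2)\in Z^*(\bm\epsilon)$. If $t^*>1$, the same $(y_1,y_2):=\gamma(t^*)\in Z$ satisfies $y_i-\epsilon_i = z_i+(t^*-1)\epsilon_i > z_i$ for each $i$ (strict because $\epsilon_i>0$), so $(z_1,z_2)\in Z^-(\bm\epsilon)$.

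The main obstacle is correctly exploiting the topology of $Z=\partial S$: one must recognize that $Z^+$ coincides with the closed downward-closure $S$ (so any $Z^+$-point is reached by pushing a ray into the boundary), and that $\partial S$ admits no two strictly componentwise-comparable members (what actually drives Part 2). Both facts rely only on downward-closedness and closedness of $S$ together with boundedness of $\bm F$ inherited from the RKHS assumption; the strictness of $\epsilon_i>0$ is essential both for the $t^*>1$ step and for deriving a non-trivial $\delta$ in Part 2.
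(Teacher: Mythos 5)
Your proof is correct, and for the main inclusion it takes a genuinely different route from the paper's. Both arguments ultimately rest on viewing $Z$ as $\partial S$ for the closed, downward-closed achievable region $S=\{\bm y : \exists \bm x\in\mathcal X,\ \bm y\preceq \bm F(\bm x)\}$: your disjointness argument (two strictly componentwise-comparable points cannot both lie on $\partial S$, since the smaller one would be interior) is essentially the same fact the paper uses, just phrased via interior/boundary rather than the paper's explicit case split on how an arbitrary $(y_1'',y_2'')\in Z$ compares to the witness $(y_1',y_2')$. Where you genuinely diverge is the inclusion $Z^+\subseteq Z^-(\bm\epsilon)\cup Z^*(\bm\epsilon)$: the paper argues by cases on whether some $(y_1+\epsilon_1',y_2)$ lies in $Z$, and at two points simply asserts existence of suitable boundary points ("from continuity of $Z$\dots" and "there exists $b\geq y_2$ such that $(y_1,b)\in Z$") without derivation; your ray construction $\gamma(t)=(z_1+t\epsilon_1,z_2+t\epsilon_2)$ with $t^*=\sup\{t:\gamma(t)\in S\}$ produces the required boundary point rigorously from closedness, downward-closedness, and boundedness of $S$, and the dichotomy $t^*\leq 1$ versus $t^*>1$ cleanly separates the $Z^*$ and $Z^-$ cases (the latter using $\epsilon_i>0$, exactly as you note). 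One small inaccuracy: you justify boundedness/closedness of $S$ by finiteness of $\mathcal X$, but this lemma is invoked in the continuous-domain appendix where $\mathcal X=[0,1]^{d_1}$; the properties you actually need still hold there via compactness of $\mathcal X$ and continuity and boundedness of $F_1,F_2$, so the argument is unaffected.
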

\begin{proof}
First, we show $Z^-  ({\bm\epsilon})  \cap Z^\ast  ({\bm\epsilon})  = \emptyset$. 
Let ${\bm y} $ be an element of $Z^-  ({\bm\epsilon}) $. Then, there exists $(y^\prime_1,y^\prime_2 ) \in Z$ such that 
$$
y_1 < y^\prime_1 - \epsilon_1, \ y_2 < y^\prime_2 - \epsilon_2.
$$
Here, for any  $(y^{\prime\prime}_1, y^{\prime\prime}_2 ) \in Z$, $ y^{\prime\prime}_1$ satisfies $y^\prime_1 \leq y^{\prime\prime}_1$
 or $y^\prime_1 > y^{\prime\prime}_1$. 
If $y^\prime_1 \leq y^{\prime\prime}_1$, from $y_1 < y^\prime_1 - \epsilon_1$ we get 
$$
{\bm y} \notin \{  (y^{\prime\prime}_1- \epsilon^\prime_1, y^{\prime\prime}_2- \epsilon^\prime_2) \mid 0 \leq  \epsilon^\prime_1 \leq \epsilon_1, 
 0 \leq  \epsilon^\prime_2 \leq \epsilon_2 \}.
$$
On the other hand, if $y^\prime_1 > y^{\prime\prime}_1$, then $y^{\prime\prime}_2$ satisfies $y^\prime_2 \leq  y^{\prime\prime}_2$
 because the inequality $y^\prime_2 > y^{\prime\prime}_2$ implies that
 $ (y^{\prime\prime}_1,y^{\prime\prime}_2) \in (-\infty, y^\prime_1) \times (-\infty,y^\prime_2 ) $. However, it contradicts that $(y^{\prime\prime}_1,y^{\prime\prime}_2) \in Z$.
 From $y^\prime_2 \leq  y^{\prime\prime}_2$ and $y_2 < y^\prime_2 - \epsilon_2 $, we have 
$$
{\bm y} \notin \{  (y^{\prime\prime}_1- \epsilon^\prime_1, y^{\prime\prime}_2- \epsilon^\prime_2) \mid 0 \leq  \epsilon^\prime_1 \leq \epsilon_1, 
 0 \leq  \epsilon^\prime_2 \leq \epsilon_2 \}.
$$
Therefore, it holds that ${\bm y} \notin Z^\ast ({\bm\epsilon})$. This implies that $Z^-  ({\bm\epsilon})  \cap Z^\ast  ({\bm\epsilon})  = \emptyset$. 

Next, we show $Z^+ = Z^-   ({\bm\epsilon})  \cup Z^\ast  ({\bm\epsilon})  $. 
It is clear that  $Z^+ \supset Z^-   ({\bm\epsilon})  \cup Z^\ast  ({\bm\epsilon})  $. 
Thus, we only show that $Z^+ \subset  Z^-   ({\bm\epsilon})  \cup Z^\ast  ({\bm\epsilon})  $. 
Let ${\bm y} $ be an element of $Z^+$. If ${\bm y} \in Z^- ({\bm \epsilon})$, it holds that ${\bm y} \in 
Z^-   ({\bm\epsilon})  \cup Z^\ast  ({\bm\epsilon})$. 
On the other hand, if ${\bm y} \notin  Z^- ({\bm \epsilon})$, 
at least one of the following inequalities holds for any $(y^\prime_1,y^\prime_2 ) \in Z$:
$$
y_1 \geq y^\prime_1 - \epsilon_1,\ y_2 \geq y^\prime _2 -\epsilon_2.
$$
If there exists   $\epsilon^\prime_1 \in [0,\epsilon_1] $ such that 
$(y_1+\epsilon^\prime_1,y_2)  \in Z$, then ${\bm y}   \in Z^\ast  ({\bm\epsilon}) $.
 Next, we consider the case that $(y_1+\epsilon^\prime_1,y_2)  \notin Z$ for any  $\epsilon^\prime_1 \in [0,\epsilon_1] $. 
Let $Z^\prime =\{ {\bm a}=(a_1,a_2) \in Z \mid  y_1 \leq a_1 \leq y+ \epsilon_1 \}$.  
Here, assume that $y_2 < a_2 -\epsilon_2 $ for any ${\bm a} \in Z^\prime$. 
Then, from continuity of $Z$, there exists $\hat{\bm y}=(\hat{y}_1,\hat{y}_2)  \in Z$ such that $y_1  < \hat{y}_1-\epsilon_1$ and 
$y_2  < \hat{y}_2-\epsilon_2$. However, it contradicts ${\bm y} \notin  Z^- ({\bm \epsilon})$. 
Hence, there exists an element ${\bm a} =(a_1,a_2 ) \in Z^\prime $ such that $y_2 \geq a_2 - \epsilon_2$. 
Moreover, there exists $b \geq y_2$ such that $(y_1,b) \in Z$. 
This implies that there exist $\tilde{\epsilon}_1$ and $\tilde{\epsilon}_2$ such that 
$0 \leq \tilde{\epsilon}_1 \leq \epsilon_1$,  
$0 \leq \tilde{\epsilon}_2 \leq \epsilon_2$ and $(y_1 + \tilde{\epsilon}_1,y_2 +\tilde{\epsilon}_2 ) \in Z$. 
Therefore,  it holds that ${\bm y} \in   Z^\ast ({\bm\epsilon})$.
\end{proof}

Next, we explain the method of constructing  $\tilde{\mathcal{X}}$.
Let $\tilde{\mathcal{X}}$ be a set of grid points when each dimension of $\mathcal{X}=[0, 1]^{d_1}$ is divided into $\tau$ evenly spaced segments.
Also let $[\bm{x}] \in \tilde{\mathcal{X}}$  be a point closest to $\bm{x} \in \mathcal{X}$  with respect to the $L1$-distance. 
Then, it holds that 
\begin{equation}
    \label{eq:dis_dis}
    \|\bm{x} - [\bm{x}]\|_1 \leq \frac{d_1}{\tau},~\forall \bm{x} \in \mathcal{X}.
\end{equation}

In the proposed algorithm for the continuous set setting, 
Algorithm \ref{alg:pareto} is performed by using $\tilde{\mathcal{X}}$ instead of $\mathcal{X}$.
Then, we define the  estimated Pareto set $\hat{\Pi}_t$, latent Pareto set $M_t$ and 
uncertain set $U_t$ in Algorithm 
 \ref{alg:pareto} as
\begin{align*}
    \hat{\Pi}_t &= \left\{ \bm{x} \in \tilde{\mathcal{X}} \mid
    \forall \bm{x}^\prime \in \tilde{E}^{(\text{pes})}_{t,{\bm x} } ,~
    \bm{F}_{t}^{(\text{pes})}(\bm{x}) \npreceq \bm{F}_{t}^{(\text{pes})}(\bm{x}^\prime)\right\},\  \tilde{E}^{(\text{pes})}_{t,{\bm x} } =
\{  {\bm x}^\prime \in \tilde{\mathcal{X} } \mid  {\bm F} ^{(\text{pes})}_t ({\bm x}) \neq  {\bm F} ^{(\text{pes})}_t ({\bm x}^\prime) \} , \\
    M_t &= \left\{\bm{x} \in \tilde{\mathcal{X}} \setminus \hat{\Pi}_t \mid
    \forall \bm{x}^\prime \in \hat{\Pi}_t,~
    \bm{F}_{t}^{(\text{opt})}(\bm{x}) \npreceq_{\bm{\epsilon}/2} \bm{F}_{t}^{(\text{pes})}(\bm{x}^\prime)\right\}, \\
    U_t &= \left\{\bm{x} \in \hat{\Pi}_t \mid \exists \bm{x}^\prime \in
    \hat{\Pi}_t \setminus \{\bm{x}\},
    \bm{F}_{t}^{(\text{pes})}(\bm{x}) + \bm{\epsilon}/2 \prec
    \bm{F}_{t}^{(\text{opt})}(\bm{x}^\prime) \right\}.
\end{align*}
Note that $\bm{\epsilon}/2$, not $\bm{\epsilon}$  is used to calculate  $\tilde{M}_t$ and $\tilde{U}_t$.

In the algorithm using 
 $\tilde{\mathcal{X}}$, the following theorem holds:
\begin{theorem}\label{thm:cont_par_conv}
    Let $\tilde{B} = {\rm max}_{(\bm{x}, \bm{w}) \in (\mathcal{X} \times \Omega)}
    \left| f(\bm{x}, \bm{w}) - \mathbb{E}_{\bm{w}}[f(\bm{x}, \bm{w})]\right|$, and let 
     $\delta \in (0, 1)$, $\bm{\epsilon} = (\epsilon_1, \epsilon_2)$ where  $\epsilon_1 > 0$ and 
    $\epsilon_2 > 0$. 
Define 
    $\beta_t = \left(\sqrt{\ln \det (\bm{I}_t + \sigma^{-2}\bm{K}_t) + 2 \ln \frac{3}{\delta}} + B\right)^2$ and 
    $\tau = \max\left\{ \frac{2Ld_1}{\epsilon_1}, \frac{16\tilde{B}Ld_1}{\epsilon_2^2} \right\}$. 
    Then,  the following (1) and (2) hold with probability at least $1-\delta$:
    \begin{description}
        \item [(1)]  The algorithm terminates after at most $T$ iterations, where $T$ is the smallest positive integer satisfying
        \begin{align*}
            T^{-1} \beta_T^{1/2}\left( \sqrt{2TC_1\gamma_T} + C_2 \right) + T^{-1} \sqrt{2T\tilde{B} \beta_T^{1/2} \left( \sqrt{8TC_1\gamma_T} + 2C_2 \right)
             + 5 T\beta_T \left(C_1\gamma_T + 2C_2 \right)} \leq \min\{\epsilon_1, \epsilon_2\}/2.
        \end{align*}
       Here, $C_1$ and $C2$ are given by  $C_1=\frac{16}{\ln(1 + \sigma^{-2})}$ and $ C_2=16\ln \frac{18}{\delta}$.
        \item [(2)] When the algorithm is terminated, the estimated Pareto set $\hat{\Pi}$ is the $\epsilon$-accurate Pareto Set.
    \end{description}
\end{theorem}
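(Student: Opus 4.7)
The overall plan is to reduce the continuous case to a discrete problem on $\tilde{\mathcal{X}}$ with the tightened tolerance $\bm\epsilon/2$, invoke a close analogue of Theorem \ref{thm:par_conv} to obtain termination and $\bm\epsilon/2$-accuracy over $\tilde{\mathcal{X}}$, and then lift the conclusion back to $\mathcal{X}$ using the Lipschitz-type bounds of Lemmas \ref{lem:F1_lip} and \ref{lem:F2_lip} together with the discretization inequality \eqref{eq:dis_dis}. I will condition throughout on the same high-probability event as in the proof of Theorem \ref{thm:par_conv}, namely the joint occurrence of \eqref{eq:f_cred_cond}, \eqref{eq:sigma_int}, \eqref{eq:sigma_int_sq}, which holds with probability at least $1-\delta$.

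First I would establish the quantitative discretization bound. Combining \eqref{eq:dis_dis} with Lemma \ref{lem:F1_lip} gives $|F_1(\bm x)-F_1([\bm x])|\le Ld_1/\tau$, and with Lemma \ref{lem:F2_lip} gives $|F_2(\bm x)-F_2([\bm x])|\le\sqrt{4\tilde B Ld_1/\tau}$. The prescribed choice $\tau=\max\{2Ld_1/\epsilon_1,\,16\tilde B Ld_1/\epsilon_2^2\}$ is precisely what makes both errors bounded by $\epsilon_1/2$ and $\epsilon_2/2$ respectively, so that for every $\bm x\in\mathcal{X}$ one has $\bm F([\bm x])\preceq_{\bm\epsilon/2}\bm F(\bm x)$ and $\bm F(\bm x)\preceq_{\bm\epsilon/2}\bm F([\bm x])$.

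For part (1), termination, the proof is nearly identical to the one for Theorem \ref{thm:par_conv}: the algorithm is MO-MVA-BO operating on $\tilde{\mathcal{X}}$ with tolerance vector $\bm\epsilon/2$ in the definitions of $M_t$ and $U_t$. The existence arguments (Lemmas \ref{lem:existence_pihat}, \ref{lem:existence_Mt}) are statements about finite sets and apply verbatim with $\tilde{\mathcal{X}}$. The bound on $\sum_{t=1}^T\lambda_t(\bm x_t)$ used there depends only on the GP posterior and on $\tilde B$, not on $|\mathcal{X}|$, so the same chain leading to \eqref{eq:lambda_mean} gives, with threshold $\min\{\epsilon_1,\epsilon_2\}/2$, the $T$ prescribed in the statement; at the first index $t^\prime\le T$ with $\lambda_{t^\prime}(\bm x_{t^\prime})\le\min\{\epsilon_1,\epsilon_2\}/2$ we obtain $M_{t^\prime}=\emptyset$ and $U_{t^\prime}=\emptyset$ by the same argument as in Theorem \ref{thm:par_conv}.

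Part (2) is the step I expect to be the main obstacle, because $\hat\Pi_t\subset\tilde{\mathcal{X}}$ must be certified as an $\bm\epsilon$-accurate Pareto set with respect to the continuous Pareto front $Z$ of $\mathcal{X}$, not merely of $\tilde{\mathcal{X}}$. Running the discrete proof of Theorem \ref{thm:par_conv} with $\bm\epsilon/2$ shows that $\hat\Pi_t$ is an $\bm\epsilon/2$-accurate Pareto set over $\tilde{\mathcal{X}}$. To discharge Condition 2 over $\mathcal{X}$, take any $\bm x^\ast\in\Pi$; applying the discrete guarantee to $[\bm x^\ast]\in\tilde{\mathcal{X}}$ produces $\bm x^\prime\in\hat\Pi_t$ with $[\bm x^\ast]\preceq_{\bm\epsilon/2}\bm x^\prime$, and composing with the Lipschitz estimate $\bm F(\bm x^\ast)\preceq_{\bm\epsilon/2}\bm F([\bm x^\ast])$ yields $\bm x^\ast\preceq_{\bm\epsilon}\bm x^\prime$. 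Condition 1, $\bm F(\hat\Pi_t)\subset Z_{\bm\epsilon}$, is the delicate part: for $\bm x\in\hat\Pi_t$ the discrete $\bm\epsilon/2$-accuracy of $\hat\Pi_t$ over $\tilde{\mathcal{X}}$, together with $\bm F(\bm x^\prime)\preceq_{\bm\epsilon/2}\bm F([\bm x^\prime])$ for every $\bm x^\prime\in\mathcal{X}$, lets me conclude that for every $\bm x^\prime\in\mathcal{X}$ at least one of $F_1(\bm x)+\epsilon_1\ge F_1(\bm x^\prime)$ or $F_2(\bm x)+\epsilon_2\ge F_2(\bm x^\prime)$ holds; Lemma \ref{lem:Ze} then gives $\bm F(\bm x)\in Z_{\bm\epsilon}$. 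Here Lemma \ref{lem:separate_Pareto_front} is the tool that controls how the two-sided $\bm\epsilon/2$ slack on each side adds up to exactly $\bm\epsilon$ when intersecting $Z^+$ with the $\bm\epsilon$-shifted band. Assembling the two conditions certifies that $\hat\Pi_t$ is an $\bm\epsilon$-accurate Pareto set, and since the conditioning event holds with probability at least $1-\delta$, both claims follow.
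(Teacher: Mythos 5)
Your overall architecture (discretize with tolerance $\bm\epsilon/2$, run the finite-set argument of Theorem~\ref{thm:par_conv} on $\tilde{\mathcal{X}}$, then lift back to $\mathcal{X}$ via \eqref{eq:dis_dis} and Lemmas~\ref{lem:F1_lip}--\ref{lem:F2_lip}) matches the paper, and your treatment of Part (1) and of Condition 2 of Part (2) is essentially the paper's proof. The problem is in Condition 1. You transfer the ``one-coordinate domination'' property from $\tilde{\mathcal{X}}$ to all of $\mathcal{X}$ (that step is fine) and then invoke Lemma~\ref{lem:Ze} over the \emph{continuous} domain $\mathcal{X}$ to conclude $\bm F(\bm x)\in Z_{\bm\epsilon}$. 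But the paper's proof of Lemma~\ref{lem:Ze} is a finite-set argument: in case (2) it enumerates the Pareto image as a finite list $\bm F(\bm x^{(1)}),\ldots,\bm F(\bm x^{(l)})$ with strictly ordered coordinates and iterates through consecutive pairs $(\bm x^{(j)},\bm x^{(j+1)})$. That enumeration, and the induction over $j$, has no meaning when $\Pi$ is a continuum, so the lemma as proved is not available for $\mathcal{X}=[0,1]^{d_1}$. This is precisely the difficulty the continuous-set theorem is designed to avoid: the paper applies Lemma~\ref{lem:Ze} only \emph{inside} Theorem~\ref{thm:par_conv} on the finite set $\tilde{\mathcal{X}}$, obtaining for each $\bm x\in\hat\Pi_t$ a point $\bm y^\dagger\in\tilde Z$ with $y^\dagger_i\le F_i(\bm x)+\epsilon_i/2$, and then relates the discrete front $\tilde Z$ to the continuous front $Z$ by showing $\tilde Z\subset Z^+$ and $\tilde Z\cap Z^-(\bm\epsilon/2)=\emptyset$ (a topological interior/boundary argument using \eqref{eq:F1_eps1}--\eqref{eq:F2_eps2}), whence Lemma~\ref{lem:separate_Pareto_front} gives $\tilde Z\subset Z^\ast(\bm\epsilon/2)$ and each $\bm y^\dagger$ lies within $\bm\epsilon/2$ componentwise above some $\bm a\in Z$, so $a_i\le F_i(\bm x)+\epsilon_i$.

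Your one-sentence gesture at Lemma~\ref{lem:separate_Pareto_front} (``controls how the two-sided $\bm\epsilon/2$ slack \ldots adds up'') does not carry out this transfer; as written, your Condition 1 rests on an unproved continuous extension of Lemma~\ref{lem:Ze}. To repair the argument you should either (a) follow the paper and prove $\tilde Z\subset Z^\ast(\bm\epsilon/2)$ via Lemma~\ref{lem:separate_Pareto_front} and the fact that the open set $Z^-(\bm\epsilon/2)$ is contained in the interior of the dominated region of $\tilde{\mathcal{X}}$ and hence disjoint from its boundary $\tilde Z$, or (b) state and prove a version of Lemma~\ref{lem:Ze} valid for compact $\mathcal{X}$ and continuous $\bm F$, which requires a genuinely different (e.g.\ connectedness or supremum-based) argument than the finite enumeration in the paper.
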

\begin{proof}
We omit the proof of {\sf (1)} because its proof is  the same as in the proof of Theorem 
   \ref{thm:par_conv}. 
 We only prove {\sf (2)}. 
 From  (\ref{eq:dis_dis}) and Lemma   \ref{lem:F1_lip}--\ref{lem:F2_lip}, the following holds for any 
     $\bm{x} \in \mathcal{X}$:
    \begin{align}
        |F_1(\bm{x}) - F_1([\bm{x}])|
        &\leq L \|\bm{x} - [\bm{x}]\|_1 \nonumber \\
        \label{eq:F1_eps1}
        &= \frac{\epsilon_1}{2},\\
        |F_2(\bm{x}) - F_2([\bm{x}])|
        &\leq \sqrt{4\tilde{B}L\|\bm{x} - [\bm{x}]\|_1} \nonumber \\
        \label{eq:F2_eps2}
        &= \frac{\epsilon_2}{2}.
    \end{align}
   Assume that (\ref{eq:f_cred_cond}) holds.
    Let $\tilde{Z}$ be a Pareto front for 
    $\tilde{\mathcal{X}}$. 
Then, for any ${\bm y} \in \tilde{Z}$, it holds that 
\begin{equation}
{\bm y} \in \bigcup_{ (y^\prime_1,y^\prime_2 ) \in Z }   (-\infty,y^\prime_1] \times (-\infty, y^\prime_2 ] , \label{eq:yinZplus}
\end{equation}
where $Z$ is the Pareto front for $\mathcal{X}$. 
Similarly, let 
$$
Z^- (\bm\epsilon/2) = \bigcup_{ (y^\prime_1,y^\prime_2 ) \in Z }   (-\infty,y^\prime_1-\epsilon_1/2) \times (-\infty, y^\prime_2-\epsilon_2/2).  
$$
Then, for any ${\bm y}^{\prime\prime}  \in Z^-  (\bm\epsilon/2)$, there exists ${\bm x} \in \mathcal{X}$ such that 
$$
y^{\prime\prime}_1 <   F_1 ({\bm x}) -\epsilon_1/2, \ 
y^{\prime\prime}_2 <   F_2 ({\bm x}) -\epsilon_2/2.
$$
Here, from (\ref{eq:F1_eps1}) and  (\ref{eq:F2_eps2}) we have 
$$
 F_1 ({\bm x}) \leq  F_1 ([{\bm x}]) + \epsilon_1/2,  F_2 ({\bm x}) \leq  F_2 ([{\bm x}]) + \epsilon_2/2.
$$
Thus, it holds that $y^{\prime\prime}_1 <   F_1 ([{\bm x}])$ and   $y^{\prime\prime}_2 <   F_2 ([{\bm x}])$. 
This implies that 
$$
Z^-  (\bm\epsilon/2) \subset \{ {\bm y } \in \mathbb{R} \mid \exists {\bm x} \in \tilde{\mathcal{X}}, \ {\bm y} \preceq {\bm F} ({\bm x} ) \}    \equiv A.
$$ 
Here, since $Z^-  (\bm\epsilon/2)$ is the open set, noting that  $Z^-  (\bm\epsilon/2) \subset A$ we get 
$Z^-  (\bm\epsilon/2) \subset \text{int} (A) $, where $\text{int} (A) $ is the interior of $A$. 
In addition, from the definition of the interior and boundary (frontier), we obtain $\text{int} (A)  \cap \partial A = \emptyset$.  
Therefore, from $\partial A = \tilde{Z}$ and $Z^-  (\bm\epsilon/2) \subset \text{int} (A) $, it holds that 
$Z^-  (\bm\epsilon/2) \cap \tilde{Z} = \emptyset$. Hence, for any ${\bm y} \in \tilde{Z}$, ${\bm y} \notin Z^-  (\bm\epsilon/2)$.
 Thus, by using this and \eqref{eq:yinZplus}, from Lemma \ref{lem:separate_Pareto_front}, it holds that 
$$
\tilde{Z} \subset Z^\ast ({\bm\epsilon}/2 ) . 
$$
Hence, for any ${\bm y} \in \tilde{Z}$, there exists ${\bm a} \in Z$ such that 
\begin{align}
y_1 = a_ 1 - \epsilon ^\prime_1, y_2 = a_2 - \epsilon^\prime_2 , \quad 0 \leq \epsilon^\prime_1 \leq \epsilon_1/2, \ 
 0 \leq \epsilon^\prime_2 \leq \epsilon_2/2. \label{eq:yexists}
\end{align}
Furthermore, from Theorem \ref{thm:par_conv}, for any ${\bm x} \in \hat{\Pi}_t $, 
there exists ${\bm y}^\dagger \in \tilde{Z} $ such that 
$$
y^\dagger_1 \leq F_1({\bm x} ) + \epsilon_1/2, \ y^\dagger_2 \leq F_2 ({\bm x} )  + \epsilon_2 /2.
$$
By combining this and \eqref{eq:yexists}, we get 
\begin{align*}
a_1 = y^\dagger_1 + \epsilon^\prime_1 \leq F_1 ({\bm x}) + \epsilon_1/2  + \epsilon^\prime_1 \leq F_1 ({\bm x}) + \epsilon_1, \\
a_2 = y^\dagger_2 + \epsilon^\prime_2 \leq F_2 ({\bm x}) + \epsilon_2/2  + \epsilon^\prime_2 \leq F_2 ({\bm x}) + \epsilon_2.
\end{align*}
Therefore, we have $\bm{F}(\hat{\Pi}_t) \subset Z_{\bm{\epsilon}}$.

    Furthermore, let $\bm{x} \in \Pi$. 
For $[\bm{x}] \in \tilde{\mathcal{X}}$, since $\hat{\Pi}_t$ is the $(\bm{\epsilon}/2)$-accurate Pareto set for $\tilde{\mathcal{X}}$, 
there exists 
$\bm{x}^\prime \in \hat{\Pi}_t$ such that $\bm{F}([\bm{x}]) \preceq_{\bm{\epsilon}/2}
    \bm{F}(\bm{x}^\prime)$. 
Moreover, form (\ref{eq:F1_eps1}) and  (\ref{eq:F2_eps2}), it holds that 
      $\bm{F}(\bm{x}) \leq \bm{F}([\bm{x}]) + \bm{\epsilon}/2$.
 This implies that 
    $\bm{F}(\bm{x}) \preceq \bm{F}([\bm{x}]) + \bm{\epsilon}/2 \preceq \bm{F}(\bm{x}^\prime) + \bm{\epsilon}$. 
Therefore, for any $\bm{x} \in \Pi$, there exits 
$\bm{x}^\prime \in \hat{\Pi}_t$ such that 
 $\bm{x} \preceq_{\bm{\epsilon}} \bm{x}^\prime$.
    Thus, 
    $\hat{\Pi}_t$ is the $\bm{\epsilon}$-accurate Pareto set for $\mathcal{X}$.
\end{proof}

\end{document}